\documentclass[11pt]{article}

\usepackage[margin=1in]{geometry}
\usepackage{graphicx}
\usepackage{amsthm,amsmath,amsfonts,amssymb}
\usepackage[authoryear,round]{natbib}
\usepackage{url}
\usepackage{subcaption}
\usepackage{longtable}
\usepackage{threeparttable}
\usepackage{multirow}
\usepackage{booktabs}
\usepackage{array}
\usepackage{lscape}
\usepackage{float}
\usepackage[colorlinks,citecolor=blue,urlcolor=blue,linkcolor=blue]{hyperref}

\allowdisplaybreaks
\def\mathbold{\boldsymbol}

\def\bv{\mathbold{v}}
\def\bw{\mathbold{w}}
\def\bA{\mathbold{A}}
\def\ba{\mathbold{a}}
\def\bX{\mathbold{X}}
\def\bx{\mathbold{x}}
\def\by{\mathbold{y}}

\def\R{{\mathbb{R}}}
\def\P{{\mathbb{P}}}
\def\E{{\mathbb{E}}}

\def\bdelta{\mathbold{\delta}}
\def\btheta{\mathbold{\theta}}

\def\hbdelta{\widehat{\bdelta}{}}
\def\bSigma{\mathbold{\Sigma}}

\def\bvar{\mathbold{\varepsilon}}
\def\bbeta{\mathbold{\beta}}
\def\hbbeta{\widehat{\bbeta}{}}
\def\bbbeta{\bar{\bbeta}{}}
\def\balpha{\mathbold{\alpha}}
\def\hbalpha{\widehat{\balpha}{}}
\def\bgamma{\mathbold{\gamma}}
\def\hbgamma{\widehat{\bgamma}{}}

\def\bu{\mathbold{u}}
\def\hbu{\widehat{\bu}{}}
\def\hbv{\widehat{\bv}{}}

\theoremstyle{plain}

\newtheorem{theorem}{Theorem}[section]

\theoremstyle{definition}
\newtheorem{assumption}{Assumption}[section]
\newtheorem*{remark}{Remark}

\title{Harnessing Source Heterogeneity for Cluster-Structured Transfer Learning}
\author{
Xiaohui Yin\thanks{These authors contributed equally to this work.}\\
Department of Statistics, University of Connecticut
\and
Jun Jin\footnotemark[1]\\
Department of Public Health Sciences, Henry Ford Health
\and
Shane J. Sacco\\
Center for Population Health, University of Connecticut Health Center
\and
Robert H. Aseltine\\
Center for Population Health, University of Connecticut Health Center
\and
Kun Chen\thanks{Corresponding author: \texttt{kun.chen@uconn.edu}.}\\
Department of Statistics, University of Connecticut
}
\date{}

\begin{document}
\maketitle

\begin{abstract}
Transfer learning is a natural strategy when a target population has
limited data but multiple related auxiliary sources are available. A
central difficulty is source heterogeneity: auxiliary sources may not
be equally useful, and their usefulness may vary in a structured,
cluster-like fashion. Existing transfer-learning
methods often reduce source selection to a binary
informative/non-informative decision, overlooking subgroups of sources
with differential transferability. Motivated by a suicide-risk study
using data from the Connecticut Hospital Information Management
Exchange (CHIME), comprising $636{,}758$ patients across $27$
hospitals, we propose Trans-GLMC, a cluster-structured
transfer-learning procedure for generalized linear models. The CHIME
setting illustrates the core challenge: hospital-specific risk models
are unstable because suicide attempts are rare at any single facility,
whereas indiscriminate pooling across hospitals can obscure
facility-level differences in patient mix and risk profiles. Trans-GLMC
first constructs a coefficient-based distance among the target and
candidate sources to recover latent source clusters. It then combines
global fusion, within-cluster refinement, and target debiasing to
produce an estimator that adapts to the detected structure. We
establish a non-asymptotic error bound that improves over its
unclustered counterpart whenever a meaningful target cluster exists
and matches the unclustered rate up to constants otherwise. In
simulations and in the CHIME study, Trans-GLMC improves
facility-specific prediction, identifies interpretable communities of
hospitals with mutual transferability, and recovers clinically
coherent suicide-risk factors.
\end{abstract}

\paragraph{Keywords:} data fusion; electronic health records; informative source; robust regression; targeted learning.

\section{Introduction}
\label{sec:intro}

Transfer learning is increasingly used in modern applied statistics
when a target population has limited labeled data but related auxiliary
datasets are available. The central statistical question is not simply
whether to borrow information, but how to borrow without erasing the
features that make the target distinct. This question is especially
pressing in high-dimensional regression problems, where a
target-specific estimator may be unstable, while indiscriminate pooling
across sources can introduce bias and even lead to negative transfer
\citep{rosenstein2005transfer,zang2024accuracy}. Source heterogeneity
is therefore not a nuisance to be ignored; it is part of the structure
that a useful transfer-learning method should exploit.

Existing transfer-learning methods have made substantial progress on
this problem. \citet{li2022transfer} and \citet{tian2023transfer}
studied transfer learning in linear models and generalized linear
models (GLMs), respectively, and related developments cover
high-dimensional classification
\citep{bastani2021predicting,reeve2021adaptive,cai2021transfer},
unsupervised learning \citep{tian2023unsupervised}, graphical models
\citep{li2023transfer}, and federated estimation with privacy
guarantees \citep{li2024federated}. A common strategy is to first
identify an informative set of sources and then borrow from that set,
as in \citet{li2023transfer}, \citet{li2023targeting},
\citet{tian2023transfer}, and \citet{jin2024transfer}. This protects
against clearly unhelpful sources, but it still treats the retained
sources as essentially exchangeable. In many applications, the source
landscape is richer: auxiliary datasets may form latent subgroups, and
the sources in the target's subgroup may be much more transferable
than sources that are still broadly informative but less similar.

The motivating application for this work is suicide-risk prediction
from electronic health records (EHR) in the Connecticut Hospital
Information Management Exchange (CHIME). Suicide remains a major
public health crisis in the United States. Between 2001 and 2021, the
age-adjusted suicide rate rose by over 30\%, from 10.7 to 14.1 deaths
per 100{,}000 people, making suicide one of the ten leading causes of
death overall and the second leading cause among adolescents and young
adults
\citep{kessler1999prevalence,nock2008suicide,doshi2020identifying,stone2018vital}.
Prevention initiatives such as the Zero Suicide Initiative
\citep{labouliere2018zero} and the Joint Commission National Patient
Safety Goal 15.01.01 depend on accurate and proactive identification
of at-risk individuals during routine clinical encounters. EHR data
are especially valuable for this task because nearly half of people
who die by suicide have contact with the healthcare system in the 30
days before death
\citep{luoma2002contact,ahmedani2014health,ilgen2012patterns}. A
large literature has therefore developed EHR-based prediction models
for children and adolescents \citep{su2020machine,sacco2023target},
adults
\citep{barak2020validation,wilimitis2022integration,walsh2021prospective},
patients with prior behavioral-health conditions
\citep{simon2018predicting}, and veterans \citep{kessler2020using}.

Despite these advances, clinical deployment often faces a tension that
is fundamentally a transfer-learning problem. The CHIME data analyzed
in this paper include $636{,}758$ patients across $27$ hospitals, but
suicide attempts are rare at any single facility. A facility-specific
model may therefore have too few positive cases to estimate a stable
risk model. At the same time, hospitals differ in patient mix, coding
patterns, payer composition, geographic catchment, and care
environment, so a pooled model can blur important facility-level
differences. This is precisely the setting in which one wants to
borrow information from other hospitals, but only in a way that
respects source heterogeneity.

The CHIME study further suggests that this heterogeneity is
cluster-structured rather than merely binary. Hospitals do not divide
cleanly into sources that are helpful and sources that are harmful for
a given target. Instead, facilities with similar risk-model profiles
appear to form communities of mutual transferability. These
communities may reflect shared patient demographics, regional
patterns, clinical practice, or combinations of these factors, and
they need not coincide with administrative health-system boundaries.
Such structure creates a statistical opportunity: all informative
sources can stabilize an initial model, while sources in the target's
own cluster can provide a sharper refinement before final adjustment
to the target facility.

We propose \emph{Trans-GLMC}, a cluster-structured transfer-learning
procedure for high-dimensional GLMs with heterogeneous sources. The
method first constructs a coefficient-based pairwise distance among
the target and candidate sources, then applies hierarchical
density-based clustering to recover latent subgroups of mutually
similar sources. Given the detected structure, Trans-GLMC proceeds in
three stages. A global fusion step pools information from all detected
informative sources to obtain a stable initial estimator. A
within-cluster refinement step then re-estimates using only sources in
the target's cluster, exploiting tighter local homogeneity. Finally, a
target debiasing step corrects the remaining shift between the cluster
aggregate and the target-specific coefficient. When the detected
partition is trivial, the procedure reduces to the unclustered
Trans-GLM estimator of \citet{tian2023transfer}.

Our theoretical results show how the cluster structure improves
estimation. We establish a non-asymptotic error bound for Trans-GLMC
that separates the contribution of the global informative set from the
contribution of the target's cluster. When a nontrivial cluster around
the target exists, so that within-cluster heterogeneity is smaller
than the global source-target discrepancy, the bound is strictly
tighter than the corresponding unclustered transfer bound. When no
meaningful cluster structure exists, the rate matches the unclustered
bound up to constants. We also prove that the proposed
coefficient-distance clustering consistently recovers the latent
partition under a minimum-gap condition on between-cluster distances.

Empirically, we evaluate Trans-GLMC in simulations and in the CHIME
suicide-risk study. The simulations compare Trans-GLMC with
target-only learning, unclustered Trans-GLM, and several weighted
transfer variants, including a Q-aggregation approach adapted from
\citet{li2022transfer} and inverse-distance and spherical weighting
schemes. The proposed method produces the largest gains when the
target has a meaningful cluster of related sources, in agreement with
the theory. In the CHIME application, Trans-GLMC improves
facility-specific prediction over both target-only models and
unclustered transfer for most hospitals, with the largest gains at
small or isolated facilities. The fitted models also yield a
data-derived transferability network and clinically coherent risk
factors, including prior self-harm, mood disorders, substance-use
disorders, and anxiety.

The rest of the paper is organized as follows. Section~\ref{sec:chime}
describes the CHIME cohort, outcome and predictor definitions, and
exploratory evidence of cross-facility heterogeneity and cluster
structure. Section~\ref{sec:method} formalizes the transfer-learning
problem, presents Trans-GLMC, and develops the coefficient-distance
and cluster-detection procedure. Section~\ref{sec:theory} gives the
main theoretical results. Section~\ref{sec:simulation} reports
simulation experiments, and Section~\ref{sec:realdata} presents
facility-level predictive performance, the transferability network,
and estimated risk factors from the CHIME application.
Section~\ref{sec:discussion} concludes. Proofs, transfer-learning
variants, and additional numerical results are provided in the
supplement.

\section{Motivating Application: the CHIME Suicide Study}
\label{sec:chime}

\subsection{Cohort and data}\label{sec:chime:cohort}

The CHIME data, obtained from the Connecticut Department of Public
Health, integrate electronic health records from inpatient and
emergency encounters across $27$ hospitals in the state. Each record
contains patient demographic fields, payer type, and visit-level
International Classification of Diseases (ICD) diagnosis codes in
priority order. 
The records used in this study span January 1, 2012 to December 31,
2017.

We adopt a retrospective follow-up design to assess and predict
one-year suicide risk. We identified $636{,}758$ patients aged 18--64
years during the recruitment window between January 1, 2016 and
December 31, 2016.  Each patient was followed during the period from
January 1, 2017 to December 31, 2017 for the occurrence of a suicide
attempt. ICD-9 codes were converted to ICD-10 with the \texttt{touch}
R package \citep{wang2018touch}, and contacts whose primary ICD-10
code ended in ``D'' or ``S'' (indicating subsequent encounter or
sequela) were excluded.

Suicide attempts in the follow-up window were defined, as in prior EHR
studies \citep{su2020machine,xu2022improving,sacco2023target}, by
ICD-10 codes indicating intentional self-harm or by combinations of
suicide-related mental-disorder and injury codes co-occurring in the
same visit. Patients with no follow-up visit or no visit associated
with a suicide attempt were assigned to the control group.

Table~\ref{app:tab:demo} summarizes patient characteristics aggregated
across the $27$ hospitals. Individual facilities contributed a median
of $19{,}395$ patients (IQR $13{,}651$--$30{,}289$; range $5{,}129$ to
$78{,}288$) and $65$ suicide attempts (IQR $40$--$120$; range $12$ to
$362$). Facility-specific demographics and case counts are reported in
Table~S.3 of the Supplement.

\begin{table}[H]
  \centering
  \caption{Demographics of the patient cohorts across the 27
    hospitals. Values are summarized across facilities.}
  \label{app:tab:demo}
  \begin{tabular}{l r r r r}
    \toprule
    Demographic variable & Median & IQR & Minimum & Maximum\\
    \midrule
    Sample size, N & 19395 & 13651--30289 & 5129 & 78288\\
    Suicide attempts, N & 65 & 40--120 & 12 & 362\\
    Suicide attempts, \% & 0.4 & 0.2--0.5 & 0.1 & 1.1\\
    Age group, years, \% & & & & \\
    \hspace{1em} 18-24 & 16.2 & 15.2--16.7 & 12.3 & 26.5\\
    \hspace{1em} 25-39 & 35.3 & 34.0--36.7 & 30.8 & 41.0\\
    \hspace{1em} 40-54 & 30.1 & 28.8--30.8 & 26.5 & 32.1\\
    \hspace{1em} 55-64 & 18.8 & 17.6--20.1 & 15.8 & 22.6\\
    Male, \% & 44.7 & 43.9--45.4 & 38.6 & 47.3\\
    Non-Hispanic White, \% & 66.9 & 48.9--76.5 & 32.5 & 93.7\\
    Medicaid insurance, \% & 36.6 & 30.1--39.6 & 14.0 & 48.3\\
    \bottomrule
  \end{tabular}
\end{table}

\subsection{Exploratory analysis: heterogeneity and apparent cluster
  structure}\label{sec:chime:eda}

Two features of the CHIME cohort motivate the methodology developed in
the next section.

\emph{Rare-outcome heterogeneity.} Suicide-attempt incidence varies by
nearly an order of magnitude across facilities ($0.07\%$ to $1.07\%$),
and several facilities contribute fewer than $40$ positive cases over
the full follow-up year (Table~S.3 of the Supplement). A target-only
$\ell_1$-penalized logistic regression trained on a single such
facility is statistically unstable; Section~\ref{sec:realdata} shows
that facility 6 (a suburban facility with the lowest observed
suicide-attempt rate) 
and facility 27 (the smallest facility in the
cohort), 
both with fewer than $20$ observed events in the test period, yield
target-only models that are no better than random guessing. This
motivates transferring information from external facilities.

\emph{Source heterogeneity and structure.} At the same time, CHIME
facilities differ substantially in patient demographics: the
Non-Hispanic White fraction ranges from $32.5\%$ to $93.7\%$, Medicaid
enrollment from $14.0\%$ to $48.3\%$, and the distribution of
diagnosis-code prevalences varies accordingly. A pooled regression
over all $27$ facilities ignores this heterogeneity, and our own
analyses in Section~\ref{sec:realdata} show that unclustered transfer
learning only partially mitigates it. Conversely, using the
coefficient-based distance metric developed in
Section~\ref{sec:distance} and hierarchical density-based clustering,
we find that the CHIME facilities separate into several interpretable
communities with similar coefficient signals (see
Figure~\ref{app:fig:transferability} in
Section~\ref{sec:realdata} for community assignments and their
geographic footprint). These community memberships are stable across
random splits of the data, are not fully explained by geographic
proximity, and indicate that a transfer-learning procedure should
treat hospitals in the same community as near-exchangeable while
down-weighting those in other communities. This is precisely the
structure we aim to exploit with our cluster-structured transfer
learning approach.

\section{Cluster-Structured Transfer Learning for GLMs}
\label{sec:method}
\label{sec:tlcluster}

\subsection{Notation and problem setup}\label{sec:setup}

We first collect notation used throughout. For scalars $c_1,c_2$, write
$c_1\vee c_2=\max(c_1,c_2)$ and $c_1\wedge c_2=\min(c_1,c_2)$. For
$\bv=(v_1,\ldots,v_n)^\top$, $\|\bv\|_1=\sum_i|v_i|$,
$\|\bv\|=\bigl(\sum_i v_i^2\bigr)^{1/2}$, and
$\|\bv\|_\infty=\max_i|v_i|$. For a matrix $\bA\in\R^{n\times n}$,
$\|\bA\|_1=\sup_j\sum_i|A_{ij}|$. For sequences $\{a_n\}$, $\{b_n\}$,
$a_n\asymp b_n$ means $a_n=O(b_n)$ and $b_n=O(a_n)$;
$a_n\gg b_n$ means $|b_n/a_n|\to 0$; and $\lesssim$, $\gtrsim$
denote inequality up to a multiplicative constant.

Our main interest is to learn a regression model for the target
facility, with outcome $y^{(0)}\in\R$ and predictors
$\bx^{(0)}\in\R^p$. We assume the canonical exponential-family GLM
\begin{equation}
\label{eq:glmmodel-tg}
y^{(0)}\mid\bx^{(0)} \sim
\rho\bigl(y^{(0)};\phi\bigr)\,
\exp\Bigl\{\bigl[y^{(0)}\bx^{(0)\top}\bbeta^{(0)}
-\psi\bigl(\bx^{(0)\top}\bbeta^{(0)}\bigr)\bigr]/\phi\Bigr\},
\end{equation}
where $\psi$ is the cumulant function (twice continuously
differentiable with $\psi''>0$), $\rho(\cdot;\phi)$ is the base
measure, and $\phi>0$ is a known dispersion parameter. The same
parameterization is assumed for each of the $K$ source facilities,
\begin{equation}
\label{eq:glmmodel-sc}
y^{(k)}\mid\bx^{(k)} \sim
\rho\bigl(y^{(k)};\phi\bigr)\,
\exp\Bigl\{\bigl[y^{(k)}\bx^{(k)\top}\bbeta^{(k)}
-\psi\bigl(\bx^{(k)\top}\bbeta^{(k)}\bigr)\bigr]/\phi\Bigr\},
\end{equation}
with facility-specific coefficients $\bbeta^{(k)}\in\R^p$,
$k=1,\ldots,K$. The target coefficient $\bbeta^{(0)}$ is sparse with
$s_0$ nonzero entries and $\|\bbeta^{(0)}\|_\infty$ bounded. The
dispersion $\phi$ does not vary across facilities and is treated as
fixed throughout; in our CHIME application $\phi\equiv 1$ for the logistic GLM, and more
generally $\phi$ enters only as a global scale that does not affect
the regression-coefficient estimating equations. We therefore suppress it in the
exposition of the loss functions below.

Sources are useful for the target to the extent that their
coefficient vectors $\bbeta^{(k)}$ lie close to $\bbeta^{(0)}$. We
formalize this by assuming a latent partition of facilities into
clusters of mutually similar coefficients. For clarity of exposition
we present the two-cluster case and write
$\{0,1,\ldots,K\}={\cal C}_1\cup{\cal C}_2$,
${\cal C}_1\cap{\cal C}_2=\emptyset$, with $0\in{\cal C}_1$ without
loss of generality; the multi-cluster case is identical at the cost
of additional notation. Define the pairwise population distance
\[
d_{ij} = \bigl\|\bbeta^{(i)}-\bbeta^{(j)}\bigr\|_1,
\qquad i,j\in\{0,1,\ldots,K\}.
\]
The clusters are assumed to be \emph{well-separated} in the sense that
\begin{align*}
\min_{i,j\in{\cal C}_g,\,k\in{\cal C}_{g'}}\,
\max\bigl(d_{ik},d_{jk}\bigr)
\;>\;
\max_{i,j\in{\cal C}_g}\,d_{ij}
\quad\text{for all }g\ne g'\in\{1,2\},
\end{align*}
i.e., between-cluster distances exceed within-cluster distances. We
will repeatedly use three derived quantities, all measured in $\ell_1$
norm of coefficient differences:
\begin{itemize}
\item the \emph{global informativeness level} $d>0$, used to define
the (unknown) informative set
${\cal A}(d)=\{1\le k\le K:d_{k0}\le d\}$ relative to the target;
\item the \emph{within-cluster informativeness}
$d_{\cal C}=\max_{k\in{\cal C}_1\setminus\{0\}}d_{k0}$, the largest
distance from the target to any source in its own cluster; by
construction $d_{\cal C}\le d$ whenever ${\cal C}_1\setminus\{0\}
\subseteq{\cal A}(d)$;
\item facility sample sizes $n_0$ (target), $n_{\cal A}=
\sum_{k\in{\cal A}}n_k$ (sources in the informative set), and
$n_{\cal C}=\sum_{k\in({\cal C}_1\cap{\cal A})\setminus\{0\}}n_k$
(sources in the target's cluster).
\end{itemize}
The pair $(d_{\cal C},n_{\cal C})$ encodes how concentrated and how
populous the target's cluster is, and is the source of the
improvement over unclustered transfer learning that we establish in
Section~\ref{sec:theory}.

Assume we observe $n_k$ i.i.d.\ samples from facility $k$ under
\eqref{eq:glmmodel-tg}--\eqref{eq:glmmodel-sc}, namely
$(\bX^{(k)},\by^{(k)})$ with
$\bX^{(k)}=(\bx_1^{(k)},\ldots,\bx_{n_k}^{(k)})^\top\in\R^{n_k\times p}$
whose rows are i.i.d., zero-mean, sub-Gaussian with covariance
$\bSigma^{(k)}$, and $\by^{(k)}\in\R^{n_k}$, for $k=0,1,\ldots,K$. Our 
task is to estimate $\bbeta^{(0)}$ from
$(\bX^{(0)},\by^{(0)})$ and $\{(\bX^{(k)},\by^{(k)})\}_{k=1}^{K}$,
either with the partition $\{{\cal C}_1,{\cal C}_2\}$ given by
domain knowledge or with the partition inferred from data via the
procedure of Section~\ref{sec:distance}.

\subsection{Conventional Estimators}\label{sec:glmtrans}

We briefly describe two baselines used as references throughout.

Using target data alone, the
$\ell_1$-penalized GLM estimator is
\begin{equation}
\label{eq:glm-classic}
\hbbeta^{(0)} = \mathop{\arg\min}_{\bbeta\in\R^p}
\frac{1}{n_0}\sum_{i=1}^{n_0}
\Bigl[-y_i^{(0)}\bx_i^{(0)\top}\bbeta
+\psi\bigl(\bx_i^{(0)\top}\bbeta\bigr)\Bigr]
+\lambda_0\|\bbeta\|_1,
\end{equation}
with regularization parameter $\lambda_0$. We refer to this as the
\emph{target-only} method.

Given an informative set
${\cal A}\subseteq\{1,\ldots,K\}$, the oracle Trans-GLM
\citep{tian2023transfer} fuses target and informative-source data and
then debiases against the target:
\begin{align*}
\hbbeta^{\cal A} &= \mathop{\arg\min}_{\bbeta\in\R^p}
\frac{1}{n_{\cal A}+n_0}
\sum_{k\in\{0\}\cup{\cal A}}
\Bigl[-(\by^{(k)})^\top\bX^{(k)}\bbeta
+\textstyle\sum_{i=1}^{n_k}\psi\bigl(\bbeta^\top\bx_i^{(k)}\bigr)\Bigr]
+\lambda_1\|\bbeta\|_1,\\
\hbdelta^{\cal A} &= \mathop{\arg\min}_{\bdelta\in\R^p}
\frac{1}{n_0}
\Bigl[-(\by^{(0)})^\top\bX^{(0)}(\hbbeta^{\cal A}+\bdelta)
+\textstyle\sum_{i=1}^{n_0}\psi\bigl((\hbbeta^{\cal A}+\bdelta)^\top\bx_i^{(0)}\bigr)\Bigr]
+\lambda_2\|\bdelta\|_1,
\end{align*}
yielding $\hbbeta^{(0)}_{TGLM}=\hbbeta^{\cal A}+\hbdelta^{\cal A}$.
In practice ${\cal A}$ is unknown and is estimated by a cross-validated
data-driven detection procedure adapted from \citet{tian2023transfer}; the
full algorithm is presented in Section~S.1 of the Supplement. We
denote its output by $\widehat{\cal A}$ and use it both as a baseline
competitor (``Trans-GLM'') and as the screening set fed into our
cluster-structured procedure below.

Crucially, Trans-GLM treats all members of $\widehat{\cal A}$ as
exchangeable, with no mechanism for distinguishing a tightly
concentrated sub-cluster around the target from a loose collection of
informative-but-distinct sources. The proposal in
Section~\ref{sec:glmtransc} generalizes this construction and
recovers it as a special case when the detected cluster structure is
trivial.

\subsection{Trans-GLMC}\label{sec:glmtransc}

Trans-GLM enrolls every source in $\widehat{\cal A}$ on equal footing,
which discards the differential informativeness implied by the cluster
structure. Trans-GLMC remedies this by adding an intermediate
within-cluster fusion step. The schematic in
Figure~\ref{fig:schematic1} illustrates the resulting three-stage
estimator: a global fusion across all detected informative sources, a
within-cluster refinement using only sources in the target's cluster,
and a debiasing step on the target.

\begin{figure}[H]
\centering
\includegraphics[width=0.7\linewidth]{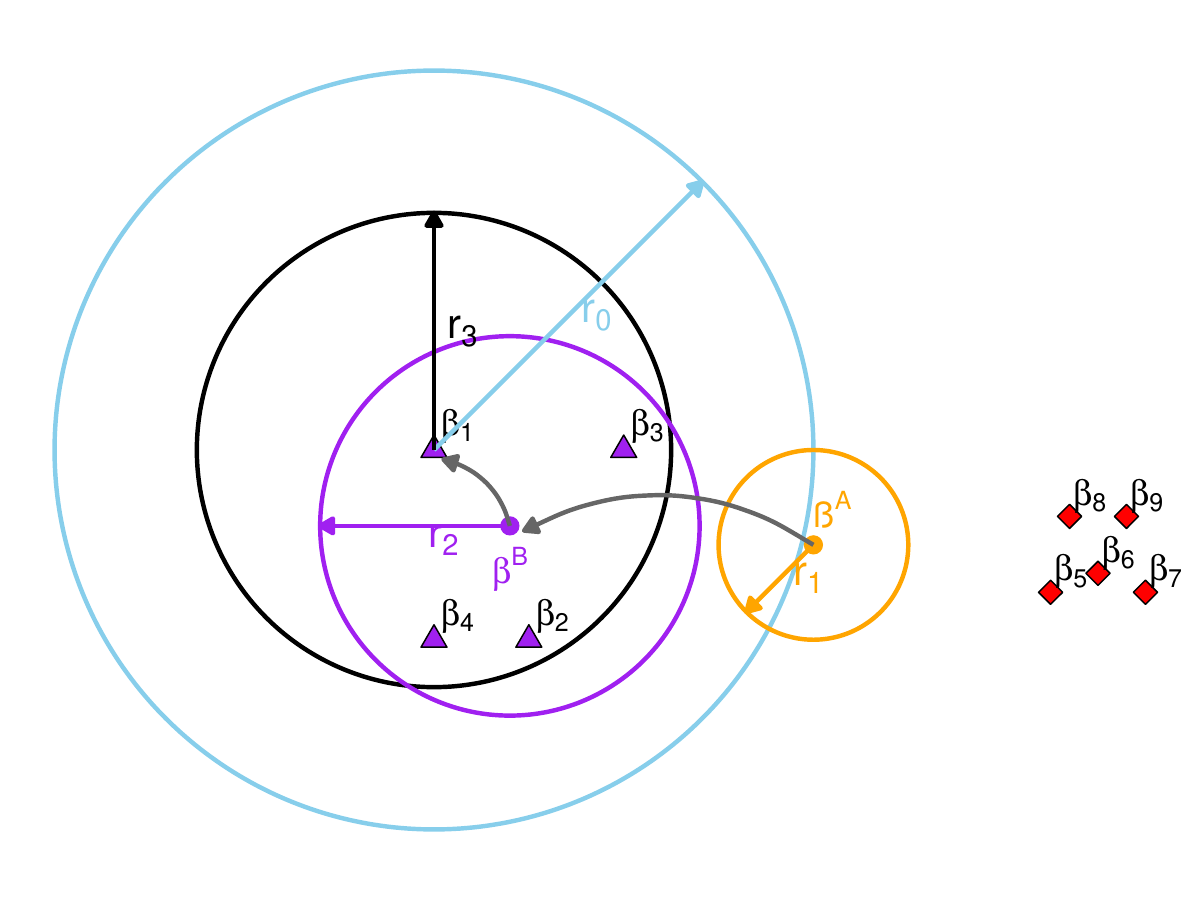}
\caption{Schematic of Trans-GLMC. Step~1 produces an estimate
$\hbbeta^{\cal A}$ of the population aggregate
$\bbeta^{\cal A}$ (denoted $\bbeta^{A}$ in the figure) from all
detected informative sources; Step~2 estimates the cluster offset
$\balpha$ that takes $\bbeta^{\cal A}$ to the cluster aggregate
$\bbeta^{B}=\bbeta^{\cal A}+\balpha$; Step~3 estimates the
target-specific offset $\bgamma$ that takes the cluster aggregate to
$\bbeta^{(0)}=\bbeta^{\cal A}+\balpha+\bgamma$.}
\label{fig:schematic1}
\end{figure}

Adopting the bias-regularization viewpoint of 
\citet{li2022transfer,tian2023transfer}, we represent
$\bbeta^{(0)}=\bbeta^{\cal A}+\balpha+\bgamma$, where
$\bbeta^{\cal A}$ is the population aggregate over all informative
sources, $\balpha$ is the cluster offset, and $\bgamma$ is the
target-specific offset. The three terms enjoy progressively smaller
sample sizes but progressively smaller magnitudes, and the
resulting bias--variance trade-off is favorable whenever a non-trivial
cluster exists around the target. Concretely, given the screened
informative set $\widehat{\cal A}$ from Section~\ref{sec:glmtrans} and a
cluster partition (known or estimated as in
Section~\ref{sec:distance}), let
$n_{\cal A}=\sum_{k\in\widehat{\cal A}}n_k$ and
$n_{\cal C}=\sum_{k\in{\cal C}_1\cap\widehat{\cal A}\setminus\{0\}}n_k$.
Trans-GLMC solves:
\begin{itemize}
\item[\bf Step 1.] (Fuse all informative sources)
\begin{align*}
\hbbeta^{\cal A}
=\mathop{\arg\min}_{\bbeta}
\frac{1}{n_{\cal A}+n_0}
\sum_{k\in\{0\}\cup\widehat{\cal A}}\sum_{i=1}^{n_k}
\Bigl\{-y_i^{(k)}\bx_i^{(k)\top}\bbeta
+\psi\bigl(\bx_i^{(k)\top}\bbeta\bigr)\Bigr\}
+\lambda_1\|\bbeta\|_1.
\end{align*}
\item[\bf Step 2.] (Within-cluster refinement)
\begin{align*}
\hbalpha
= \mathop{\arg\min}_{\balpha}
& \frac{1}{n_{\cal C}+n_0}
\sum_{k\in\{0\}\cup({\cal C}_1\cap\widehat{\cal A})}\sum_{i=1}^{n_k}
\Bigl\{-y_i^{(k)}\bx_i^{(k)\top}(\hbbeta^{\cal A}+\balpha)
+\psi\bigl(\bx_i^{(k)\top}(\hbbeta^{\cal A}+\balpha)\bigr)\Bigr\}\\
& +\lambda_2\|\balpha\|_1.
\end{align*}
\item[\bf Step 3.] (Target debiasing)
\begin{align*}
\hbgamma
=\mathop{\arg\min}_{\bgamma}
\frac{1}{n_0}\sum_{i=1}^{n_0}
\Bigl\{-y_i^{(0)}\bx_i^{(0)\top}(\hbbeta^{\cal A}+\hbalpha+\bgamma)
+\psi\bigl(\bx_i^{(0)\top}(\hbbeta^{\cal A}+\hbalpha+\bgamma)\bigr)\Bigr\}
+\lambda_3\|\bgamma\|_1.
\end{align*}
\item[\bf Output.] $\hbbeta^{(0)}_{TGLMC}
=\hbbeta^{\cal A}+\hbalpha+\hbgamma$.
\end{itemize}

When the detected partition is trivial (one cluster containing the
target), Step~2 reduces to Step~1 and Trans-GLMC collapses exactly to
Trans-GLM. The theoretical properties of $\hbbeta^{(0)}_{TGLMC}$ are
summarized in Section~\ref{sec:theory}.

\subsection{Cluster Structure Detection}\label{sec:distance}

In practice, neither the latent partition $\{{\cal C}_1,{\cal C}_2\}$ nor
the population coefficients $\{\bbeta^{(k)}\}$ are known. We estimate
the partition in two stages: a coefficient-based pairwise distance
matrix among facilities, followed by hierarchical density-based
clustering on that matrix.

Following the federated-learning idea of
\citet{li2024federated}, we use the target-only-style estimator
\eqref{eq:glm-classic} fitted facility-by-facility to plug
$\hbbeta^{(k)}$ into the population distance
$d_{ij}=\|\bbeta^{(i)}-\bbeta^{(j)}\|_1$. To stabilize the plug-in
against fold-level fluctuations, we use a cross-validated average:
\begin{itemize}
\item[\bf Step 1.] For each $k\in\{0,1,\ldots,K\}$, randomly split the
data from facility $k$ into $M$ equal-size folds
$(\bX^{(k)[r]},\by^{(k)[r]})$, $r=1,\ldots,M$.
\item[\bf Step 2.] For each $r$, fit \eqref{eq:glm-classic} on
$\cup_{m\ne r}(\bX^{(k)[m]},\by^{(k)[m]})$ to get per-fold estimator
$\hbbeta^{(k)[r]}$, and let
$\bbbeta^{(k)}=M^{-1}\sum_{r=1}^M\hbbeta^{(k)[r]}$.
\item[\bf Step 3.] Set
${\cal D}_{ij}=\bigl\|\bbbeta^{(i)}-\bbbeta^{(j)}\bigr\|_1$ for
$i,j\in\{0,1,\ldots,K\}$.
\end{itemize}
Theorem~\ref{thm:detect} in Section~\ref{sec:theory} shows that
$({\cal D}_{ij})$ preserves the order of $(d_{ij})$ with high
probability, so any cluster-detection rule based on pairwise distances
will recover the true partition under a minimum-gap separation
condition.

Given $({\cal D}_{ij})$, several pairwise-distance clustering
algorithms apply, including DBSCAN, HDBSCAN, and hierarchical
clustering. We use HDBSCAN because it is robust to outliers and does
not require a pre-specified number of clusters. A final merging step
absorbs cluster fragments that are still mutually informative for the
target.

\begin{itemize}
\item[\bf Step 1.] (Initial screening.) Apply the Trans-GLM data-driven
informative-source detection of Section~\ref{sec:glmtrans}
(detailed in Section~S.1 of the Supplement) to obtain
$\widehat{\cal A}$.
\item[\bf Step 2.] (Distance matrix.) Compute $({\cal D}_{ij})$ on
$\{0\}\cup\widehat{\cal A}$ as above.
\item[\bf Step 3.] (Clustering.) Run HDBSCAN on $({\cal D}_{ij})$ to
obtain a partition $\{\widehat{\cal A}_1,\ldots,\widehat{\cal A}_G\}$
of $\{0\}\cup\widehat{\cal A}$, with the target index $0$ assigned to
$\widehat{\cal A}_1$.
\item[\bf Step 4.] (Merging.) Pool all samples in $\widehat{\cal A}_1$
as the target and treat the pooled samples in each remaining cluster
as a single source; rerun the Trans-GLM detection procedure to merge
informative clusters with $\widehat{\cal A}_1$, yielding the final
partition $\{\widehat{\cal A}_1,\ldots,\widehat{\cal A}_{G'}\}$.
\end{itemize}
The final estimator is obtained by running Trans-GLMC of
Section~\ref{sec:glmtransc} with $\widehat{\cal A}_1$ playing the role
of ${\cal C}_1\cap\widehat{\cal A}$.

\section{Theoretical Results}\label{sec:theory}

This section states the two main theoretical results for the
cluster-structured procedure: an $\ell_2$-estimation error bound for
Trans-GLMC, and a consistency guarantee for the distance-based cluster
detection step. Theorem statements are restated, and proofs and
technical lemmas given, in Sections~S.2--S.3 of the Supplement.
Throughout, we write $s$ for the
sparsity of $\bbeta^{(0)}$, $n_0$ for the target sample size,
$n_{\cal A}=\sum_{k\in\hat{\cal A}}n_k$ for the total sample size
across detected informative sources, and $n_{\cal C}=
\sum_{k\in {\cal C}_1\cap\hat{\cal A}}n_k$ for the total sample size
within the target cluster.

\subsection{Assumptions}

We adopt the following standard assumptions on the GLM family and on
the covariate distribution. Detailed discussion is given in
Section~S.2 of the Supplement.

\begin{assumption}[Strict convexity of the cumulant]\label{assum:A1}
$\psi$ is twice continuously differentiable with $\psi''(x)>0$ for all
$x\in\R$.
\end{assumption}

\begin{assumption}[Sub-Gaussian covariates and well-conditioned
design]\label{assum:A2}
For every $k=0,1,\ldots,K$ and every $\ba\in\R^p$, $\ba^\top\bx_i^{(k)}$
is $\kappa_u\|\ba\|_2^2$-sub-Gaussian with zero mean, and
$\inf_k\lambda_{\min}(\bSigma^{(k)})\ge\kappa_l>0$.
\end{assumption}

\begin{assumption}[Bounded second derivative]\label{assum:A3}
At least one of the following holds: (i) $\|\psi''\|_\infty\le
\kappa_\psi<\infty$; or (ii) $\sup_k\|\bx^{(k)}\|_\infty\le U$ a.s.\
and $\sup_{|z|\le V}\psi''(\bx^{(k)\top}\bbeta^{(k)}+z)\le
\kappa_\psi<\infty$.
\end{assumption}

\begin{assumption}[Bounded weighted covariance shift]\label{assum:A4}
Let ${\cal A}\subseteq\{1,\ldots,K\}$ be the (population) informative set with $d=\max_{k\in{\cal A}}d_{k0}$, and let $\widetilde\bSigma$, $\widetilde\bSigma_k^{(k)}$ be the weighted-design matrices defined in equations~(S.2.1)--(S.2.2) of the Supplement. Then
$\sup_{k\in\{0\}\cup{\cal A}}\|\widetilde\bSigma^{-1}\widetilde\bSigma_k^{(k)}\|_1<\infty$.
\end{assumption}

Assumption~\ref{assum:A1} is satisfied by Gaussian, Poisson, and
binomial GLMs. Assumptions~\ref{assum:A2}--\ref{assum:A3} are
standard in high-dimensional regression \citep{negahban2009unified}
and are inherited from \citet{tian2023transfer}. Assumption~\ref
{assum:A4} limits the weighted covariance shift between target and
source designs \citep[cf.][Condition~4]{li2022transfer}.

\subsection{Estimation error bound}

Recall from Section~\ref{sec:setup} that, for a generic informative
set ${\cal A}\subseteq\{1,\ldots,K\}$, $d=\max_{k\in{\cal A}}d_{k0}$
measures its global informativeness and
$d_{\cal C}=\max_{k\in({\cal C}_1\cap{\cal A})\setminus\{0\}}d_{k0}$
measures the within-cluster informativeness in the target's cluster,
with $d_{\cal C}\le d$ by construction. The following theorem, proved
in Section~S.3 of the Supplement, gives a non-asymptotic $\ell_2$
error bound for the Trans-GLMC estimator $\hbbeta_{TGLMC}^{(0)}$ when
both ${\cal A}$ and the cluster ${\cal C}_1$ are known. Combined with
Theorem~\ref{thm:detect} below and the screening guarantee of
\citet{tian2023transfer} for $\hat{\cal A}$, the same bound holds with
high probability when ${\cal A}$ is replaced by the data-driven
$\hat{\cal A}$ and ${\cal C}_1$ by the estimated cluster.

\begin{theorem}[$\ell_2$-estimation error bound of
Trans-GLMC]\label{thm:rate}
Suppose Assumptions~\ref{assum:A1}--\ref{assum:A4} hold for the
informative set ${\cal A}$, with $d\lesssim U^{-1}V$ in
Assumption~\ref{assum:A3}. Assume the sample-size conditions
$d\ll\sqrt{n_0/\log p}$, $n_0\gtrsim \log p$,
$n_{\cal A}\gtrsim s\log p$, and $n_{\cal C}\gtrsim s\log p$, and
take penalty parameters $\lambda_1\asymp
\sqrt{\log p/(n_{\cal A}+n_0)}$,
$\lambda_2\asymp\sqrt{\log p/(n_{\cal C}+n_0)}$, and
$\lambda_3\asymp\sqrt{\log p/n_0}$. Then, with probability at
least $1-n_0^{-1}$,
\begin{align}\label{eq:tglmc-rate}
\bigl\|\hbbeta_{TGLMC}^{(0)}-\bbeta^{(0)}\bigr\|
\lesssim \, &
\Bigl(\tfrac{s\log p}{n_{\cal A}+n_0}\Bigr)^{1/2}
+ \Bigl\{\bigl[\bigl(\tfrac{\log p}{n_{\cal C}+n_0}\bigr)^{1/4}
d^{1/2}\bigr]\wedge d\Bigr\}
\\
\nonumber
\phantom{=} \, &
+ \Bigl\{\bigl[\bigl(\tfrac{\log p}{n_0}\bigr)^{1/4}
d_{\cal C}^{1/2}\bigr]\wedge d_{\cal C}\Bigr\}.
\end{align}
\end{theorem}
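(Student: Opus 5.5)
We follow the three-step structure of the estimator and adapt the two-step argument of \citet{tian2023transfer} (itself modeled on \citet{li2022transfer}). First we define the population targets of each step. Let $\bbeta^{\cal A}$ be the solution of the population moment equation
\[
\textstyle\sum_{k\in\{0\}\cup{\cal A}} n_k\,\E\bigl[\bx^{(k)}\{\psi'(\bx^{(k)\top}\bbeta)-\psi'(\bx^{(k)\top}\bbeta^{(k)})\}\bigr]=0 .
\]
Define $\bbeta^{B}=\bbeta^{\cal A}+\balpha$ analogously, with the sum restricted to $\{0\}\cup({\cal C}_1\cap{\cal A})$. Set $\bgamma=\bbeta^{(0)}-\bbeta^{B}$.

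A mean-value expansion writes each equation as a weighted-design linear system. We then invoke Assumption~\ref{assum:A4}, together with Assumption~\ref{assum:A3} and the condition $d\lesssim U^{-1}V$, which keeps the linearization inside the region where $\psi''$ is bounded. This gives $\|\bbeta^{\cal A}-\bbeta^{(0)}\|_1\lesssim d$. Since the same covariance-shift bound is used over the smaller index set, it also gives $\|\bbeta^{B}-\bbeta^{(0)}\|_1=\|\bgamma\|_1\lesssim d_{\cal C}$. It follows that $\|\balpha\|_1\le \|\bbeta^{\cal A}-\bbeta^{(0)}\|_1+\|\bgamma\|_1\lesssim d$.

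\textbf{Step 1.} This is a standard $\ell_1$-penalized M-estimator analysis on the pooled sample of size $n_{\cal A}+n_0$. The score at $\bbeta^{\cal A}$ has sup-norm $\lesssim\sqrt{\log p/(n_{\cal A}+n_0)}$ by sub-Gaussian concentration and a union bound. Restricted strong convexity holds on the pooled design by Assumption~\ref{assum:A2} and $n_{\cal A}\gtrsim s\log p$, via the usual RSC lemma for GLMs \citep{negahban2009unified}. Because $\bbeta^{\cal A}$ is only approximately sparse (it lies within $\ell_1$-distance $d$ of an $s$-sparse vector), the result is
\[
\|\hbbeta^{\cal A}-\bbeta^{\cal A}\|\lesssim\sqrt{s\log p/(n_{\cal A}+n_0)}+\bigl[(\log p/(n_{\cal A}+n_0))^{1/4}d^{1/2}\bigr]\wedge d,
\]
together with a matching $\ell_1$ bound $\|\hbbeta^{\cal A}-\bbeta^{\cal A}\|_1\lesssim s\sqrt{\log p/(n_{\cal A}+n_0)}+d$.

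\textbf{Step 2.} We treat $\hbbeta^{\cal A}$ as a fixed offset and estimate $\balpha$ on the cluster sample of size $n_{\cal C}+n_0$. The target value is $\balpha^\ast=\bbeta^{B}-\hbbeta^{\cal A}$. Its $\ell_1$ norm is at most $\|\balpha\|_1+\|\hbbeta^{\cal A}-\bbeta^{\cal A}\|_1\lesssim d+s\lambda_1$, which is small because $s\lambda_1\lesssim 1$ under $n_{\cal A}\gtrsim s\log p$. The Lasso-type analysis with an $\ell_1$-ball target is a basic inequality plus RSC, splitting into the cases $\|\Delta\|_1\lessgtr\|\balpha^\ast\|_1$. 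It yields $\|\hbalpha-\balpha^\ast\|\lesssim[(\log p/(n_{\cal C}+n_0))^{1/4}(\|\balpha^\ast\|_1)^{1/2}]\wedge\|\balpha^\ast\|_1$ plus a term of order $\sqrt{s\log p/(n_{\cal A}+n_0)}$ inherited from the offset. Here $n_{\cal C}\gtrsim s\log p$ ensures RSC holds uniformly over the offset. This accounts for the middle term of \eqref{eq:tglmc-rate}, and it dominates the weaker $(n_{\cal A}+n_0)$ version of the $d$-term from Step 1, since $n_{\cal C}\le n_{\cal A}$.

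\textbf{Step 3.} We repeat the same argument on the target data with offset $\hbbeta^{\cal A}+\hbalpha$ and target value $\bgamma^\ast=\bbeta^{(0)}-\hbbeta^{\cal A}-\hbalpha$. Its $\ell_1$ norm is $\lesssim d_{\cal C}$ plus the $\ell_1$ errors of the earlier steps. We use $d\ll\sqrt{n_0/\log p}$ and $n_0\gtrsim\log p$ to keep the linearization valid and RSC in force, with probability $1-n_0^{-1}$ after a union bound over the three steps. This produces the last term. The final bound follows from the triangle inequality
\[
\hbbeta_{TGLMC}^{(0)}-\bbeta^{(0)}=(\hbgamma-\bgamma^\ast),
\]
where $\bgamma^\ast$ already absorbs the step-1 and step-2 errors. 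The resulting $\ell_2$ error equals the Step-3 error plus the $\ell_2$ errors propagated from the previous steps.

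\textbf{Main obstacle.} The hardest step is controlling error propagation in $\ell_1$ norm across stages. In Step 3 the target offset $\bgamma^\ast$ has $\ell_1$ norm $d_{\cal C}+\|\hbbeta^{\cal A}+\hbalpha-\bbeta^{B}\|_1$, and a naive bound reintroduces $d$ into the last term. To avoid this, we would exploit that Step 2 is itself a debiasing toward $\bbeta^{B}$. The Step 2 KKT conditions give $\|\hbbeta^{\cal A}+\hbalpha-\bbeta^{B}\|_1\lesssim s\lambda_2+d\wedge(\text{step-2 rate})$. We would then argue, as in \citet{tian2023transfer}, that the excess contributes to the $\ell_2$ error only through the already-present middle term rather than through the $d_{\cal C}$ term. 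Verifying this is the step where the argument may need an extra condition.
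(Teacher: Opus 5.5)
\textbf{Step 3 is centered at the wrong point.} The gap is the one you flag yourself, and it is where the improvement over Trans-GLM lives. You center Step~3 at $\bgamma^\ast=\bbeta^{(0)}-\hbbeta^{\cal A}-\hbalpha$, so your $\ell_1$-ball analysis is driven by $\lambda_3\|\bgamma^\ast\|_1$. That quantity contains $\|\hbu\|_1$, with $\hbu=(\hbbeta^{\cal A}+\hbalpha)-\bbeta^{B}$.

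The KKT remedy you sketch cannot make $\|\hbu\|_1$ small. The Step~2 estimand lies only in an $\ell_1$-ball of radius of order $d$. A Lasso-type fit to such a target generally has $\ell_1$ error of order $d$. For example, if part of $\balpha$ is spread over many coordinates, each below the threshold $\lambda_2$, the penalty essentially zeros them. Then $\|\hbu\|_1$ is of order $d$ while $\|\hbu\|$ is tiny. The paper's own $\ell_1$ bound for $\hbu$ is likewise only $s\lambda_1+d$.

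Hence $\lambda_3\|\bgamma^\ast\|_1$ can be of order $d\sqrt{\log p/n_0}$. Your last term becomes $[(\log p/n_0)^{1/4}d^{1/2}]\wedge d$, so the argument as written gives at best the unclustered Trans-GLM rate.

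The same centering problem appears, more mildly, in your Step~2. The radius $\|\balpha^\ast\|_1\lesssim d+s\lambda_1$ gives, when $d$ is small, a squared-error term of order $(\lambda_2 s\lambda_1)\wedge(s\lambda_1)^2$. That exceeds the $s\lambda_1^2=s\log p/(n_{\cal A}+n_0)$ you say is inherited whenever $n_{\cal A}\gg n_{\cal C}+n_0$ and $s\gg 1$. The paper avoids re-deriving Steps~1--2 by applying the Trans-GLM theorem as a black box, with the cluster-plus-target sample playing the role of the target.

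\textbf{The missing idea.} Use your own population offset $\bgamma=\bbeta^{(0)}-\bbeta^{B}$, for which $\|\bgamma\|_1\le d_{\cal C}$, as the comparison point in the Step~3 basic inequality. Then split the error as $\hbu+\hbv$ with $\hbv=\hbgamma-\bgamma$.
\begin{itemize}
\item The gradient at the comparison point is $\nabla\hat L^{(0)}(\bbeta^{(0)}+\hbu)$. So $\hbu$ enters only through the gradient difference $n_0^{-1}\bX^{(0)\top}\Lambda^{(0)}\bX^{(0)}\hbu$.
\item Cauchy--Schwarz and AM--GM split the inner product of this difference with $\hbv$ into $C\,n_0^{-1}\|\bX^{(0)}\hbu\|^2+c_0\,n_0^{-1}\|\bX^{(0)}\hbv\|^2$.
\item The second piece is absorbed by restricted strong convexity.
\item The first piece is a squared prediction-norm quantity. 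It is controlled by the squared Step~1--2 bound, in which $d$ appears only as $d(\log p/(n_{\cal C}+n_0))^{1/2}$.
\item The penalty then contributes only $\lambda_3\|\bgamma\|_1\le\lambda_3 d_{\cal C}$, and $\lambda_3\|\hbu\|_1$ never appears.
\end{itemize}

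Your suspicion that an extra condition lurks here is nonetheless sound. Since $\hbu$ depends on $\bX^{(0)}$, the bound $n_0^{-1}\|\bX^{(0)}\hbu\|^2\lesssim\|\hbu\|^2$ does not follow from the fixed-vector concentration invoked in Step~7 of the paper's proof. A uniform restricted-smoothness bound gives $\|\hbu\|^2+(\log p/n_0)\|\hbu\|_1^2$ instead. The extra piece, of order $(\log p/n_0)(s\lambda_1+d)^2$, is not dominated by the stated rate without further conditions. For instance, $n_0\gtrsim s\log p$ together with $d\lesssim n_0/\sqrt{(n_{\cal C}+n_0)\log p}$ would suffice.
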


Two important consequences deserve discussion. First, under the same assumptions, the unclustered Trans-GLM estimator of
\citet{tian2023transfer} satisfies
$\|\hbbeta_{TGLM}^{(0)}-\bbeta^{(0)}\|\lesssim
(s\log p/(n_{\cal A}+n_0))^{1/2}+T$, where
$T:=\bigl[(\log p/n_0)^{1/4}d^{1/2}\bigr]\wedge d$
(Theorem~S.2.2 of the Supplement). The first (variance) term of
\eqref{eq:tglmc-rate} coincides with that of Trans-GLM. For the two
remaining (bias) terms, the second term of \eqref{eq:tglmc-rate} is at
most $T$ because $n_{\cal C}+n_0\ge n_0$, and the third term is at most
$T$ because $d_{\cal C}\le d$; hence the Trans-GLMC bound is no larger
than the Trans-GLM bound up to a constant factor. The improvement is
strict and substantial when a cluster structure concentrates mass
around the target: if $n_{\cal C}\asymp n_{\cal A}\gg n_0$ the second
term is of order $(\log p/n_{\cal A})^{1/4}d^{1/2}\ll T$, and if
$d_{\cal C}\ll d$ the third term is of order
$(\log p/n_0)^{1/4}d_{\cal C}^{1/2}\ll T$, so both bias terms fall
strictly below the Trans-GLM rate. Second, when there is no cluster structure, so that the detected cluster
containing the target satisfies $d_{\cal C}=d$ and
$n_{\cal C}=n_{\cal A}$, the second and third terms of
\eqref{eq:tglmc-rate} are both of order $T$, so the bound matches the
Trans-GLM rate up to a constant factor. In this sense Trans-GLMC is
never worse than Trans-GLM up to constants, and dominates it whenever a
non-trivial cluster exists.

\subsection{Consistency of cluster detection}

The second main result shows that the sample distance matrix
$({\cal D}_{ij})$ computed from facility-wise cross-validated
estimators of Section~\ref{sec:distance} preserves the ordering of the
population distances $(d_{ij})$, so that any cluster-detection rule
based on pairwise distances consistently recovers the true cluster
partition under a minimum-gap condition.

\begin{theorem}[Order preservation of estimated distances and cluster
detection]\label{thm:detect}
Suppose that for every facility $\ell\in\{0,1,\ldots,K\}$ and every
$m\in\{1,\ldots,M\}$, the per-fold facility-specific estimator
$\hbbeta^{(\ell)[m]}$ satisfies
$\P\bigl(\|\hbbeta^{(\ell)[m]}-\bbeta^{(\ell)}\|_1\le r\bigr)
\ge 1-c_1\exp(-c_2 p)$ for some constants $c_1,c_2>0$.
\begin{enumerate}
\item[(i)] \emph{(Order preservation.)} For any four indices
$i,j,k,\ell$ with $d_{ij}<d_{k\ell}-4r$,
\begin{align*}
\P\bigl({\cal D}_{ij}<{\cal D}_{k\ell}\bigr)
\ge 1-c_3\exp(-c_4 p).
\end{align*}
\item[(ii)] \emph{(Consistent cluster detection under a minimum-gap
condition.)} Suppose the true clusters ${\cal C}_1,{\cal C}_2$
satisfy the separation condition
\begin{align}\label{eq:gap}
\min_{i\in{\cal C}_g,\,j\in{\cal C}_{g'},\,g\ne g'} d_{ij}
> \max_{i,j\in{\cal C}_g} d_{ij} + 4r,
\end{align}
for every pair $g,g'$. Then the hierarchical density-based clustering
rule of Section~\ref{sec:distance} applied to $({\cal D}_{ij})$
recovers the true partition $\{{\cal C}_1,{\cal C}_2\}$ with
probability at least $1-c_5\exp(-c_6 p)$ for some constants
$c_5,c_6>0$.
\end{enumerate}
\end{theorem}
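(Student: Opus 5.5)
The proof rests on one deterministic fact: $\mathcal{D}_{ij}$ and $d_{ij}$ differ by at most $2r$ on a single high-probability event. Both parts then follow from that fact.

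\emph{Step 1 (good event).} Define $E=\bigcap_{\ell=0}^{K}\bigcap_{m=1}^{M}\{\|\hbbeta^{(\ell)[m]}-\bbeta^{(\ell)}\|_1\le r\}$. There are $(K+1)M$ events, and each fails with probability at most $c_1\exp(-c_2p)$. A union bound therefore gives
\[
\P(E^c)\le (K+1)M c_1 e^{-c_2 p}.
\]
This is at most $c_3 e^{-c_4 p}$ once $K$ and $M$ are treated as fixed, or more generally when $\log\{(K+1)M\}\le c_2p/2$. I would state this convention explicitly. Fold estimators within a facility are dependent, but the union bound does not need independence.

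\emph{Step 2 (deterministic perturbation bound).} On $E$, convexity of the $\ell_1$ norm gives
\[
\|\bbbeta^{(\ell)}-\bbeta^{(\ell)}\|_1\le M^{-1}\sum_m\|\hbbeta^{(\ell)[m]}-\bbeta^{(\ell)}\|_1\le r.
\]
Applying the triangle inequality twice then yields $|\mathcal{D}_{ij}-d_{ij}|\le \|\bbbeta^{(i)}-\bbeta^{(i)}\|_1+\|\bbbeta^{(j)}-\bbeta^{(j)}\|_1\le 2r$ for all pairs simultaneously.

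For part (i), on $E$ we have $\mathcal{D}_{ij}\le d_{ij}+2r< d_{k\ell}-2r\le \mathcal{D}_{k\ell}$. Hence $\P(\mathcal{D}_{ij}<\mathcal{D}_{k\ell})\ge \P(E)$. Because the event $E$ is uniform over all quadruples, the same argument gives simultaneous order preservation.

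\emph{Step 3 (cluster recovery).} Under \eqref{eq:gap}, on $E$ every within-cluster estimated distance satisfies $\mathcal{D}_{ij}\le \max_{\mathcal{C}_g}d+2r$. Every between-cluster estimated distance satisfies $\mathcal{D}_{ij}\ge \min_{\text{between}}d-2r$, which strictly exceeds the within-cluster maximum. So there is a threshold $\tau$ such that the graph $\{(i,j):\mathcal{D}_{ij}\le\tau\}$ has connected components exactly $\mathcal{C}_1,\mathcal{C}_2$, with each cluster a clique.

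It remains to argue that HDBSCAN returns this partition. This is the main obstacle, because HDBSCAN's output depends on \texttt{min\_cluster\_size}, the core distances and the stability-based selection, not only on distance order. My plan is as follows:
\begin{itemize}
\item Take the minimum cluster size to be at most $\min_g|\mathcal{C}_g|$ and the $\texttt{min\_samples}$ parameter $k_{\min}$ to be less than $\min_g|\mathcal{C}_g|$. Then every core distance is attained within the point's own cluster, so it is at most the within-cluster maximum.
\item Consequently, the mutual reachability distances $\max\{\mathrm{core}_i,\mathrm{core}_j,\mathcal{D}_{ij}\}$ inherit the same strict gap between within-cluster and between-cluster pairs.
\item The single-linkage tree on mutual reachability distances therefore merges each $\mathcal{C}_g$ completely before any cross-cluster merge.
\item The flat extraction must then return $\{\mathcal{C}_1,\mathcal{C}_2\}$. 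I would impose this as a condition on the algorithm, or equivalently assume the threshold-cut or excess-of-mass selection picks the top split. The same argument then covers DBSCAN or hierarchical clustering with any cut in the gap.
\end{itemize}
The probability bound $1-c_5e^{-c_6p}$ is $\P(E)$ from Step 1. The merging step (Step 4 of the detection procedure) can only combine whole clusters, so it preserves correctness provided it merges no cluster outside $\mathcal{C}_1$, which I would assume via the screening guarantee.
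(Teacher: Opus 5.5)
Your proposal is correct and follows the paper's own proof. The paper likewise uses convexity of $\|\cdot\|_1$ to pass from the per-fold estimators to $\bbbeta^{(\ell)}$, the triangle inequality to control $|{\cal D}_{ij}-d_{ij}|$, a union bound for the probability, and reads part (ii) off the resulting strict separation of within- and between-cluster sample distances. Where you differ, it is in your favor: a single global event over the $(K+1)M$ facility--fold pairs replaces the paper's union over $O(K^4)$ quadruples. Your explicit conditions on \texttt{min\_samples}, the minimum cluster size, the flat-cluster selection and the merging step also supply what the paper only asserts when it says HDBSCAN on the mutual-reachability graph recovers the partition exactly.
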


Part~(i) follows from the triangle inequality applied to the facility-specific
estimators; see Theorem~S.2.1 of the Supplement. Part~(ii) is a direct consequence of part~(i): under
the gap condition~\eqref{eq:gap}, all within-cluster sample distances
are strictly smaller than all between-cluster sample distances, which
is precisely the regime in which any single-linkage-type rule 
recovers the correct partition. The condition~\eqref{eq:gap}
tightens the informal separation requirement stated in
Section~\ref{sec:setup} by the concentration-induced slack $4r$,
which decays to zero as facility sample sizes grow.

\section{Simulation}\label{sec:simulation}

We evaluate Trans-GLMC against multiple competing methods over a range of cluster 
designs that vary the size of the target's cluster and
the within-cluster informativeness.

\subsection{Data-generating process}\label{sec:simu:dgp}

We generate one target ($k=0$) and $K=20$ source facilities, each with
$n_k=n=150$ observations and $p=150$ covariates (including an
intercept). For each $k$, the rows of $\bX^{(k)}$ are i.i.d.\
${\cal N}(0,\bSigma^{(k)})$ with $\bSigma^{(k)}=\bSigma+\bvar^{(k)}
\bvar^{(k)\top}$, $\bvar^{(k)}\sim {\cal N}(0_p,0.3^2 I_p)$, and
$\bSigma_{ij}=0.5^{|i-j|}$, so that designs differ across facilities
through the rank-one perturbation $\bvar^{(k)}$.

The facilities form two clusters indexed by $g\in\{1,2\}$, with
cluster~1 (the target's cluster) of size $|{\cal A}|$ varied over
$\{0,1,2,4,6,\ldots,20\}$ and cluster~2 of size $K-|{\cal A}|$. Two
distance parameters control between- and within-cluster informativeness:
\begin{itemize}
\item Set the population-target coefficient
$\bbeta^{(0)}=\bbeta=(0,\underbrace{1,\ldots,1}_{s=20},\underbrace{0,\ldots,0}_{p-1-s})$
and define the cluster centers $\bbeta+\balpha^{g}$ via
\[
\balpha^{1}=\mathbf{0},\qquad
\balpha^{2}=\bigl(0,\underbrace{-1,\ldots,-1}_{d_1/2},\underbrace{0,\ldots,0}_{s},\underbrace{1,\ldots,1}_{d_1/2},\underbrace{0,\ldots,0}_{p-1-s-d_1}\bigr),
\]
so that cluster~2's center is at $\ell_1$-distance $d_1$ from the target.
\item For each $k=1,\ldots,K$, draw a random subset $H_k\subset\{1,\ldots,p\}$ with $|H_k|=s$ and set $\bgamma^{k}_j=\xi_j\,\mathbf 1(j\in H_k)$ with $\xi_j\stackrel{\text{i.i.d.}}{\sim}\mathrm{Laplace}(0,d_2/s)$, so $\E\|\bgamma^{k}\|_1=d_2$.
\item Set $\bbeta^{(k)}=\bbeta+\balpha^{g(k)}+\bgamma^{k}$ for $k=1,\ldots,K$, where $g(k)=1$ if $k\in{\cal A}$ and $g(k)=2$ otherwise.
\end{itemize}
The target belongs to the cluster~1. In the notation of
Section~\ref{sec:setup}, $d_1$ is the between-cluster distance, while
$d_2$ governs the within-cluster informativeness $d_{\cal C}$ for
cluster~1 (since $\E\|\bbeta^{(k)}-\bbeta^{(0)}\|_1 = d_2$ for $k\in{\cal A}$). The binary outcome
$y_i^{(k)}$ is drawn from the logistic GLM in
\eqref{eq:glmmodel-tg}--\eqref{eq:glmmodel-sc}. We fix $d_1=10$ and
consider $d_2\in\{1,3\}$, corresponding to strong (concentrated) and
weak (diffuse) within-cluster signal, respectively. Each configuration
is replicated $100$ times.

\subsection{Competing methods and evaluation}\label{sec:simu:methods}

We compare six estimators, all $\ell_1$-penalized with cross-validated
tuning parameters:
\begin{enumerate}
\item[(1)] \emph{Target-GLM}: facility-specific $\ell_1$-penalized
GLM with target data.
\item[(2)] \emph{Trans-GLM}: data-driven Trans-GLM by \citet{tian2023transfer} (Section~\ref{sec:glmtrans}).
\item[(3)] \emph{Trans-GLM-Q}: Q-aggregation variant of
\citet{li2022transfer} adapted to GLMs (Section~S.1.2 of the Supplement).
\item[(4)] \emph{Trans-GLM-IDW}: inverse-distance-weighted-loss variant of Trans-GLM (Section~S.1.3 of the Supplement).
\item[(5)] \emph{Trans-GLM-SPH}: spherical-weighted-loss variant of Trans-GLM (Section~S.1.4 of the Supplement).
\item[(6)] \emph{Trans-GLMC}: the proposed cluster-structured estimator (Section~\ref{sec:glmtransc}), with the cluster partition recovered by the data-driven HDBSCAN procedure of Section~\ref{sec:distance}.
\end{enumerate}
Performance is summarized by the mean squared error
$\|\hbbeta^{(0)}-\bbeta^{(0)}\|^2$ averaged over the $100$
replications, with a normal-approximation confidence interval
$\bar x\pm 1.96\,\hat\sigma/\sqrt{100}$.

\subsection{Results}\label{sec:simu:results}

Figure~\ref{fig:tglmc-simu} summarizes the results; full tables, broken down by $|{\cal A}|$, are reported in Tables~S.1
and~S.2 of the Supplement.

\begin{figure}[H]
\centering
\includegraphics[width=\linewidth]{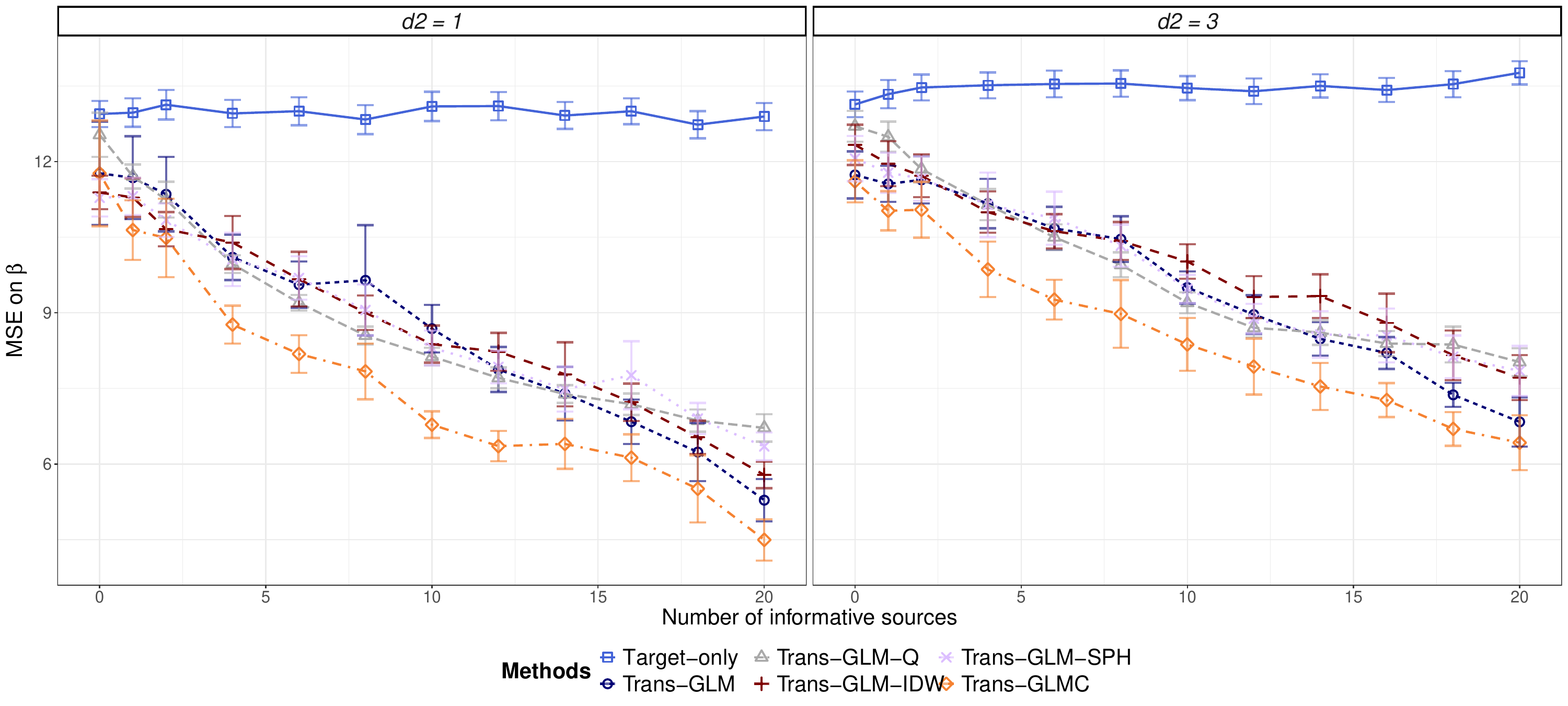}
\caption{Average MSE of $\hbbeta^{(0)}$ across 100 replications as a
function of $|{\cal A}|$, for strong ($d_2=1$, left) and weak
($d_2=3$, right) within-cluster informativeness. Error bars are
$\bar x\pm 1.96\,\hat\sigma/\sqrt{100}$.}
\label{fig:tglmc-simu}
\end{figure}

All five transfer-learning methods
improve substantially on the target-only estimator across the range of $|{\cal A}|$. 
Trans-GLMC delivers the lowest MSE in essentially every
configuration with a non-trivial cluster ($|{\cal A}|\ge 2$); the
margin over the unclustered Trans-GLM and the three weighted
variants is largest for moderate cluster sizes ($6\le|{\cal A}|\le
16$), where the cluster pattern is well-balanced and the
within-cluster information is plentiful. The gain from
Trans-GLMC is more pronounced in the strong-cluster regime
($d_2=1$) than in the weak-cluster regime ($d_2=3$), in agreement
with Theorem~\ref{thm:rate}: $d_2$ plays the role of the
within-cluster informativeness $d_{\cal C}$, and the second/third
terms of the bound~\eqref{eq:tglmc-rate} shrink as $d_{\cal C}$ does.

When $|{\cal A}|$ is very small ($\le 2$) the target's cluster
contributes too few samples to gain over Trans-GLM, while when
$|{\cal A}|$ is very large ($\ge 18$) the unclustered Trans-GLM is
already nearly optimal because almost all sources are informative.
In both extremes Trans-GLMC remains at least as accurate as
Trans-GLM, consistent with the no-worse-than guarantee discussed
after Theorem~\ref{thm:rate}.

\section{Facility-Specific Suicide Risk Modeling}\label{sec:realdata}

Suicide is among the leading causes of preventable death in the United
States, and the emergency department is a critical, and often the
only, point of contact for individuals at imminent risk
\citep{ahmedani2014health}. Predictive models built from electronic
health records can flag at-risk patients in real time, but their
clinical utility hinges on facility-specific calibration: the patient
mix, documentation practices, and base rates of self-harm vary
markedly across hospitals \citep{simon2018predicting}. 

Connecticut's
27 acute-care hospitals span the entire spectrum, from large academic
medical
centers 
to small community and rural
facilities 
(Section~\ref{sec:chime}; the full facility-level summary, with
demographic characteristics and transferability-community assignment,
is given in Table~S.3 of the Supplement). As such, a model trained on
one site rarely transfers cleanly to another. At the same time, some
facilities have too few suicide-attempt cases to reliably fit a
high-dimensional model on their own; for the smallest
hospitals, the Target-only AUROC sits near chance
(Table~\ref{app:tab:metric_overall}). This is precisely the regime in
which a clustered transfer learning approach should help, by borrowing
strength from facilities whose risk profiles are genuinely
similar to the target. The remainder of this section reports the
facility-specific predictive performance, the estimated
transferability network among the 27 hospitals, and the risk factors
identified by Trans-GLMC. Facility indices are used throughout the
main text to focus interpretation on transferability and prediction,
rather than on hospital-level comparisons.

\subsection{Study Design and Setup}

We adopt a retrospective follow-up design to assess one-year suicide
risk across the 27 Connecticut hospitals in the CHIME cohort
(Section~\ref{sec:chime}). Patients aged 18--64 years were recruited
during the calendar year 2016 and followed through December 31, 2017
for the occurrence of a suicide attempt, with controls defined as
patients who had no follow-up visit associated with a suicide attempt.
Information collected from historical visits between January 1, 2012
and December 31, 2016 was used to construct predictor variables.
Demographic variables comprised age (stratified into 18--24, 25--39,
40--54, 55--64), biological sex, race/ethnicity (Non-Hispanic White
versus other), and insurance type (Medicaid versus other). Following
established suicide-risk modeling pipelines \citep{su2020machine,
  xu2022improving, sacco2023target}, historical ICD-10 diagnosis codes
were first reduced to their 3-digit categories and screened against
the suicide-attempt outcome via Fisher's exact tests; codes with
BH-adjusted $p$-values below $0.05$ \citep{benjamini1995controlling}
were retained. The retained diagnosis-code features were combined with
the demographic variables to form a common predictor set, which was
applied to all 27 facilities and standardized before fitting.

We compared Trans-GLMC against the same five benchmarks used in
Section~\ref{sec:simulation}: Target-only, Trans-GLM, Trans-GLM-Q,
Trans-GLM-IDW, and Trans-GLM-SPH. 
For each of the 27 facilities (treated as the target in turn), we
performed an 80/20 random train/test split, repeated 10 times, and
reported mean AUROC, AUPRC, and sensitivity and PPV at 90\% and 95\%
specificity, together with the standard deviation across repetitions.

\subsection{Predictive Performance}

Table~\ref{app:tab:metric_overall} summarizes the cross-facility
distribution of each metric; additional results including complete facility-by-facility 
results are provided in Section~S.5 of the Supplement. Trans-GLMC attains the highest median
AUROC (79.58\%), AUPRC (6.73\%), sensitivity at 90\% specificity
(52.62\%), PPV at 90\% specificity (1.79\%, tied with Trans-GLM-SPH),
sensitivity at 95\% specificity (38.00\%), and PPV at 95\% specificity
(2.69\%). Across every metric the gain over Target-only is
substantial. 
Trans-GLMC also dominates Trans-GLM and the three weighted
variants on most summaries regarding the median across facilities. The
improvements are most pronounced at the lower end of the distribution.
At the smallest facility (facility~$27$), 
Target-only AUROC of $44.21\%$ rises to $75.93\%$ under Trans-GLMC,
with a roughly fourfold gain in AUPRC. At
facility~$6$, 
the site with the lowest observed suicide-attempt rate, Target-only
AUROC is $50.00\%$ (constant predictions across all $10$ splits,
SD~$=0$). The model extracts no signal from local data alone;
Trans-GLMC raises AUROC to $56.81\%$. Although this represents a
recovery of some signal where Target-only had none, $56.81\%$ remains
below the threshold of clinical viability, and we therefore view
Trans-GLMC's role at the most data-poor sites as stabilizing otherwise
unreliable estimates rather than producing deployable models.

\begin{table}[H]
  \scriptsize\centering \setlength{\tabcolsep}{4pt}
  \caption{Performance summary across methods. For each facility we
    first averaged each metric over 10 replicates; we then summarized
    across the 27 facilities by reporting the median, IQR, and
    minimum--maximum (Min--Max).}
  \label{app:tab:metric_overall}
  \begin{tabular}{l r r r r | r r r r}
    \toprule
    Method & Median & IQR & Min & Max & Median & IQR & Min & Max\\
    \midrule
    \addlinespace[0.3em]
           & \multicolumn{4}{l}{\textbf{AUROC, \%}} & \multicolumn{4}{l}{\textbf{AUPRC, \%}}\\
    Target-only   & 75.25 & 70.10 - 78.13 & 44.21 & 85.05 & 4.03 & 2.46 - 5.63 & 0.07 & 9.09\\
    Trans-GLM     & 78.19 & 76.16 - 80.10 & 59.78 & 85.19 & 6.67 & 4.78 - 8.27 & 2.27 & 10.87\\
    Trans-GLM-Q   & 79.48 & 77.11 - 80.72 & 59.55 & 85.18 & 6.33 & 3.40 - 7.65 & 0.91 & 13.18\\
    Trans-GLM-IDW & 77.73 & 75.33 - 79.95 & 59.31 & 85.73 & 6.28 & 4.79 - 8.08 & 0.64 & 11.91\\
    Trans-GLM-SPH & 77.08 & 72.52 - 80.03 & 54.55 & 85.68 & 5.88 & 4.25 - 7.93 & 0.72 & 11.36\\
    Trans-GLMC    & 79.58 & 76.99 - 81.17 & 56.81 & 86.41 & 6.73 & 5.27 - 8.99 & 1.21 & 11.88\\
    \midrule
    \addlinespace[0.3em]
           & \multicolumn{4}{l}{\textbf{Sensitivity, \% (90\% specificity)}} & \multicolumn{4}{l}{\textbf{PPV, \% (90\% specificity)}}\\
    Target-only   & 45.00 & 38.17 - 47.32 &  0.00 & 65.55 & 1.42 & 1.05 - 2.22 & 0.00 & 4.87\\
    Trans-GLM     & 50.50 & 46.97 - 54.58 & 30.00 & 66.16 & 1.76 & 1.16 - 2.62 & 0.23 & 4.93\\
    Trans-GLM-Q   & 51.67 & 47.94 - 55.00 & 25.00 & 66.47 & 1.73 & 1.18 - 2.58 & 0.23 & 5.21\\
    Trans-GLM-IDW & 51.19 & 47.61 - 54.31 & 10.00 & 66.48 & 1.74 & 1.20 - 2.60 & 0.19 & 5.27\\
    Trans-GLM-SPH & 49.52 & 45.36 - 52.98 & 20.00 & 66.48 & 1.79 & 1.19 - 2.52 & 0.17 & 5.30\\
    Trans-GLMC    & 52.62 & 48.72 - 57.05 & 30.00 & 66.02 & 1.79 & 1.18 - 2.63 & 0.23 & 5.17\\
    \midrule
    \addlinespace[0.3em]
           & \multicolumn{4}{l}{\textbf{Sensitivity, \% (95\% specificity)}} & \multicolumn{4}{l}{\textbf{PPV, \% (95\% specificity)}}\\
    Target-only   & 32.38 & 25.86 - 37.07 &  0.00 & 51.38 & 2.09 & 1.56 - 3.22 & 0.00 & 6.73\\
    Trans-GLM     & 35.00 & 33.20 - 42.76 & 20.00 & 52.28 & 2.49 & 1.69 - 3.82 & 0.34 & 6.73\\
    Trans-GLM-Q   & 36.39 & 34.24 - 41.89 & 15.00 & 55.71 & 2.46 & 1.82 - 3.73 & 0.34 & 6.70\\
    Trans-GLM-IDW & 35.73 & 33.86 - 41.67 & 10.00 & 53.24 & 2.64 & 1.89 - 3.88 & 0.34 & 6.65\\
    Trans-GLM-SPH & 36.00 & 32.26 - 43.78 & 15.00 & 53.24 & 2.57 & 1.82 - 3.82 & 0.23 & 6.71\\
    Trans-GLMC    & 38.00 & 34.88 - 43.63 & 20.00 & 53.54 & 2.69 & 1.82 - 3.99 & 0.34 & 6.59\\
    \bottomrule
  \end{tabular}
\end{table}

Figure~\ref{app:fig:metric_diff} compares Trans-GLMC against Trans-GLM and against Target-only across facilities for AUROC and sensitivity at $90\%$ specificity. Trans-GLMC improved AUROC at $25$ of $27$ facilities over Target-only and at $21$ of $27$ over Trans-GLM, with the largest gains concentrated at data-poor sites. At facility $18$ ($29$ attempts; Table S.4), 
for instance, AUROC rises from $67.39\%$ (Target-only) to $75.90\%$ (Trans-GLM) and to $80.49\%$ (Trans-GLMC), and sensitivity at $90\%$ specificity rises in step from $23.33\%$ to $48.33\%$ and to $60.00\%$, so Trans-GLMC improves over both baselines on both metrics. As expected, not all metrics improve simultaneously at every facility: at $7$ facilities an AUROC gain over Trans-GLM is not accompanied by a sensitivity gain. At facility $27$ ($15$ attempts), 
for example, AUROC rises from $44.21\%$ to $71.95\%$ and to $75.93\%$ across the three methods, but Trans-GLMC's sensitivity at $90\%$ specificity is slightly below Trans-GLM's. This reflects the intrinsic difficulty in dealing with rare outcomes. Overall, however, Trans-GLMC retains an improvement over Trans-GLM at most facilities. The results support our transfer learning strategy of adaptively selecting informative source facilities.

\begin{figure}[H]
  \centering
  \begin{subfigure}[t]{0.48\linewidth}
    \centering
    \includegraphics[width=\linewidth]{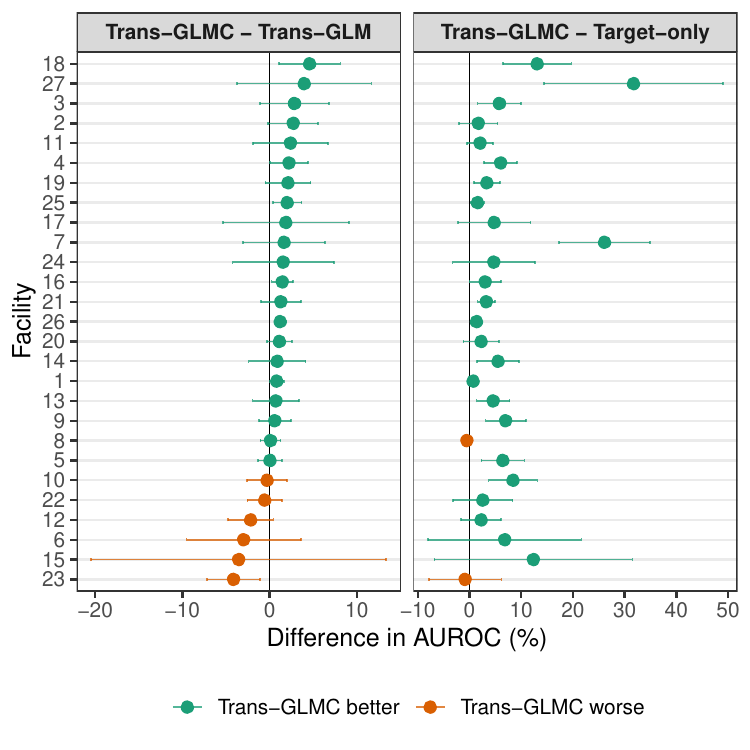}
    \caption{AUROC.}
    \label{app:fig:metric_diff_auroc}
  \end{subfigure}
  \begin{subfigure}[t]{0.48\linewidth}
    \centering
    \includegraphics[width=\linewidth]{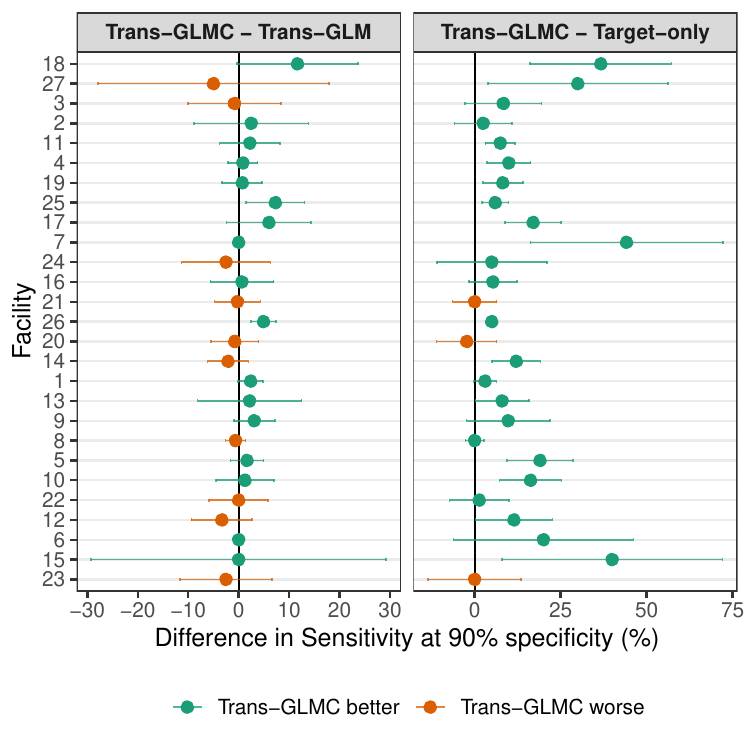}
    \caption{Sensitivity at 90\% specificity.}
    \label{app:fig:metric_diff_sens90}
  \end{subfigure}
    \caption{Facility-specific differences for Trans-GLMC relative to Trans-GLM and to the Target-only baseline across CHIME facilities, shown for (a)~AUROC, (b)~sensitivity at 90\% specificity. Facilities are ordered by the mean AUROC difference between Trans-GLMC and Trans-GLM. Points show mean differences across 10 splits; error bars are normal-approximation $95\%$ confidence intervals.}
  \label{app:fig:metric_diff}
\end{figure}

\subsubsection{Transferability}

To characterize how information flows across hospitals, we construct a directed transferability network from the Trans-GLMC fits. For each ordered pair $(a,b)$ we record an edge $a\!\rightarrow\! b$ whenever facility $b$ is selected as an informative source when facility $a$ is the target. Directed edges are then collapsed into undirected ones with weight 1 (one-way transfer) or weight 2 (mutual transfer), and community structure in the resulting weighted network is detected by greedy modularity maximization
\citep{clauset2004finding}.

Figure~\ref{app:fig:transferability} displays the resulting
transferability communities: panel~(a) shows the pairwise
transferability states with facilities reordered by community
assignment, and panel~(b) maps the communities onto Connecticut.
Table~S.3 in the Supplement lists each facility's community assignment
alongside its demographic characteristics. Two clear communities
emerge, broadly aligned with geography and observed suicide-attempt
incidence.

Community~1 ($14$ facilities) covers the Hartford corridor and central, northern, and eastern Connecticut, with suicide-attempt rates ranging from $0.21\%$ to $1.07\%$. Community~2 ($13$ facilities) is concentrated in the southwestern coastal corridor (Fairfield County) and extends to several mid-state and rural facilities, with uniformly lower rates ranging from $0.07\%$ to $0.39\%$. This stratification by incidence is consistent with regional heterogeneity in suicide risk across Connecticut, where socioeconomic and demographic profiles correlated with geography are established risk correlates \citep{franklin2017risk, harris1997suicide}. Notably, the recovered community structure does not reduce to hospital system membership: the state's two largest health systems 
each have facilities in both communities. This indicates that the learned transferability structure captures information about shared suicide-risk profiles beyond administrative groupings, supporting the value of data-driven clustering over pre-specified partitions.

\begin{figure}[H]
  \centering
  \begin{subfigure}[t]{0.4\linewidth}
    \centering
    \includegraphics[width=\linewidth]{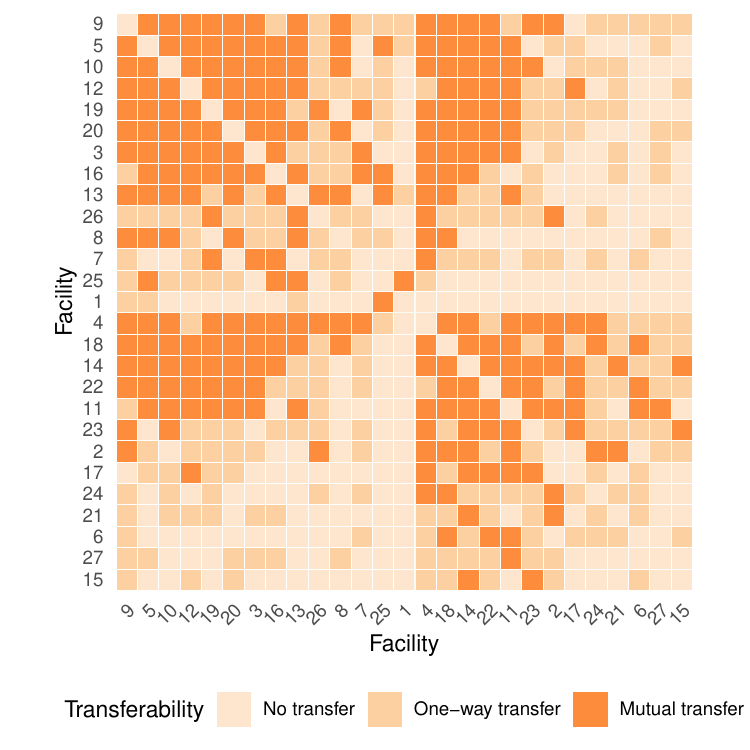}
    \caption{Pairwise transferability}
    \label{app:fig:transfer_heatmap}
  \end{subfigure}
  \hfill
  \begin{subfigure}[t]{0.58\linewidth}
    \centering
    \includegraphics[width=\linewidth]{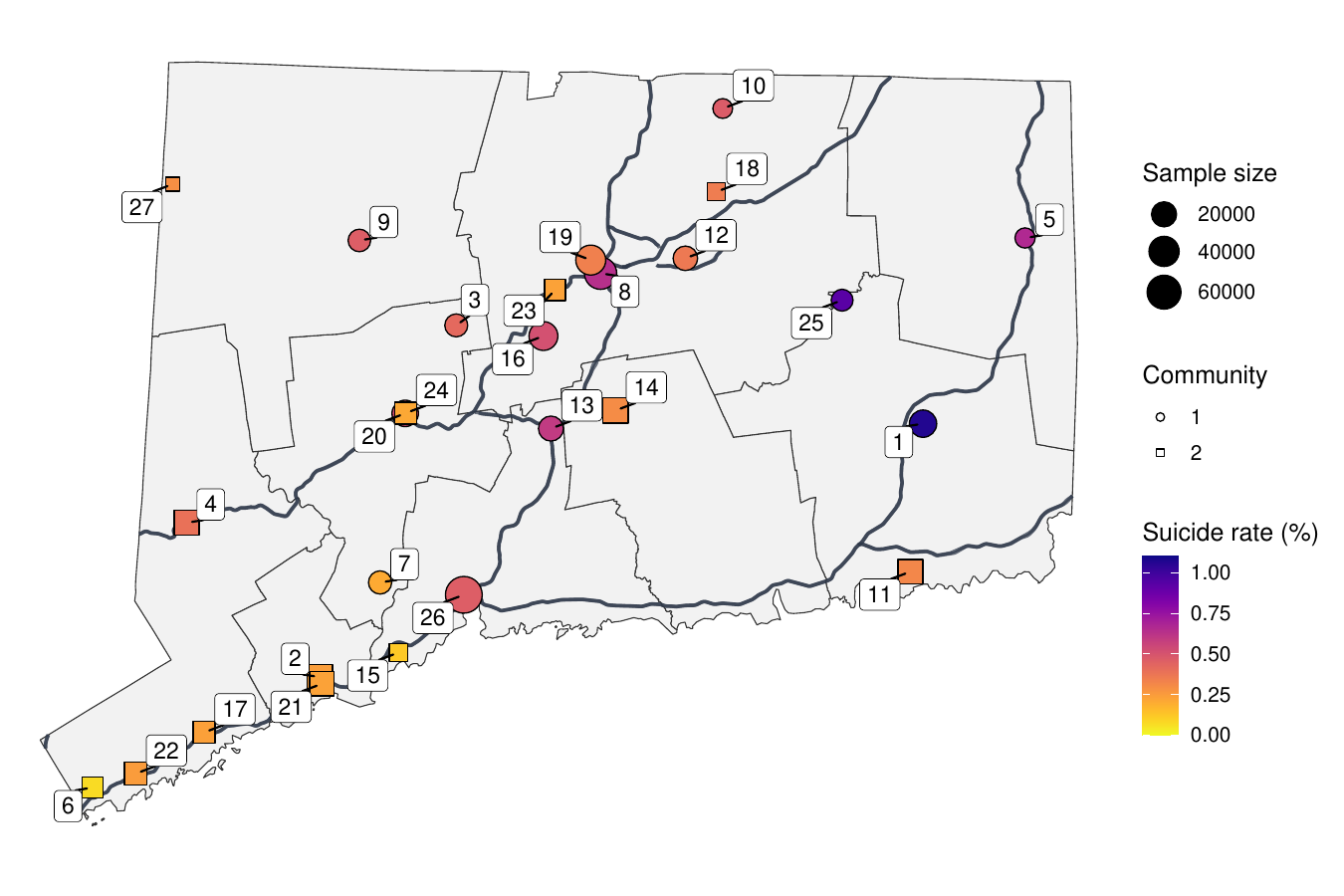}
    \caption{Facility map}
    \label{app:fig:map}
  \end{subfigure}
  \caption{Transferability communities in the CHIME study. Panel~(a)
    shows pairwise transferability states after reordering facilities
    by detected community. Panel~(b) shows the map of the facilities across
    Connecticut; node size reflects facility sample size, opacity
    reflects the observed suicide-attempt rate, and color indicates the
    transferability community detected by greedy modularity
    maximization.}
  \label{app:fig:transferability}
\end{figure}

\subsubsection{Risk Factors}
To gain clinical insight into the predictors driving suicide risk, we
examine the coefficients from the Trans-GLMC models fitted across all
27 facilities, each treated as the target in turn, with no splits. The
predictors are binary indicators of a documented prior suicide attempt
or self-harm event (SA) and 3-digit ICD-10 diagnosis codes. For each
predictor, we report the average coefficient across all 27 facilities
and the sign breakdown $(n_+, n_-)$, denoting how many facilities
select it with a positive versus negative sign. Predictors are ranked
by the magnitude of the average coefficient, and the top 15 risk
factors are reported.

\begin{table}[H]
  \centering
  \small
  \setlength{\tabcolsep}{3pt}
  \caption{Top 15 risk factors for suicide attempt identified by
    Trans-GLMC, ranked by mean coefficient across all 27 CHIME
    facilities. The Coefficient column reports the mean coefficient
    across all 27 facilities, treating unselected predictors as zero;
    parentheses show the minimum and maximum among nonzero
    coefficients. The $(n_+, n_-)$ column reports the number of
    facilities selecting a predictor with a positive and negative
    coefficient, respectively.}
  \label{tab:risk_factors}
  \begin{tabular}{lp{0.44\linewidth}rr}
    \toprule
    ICD-10 & Description & Coefficient (Min, Max) & $(n_+, n_-)$\\
    \midrule
    SA & Suicide attempt / self-harm indicator used in this project & 0.83 (0.40, 1.64) & (27, 0)\\
    F31 & Bipolar disorder & 0.76 (0.58, 0.87) & (27, 0)\\
    F32 & Depressive episode & 0.71 (0.46, 0.92) & (27, 0)\\
    F41 & Other anxiety disorders & 0.66 (-0.07, 0.94) & (26, 1)\\
    F10 & Mental and behavioral disorders due to use of alcohol & 0.61 (0.49, 0.85) & (27, 0)\\
    F60 & Specific personality disorders & 0.45 (0.18, 1.62) & (24, 0)\\
    F17 & Mental and behavioral disorders due to use of tobacco & 0.42 (0.09, 0.55) & (27, 0)\\
    R45 & Symptoms and signs involving emotional state & 0.35 (0.06, 0.82) & (23, 0)\\
    F43 & Reaction to severe stress, and adjustment disorders & 0.29 (0.04, 0.52) & (25, 0)\\
    F33 & Recurrent depressive disorder & 0.27 (0.07, 1.00) & (22, 0)\\
    F25 & Schizoaffective disorders & 0.18 (0.03, 1.29) & (18, 0)\\
    Z87 & Personal history of other diseases and conditions & 0.08 (0.02, 0.52) & (23, 0)\\
    Y92 & Place of occurrence of the external cause & 0.07 (-0.17, 0.42) & (11, 1)\\
    Z79 & Long term current drug therapy & 0.06 (0.01, 0.33) & (21, 0)\\
    M54 & Dorsalgia & 0.06 (0.01, 0.42) & (16, 0)\\
    \bottomrule
  \end{tabular}
\end{table}

Table~\ref{tab:risk_factors} summarizes the top risk factors
identified by the Trans-GLMC fits. The pattern is clinically coherent
and consistent with the wider suicide-prediction literature.  SA
carries the largest average coefficient ($+0.83$) and is selected with
a positive sign at \emph{every} one of the 27 facilities, mirroring
the near-universal finding that past suicidal behavior is the single
strongest short-term predictor of subsequent attempt and death by
suicide \citep{ribeiro2016self,franklin2017risk}. The next tier of
risk factors -- bipolar disorder (F31), depressive episode (F32),
anxiety disorders (F41), recurrent depressive disorder (F33),
alcohol-use disorder (F10), tobacco-use disorder (F17), personality
disorders (F60), reaction to severe stress (F43), and emotional-state
symptoms (R45) -- are likewise selected with a positive sign in 22--27
facilities and reproduce the well-documented role of mood, anxiety,
substance-use, and stress-related comorbidities in elevating suicide
risk \citep{harris1997suicide,nock2008suicide,bertolote2004suicide}.

These effects emerge simultaneously across hospitals as diverse as
small rural community sites 
and large urban academic medical
centers, 
serving patient populations of varying demographic and
socioeconomic status. 
Given this diversity, the consistency of these predictors highlights
the value of clustered transfer learning: the model lets each target
hospital sharpen its own coefficients while pooling evidence on
predictors that are stable across the system.

\begin{figure}[H]
  \centering
  \includegraphics[width=\linewidth]{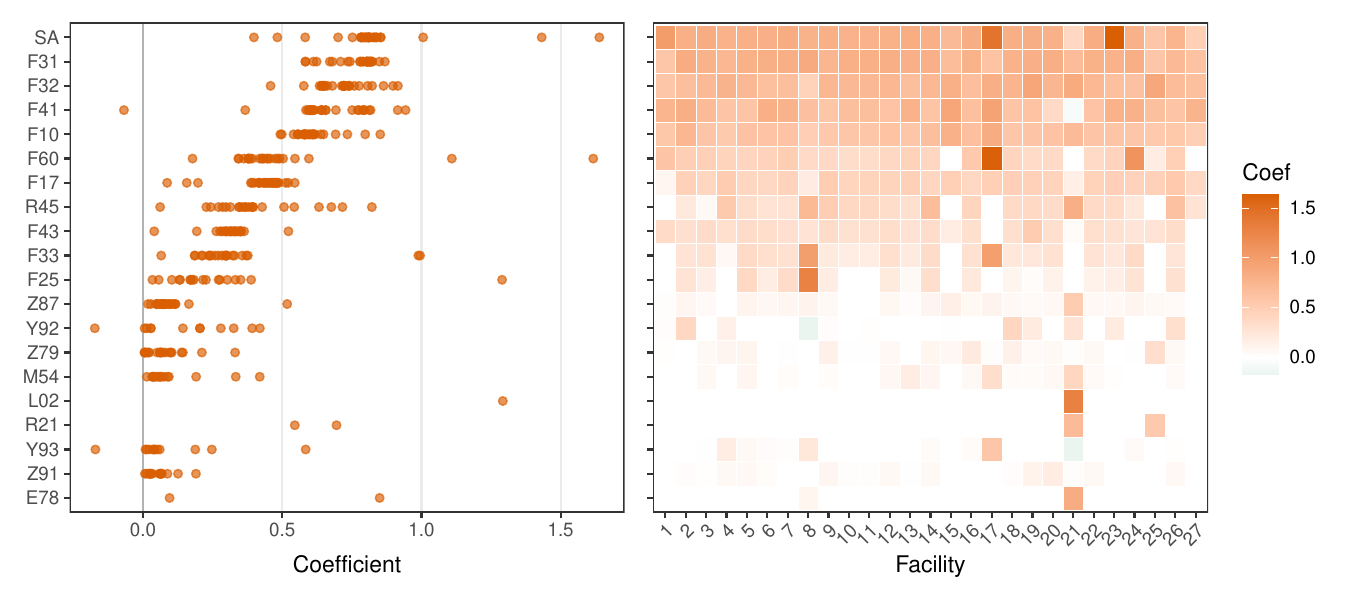}
  \caption{Trans-GLMC coefficients for the top 20 risk factors across
    27 facilities.}
  \label{fig:coef_beeswarm_heatmap}
\end{figure}

Figure~\ref{fig:coef_beeswarm_heatmap} confirms these findings: the
leading risk factors are tightly concentrated across facilities and
form densely red rows across all 27 columns of the heatmap, whereas
lower-ranked predictors become increasingly sparse and are limited to
isolated cells in only a few columns. Heterogeneity is largely
confined to effect magnitudes rather than signs, with one notable
exception. Facility 21 
exhibits a distinct coefficient pattern and receives transferable
information from only two source facilities, suggesting limited
similarity with the remaining facilities. As a result, information
borrowing is substantially reduced, leading to more facility-specific
estimates, including the negative effect for anxiety disorders and
zero effects for several leading predictors.

\section{Discussion}\label{sec:discussion}

We have developed Trans-GLMC, a cluster-structured transfer-learning
approach for high-dimensional GLMs with heterogeneous source
populations. The method uses coefficient-based distances to detect
latent source structure, then combines global fusion, within-cluster
refinement, and target debiasing to improve estimation at a target
population. The theory shows that this additional structure can
strictly improve the estimation rate when a target-relevant cluster is
present, while matching the unclustered transfer rate up to constants
when no meaningful cluster structure is available.

The CHIME suicide-risk study illustrates the practical value of this
perspective. Facility-specific models are difficult to fit because
suicide attempts are rare within individual hospitals, yet hospitals
differ enough that indiscriminate pooling is not ideal. Trans-GLMC
addresses this tension by learning which facilities are mutually
transferable and by borrowing most strongly within the target's
estimated community. In both simulations and the CHIME analysis, this
strategy improves prediction for many facilities and yields clinically
coherent risk-factor patterns.

Several extensions are worth pursuing. One direction is multi-task
learning with clustered populations, where the goal is to improve all
datasets simultaneously rather than one target at a time
\citep{suk2014clustering,suresh2018learning}. Another is a federated
implementation that preserves the cluster-aware borrowing strategy
while reducing the need to share raw patient-level data, in the spirit
of privacy-preserving transfer-learning methods such as
\citet{li2023targeting}. Such extensions would broaden the use of
cluster-structured transfer learning in real-world healthcare systems
where heterogeneity, rarity, and privacy constraints arise together.

\section*{Supplementary Materials}
\addcontentsline{toc}{section}{Supplementary Materials}

\setcounter{section}{0}
\setcounter{subsection}{0}
\setcounter{subsubsection}{0}
\setcounter{equation}{0}
\setcounter{table}{0}
\setcounter{figure}{0}
\setcounter{axiom}{0}
\setcounter{theorem}{0}
\setcounter{corollary}{0}
\setcounter{prop}{0}
\setcounter{assumption}{0}
\renewcommand{\thesection}{S.\arabic{section}}
\renewcommand{\thesubsection}{\thesection.\arabic{subsection}}
\renewcommand{\thesubsubsection}{\thesubsection.\arabic{subsubsection}}
\renewcommand{\thetable}{S.\arabic{table}}
\renewcommand{\thefigure}{S.\arabic{figure}}
\renewcommand{\theHsection}{supp.\arabic{section}}
\renewcommand{\theHsubsection}{supp.\arabic{section}.\arabic{subsection}}
\renewcommand{\theHsubsubsection}{supp.\arabic{section}.\arabic{subsection}.\arabic{subsubsection}}
\renewcommand{\theHtable}{supp.\arabic{table}}
\renewcommand{\theHfigure}{supp.\arabic{figure}}
\renewcommand{\theHequation}{supp.\arabic{section}.\arabic{equation}}
\numberwithin{equation}{section}

\section{Algorithmic Details for Trans-GLM and Its
  Variants}\label{sec:glmtransv}

This section contains two pieces of algorithmic content used in the
main paper but deferred for space. Section~\ref{sec:glmdetect} gives
the data-driven informative-source detection procedure for Trans-GLM
that is referenced in Sections~3.2 and~3.4 of the main
paper. Section~\ref{sec:glmqagg} onwards introduces three competing
weighted transfer-learning variants (Q-aggregation,
inverse-distance-weighted loss, and spherical-weighted loss) used as
benchmarks in the simulation and CHIME studies.

\subsection{Data-driven informative-source detection for
  Trans-GLM}\label{sec:glmdetect}

\citet{tian2023transfer} proposed a cross-validation rule that
compares, for each candidate source $k$, the fold-averaged loss
$\widehat Q_0^{(k)}$ obtained when source $k$ is fused into the target
estimator against the target-only loss $\widehat Q_0^{(0)}$, returning
$\widehat{\cal A}=\{k\ne 0:\widehat Q_0^{(k)}-\widehat Q_0^{(0)} \le
C_0(\widehat\sigma\vee 0.01)\}$ for a user-specified strictness
constant $C_0$. The constant $C_0$ is difficult to calibrate in
practice; we adopt the following data-driven variant as the
informative-source detector for Trans-GLM and as the screening step in
the cluster-detection procedure of the main paper.

\begin{itemize}
\item[\textbf{Step 1.}] Split the target data into three folds
  $(\bX^{(0)[r]},\by^{(0)[r]})$, $r=1,2,3$.
\item[\textbf{Step 2.}] For each $r$, compute the target-only
  estimator $\hbbeta^{(0)[r]}$ on
  $\cup_{i\ne r}(\bX^{(0)[i]},\by^{(0)[i]})$ and, for each $k$, the
  single-source-augmented estimator $\hbbeta^{(0,k)[r]}$ by fusing the
  held-in target folds with source $k$ via the Trans-GLM transferring
  step described in Section~3.2 of the main paper.
\item[\textbf{Step 3.}] Compute the averaged cross-validated losses
  $\widehat Q_0^{(k)}$ and $\widehat Q_0^{(0)}$. Initialize
  $\widehat{\cal A}=\{k\ne 0:\widehat Q_0^{(k)}\le\widehat
  Q_0^{(0)}\}$ and order the remaining indices by ascending
  $\widehat Q_0^{(k)}$.
\item[\textbf{Step 4.}] Greedily augment $\widehat{\cal A}$ by the
  next candidate index, recompute the fold-averaged fused loss, and
  stop as soon as adding the candidate no longer improves the loss
  relative to the target-only value.
\end{itemize}

The output $\widehat{\cal A}$ is used as the unclustered Trans-GLM
informative set in the simulations and as the input to the
cluster-detection step in Section~3.4 of the main paper.

\subsection{Transfer Learning Generalized Linear Models with
  Q-aggregation}\label{sec:glmqagg}

Although the modified data-driven Trans-GLM mentioned in the main
paper provides a mechanism for identifying informative sources, it
assigns equal weight to each selected source without differentiation
and may perform poorly when any non-informative sources are included. To
make the procedure more flexible and adaptive, we propose an extension
inspired by the work of \citet{li2022transfer}, where Q-aggregation is
used to assign adaptive weights to transfer learning estimators
generated from a progressively expanding candidate informative set.

With a half-and-half split of the target samples into $\cal I$ and
${\cal I}^c$, the loss function is defined as:
\begin{align*}
  \widehat Q\left( {{{\cal I}^c},\bbeta } \right) =  - \frac{2}{{{n_0}}}\sum\limits_{i \in {{\cal I}^c}} {\log \rho \left( {y_i^{\left( 0 \right)}} \right)}  - \frac{2}{{{n_0}}}{\left( {\by_{{{\cal I}^c}}^{\left( 0 \right)}} \right)^ \top }\bX_{{{\cal I}^c}}^{\left( 0 \right)}\bbeta  + \frac{2}{{{n_0}}}\sum\limits_{i \in {{\cal I}^c}} {\psi \left( {{\bbeta ^ \top }{\bx}_i^{\left( 0 \right)}} \right)}.
\end{align*}
The method starts by constructing inclusive candidate sets from the
data-driven informative-source detection procedure. Let
$\widehat {\cal A}$ be the detected informative set
from Trans-GLM and assume $| {\widehat {\cal A}} | = L$. We can then
construct the expanding candidate informative set
$\{ {{{\widehat G}_0},{{\widehat G}_1}, \ldots ,{ {\widehat G}_L}}\}$
where ${{\widehat G}_0} = \emptyset$,
${\widehat G}_L = \widehat {\cal A}$, and
\begin{align*} {{\widehat G}_l} = \left\{
  \begin{array}{*{20}{c}}
    {k \in \widehat {\cal A}:\widehat Q_0^{\left( k \right)}}&{is}&{among}&l&{smallest}
  \end{array}
  \right\},
\end{align*}
where $\widehat Q_0^{\left( k \right)}$ is defined as in the
data-driven Trans-GLM screening step. Then, with these constructed
candidate informative sets, let the output from the oracle Trans-GLM
algorithm applied to sample $\cal I$ with ${ {\widehat G}_l}$ as
$\hbbeta ( {{{\widehat G}_l}} )$ for all $0 \le l \le L$.  Then the
Q-aggregation algorithm assumes the final output in a weighted format
${\hbbeta^{\btheta} } = \sum\nolimits_{l = 0}^{L} { {\theta _l}\hbbeta
  ( {{{\widehat G}_l}} )}$, and the weighting parameter $\btheta$,
which is defined on $L+1$-dimension space
${\Lambda ^{L + 1}} = \{ {\bv \in {\R^{L + 1}}:{v_l} \ge
  0,\sum\nolimits_{l = 0}^L {{v_l} = 1} } \}$, is trained by
\begin{align}\label{Qaggloss}
  \hat \btheta  = \mathop {\arg \min }\limits_{{\Lambda ^{L + 1}}} \left\{ {\widehat Q\left( {{{\cal I}^c},\sum\limits_{l = 0}^L {{\theta _l}\hbbeta \left( {{{\widehat G}_l}} \right)} } \right) + \sum\limits_{l = 0}^L {{\theta _l}\widehat Q\left( {{{\cal I}^c},\hbbeta \left( {{{\widehat G}_l}} \right)} \right)}  + \frac{{{\lambda _\theta }}}{{{n_0}}}\sum\limits_{l = 0}^L {{\theta _l}\log {\theta _l}} } \right\},
\end{align}
where $\lambda_{\btheta}$ is a tuning parameter determined by
minimizing~\eqref{Qaggloss} without the penalty term through
cross-validation.

\subsection{Inverse-Distance-Weighted Transfer GLM}

With the distance ${\cal D}$ defined in Section~3.4 of the main
paper, it is natural to
consider weighting each source in the fusion step based on this
distance, giving more weight to sources that are closer to the
target. We start with the conventional inverse distance weight
proposed by~\citet{shepard1968two}, which has recently been extended
to weighted loss functions in areas such as active learning~\citep
{bemporad2023active} and ``weighted attention'' in deep
learning~\citep{mccarter2023inverse}. Similar to
\citet{shepard1968two}, the weight for $k$th dataset
($k \in \{0\} \cup {\cal A}$) is assigned as
${w_k^{IDW}}(q) = {s_k} (q)/\sum\nolimits_{i\in\{0\}\cup{\cal A}} {{s_i}(q)}
$ where $s_i(q)$ is the inverse of distance between the target and the
$i$th source such that ${s_i}(q) = {\cal D}_ {0i}^{ - q}$ and the
power $q$ is a tuning parameter. For ${\cal D}_{00}$, we define
${{\cal D}_{00}} = {\max _{i,j}}{\| {{\hbbeta^{( 0 )[ i ]}} -
    {\hbbeta^{( 0 )[ j ]}}} \|_1}$.

Using these defined weights, the proposed Trans-GLM with loss weighted
by inverse distance weight modifies the loss function in the
transferring step used by Trans-GLM in the main paper:

\noindent {\bf Step 1* (Transferring step for Trans-GLM-IDW):} Compute
\begin{align}\label{eq:glmfuseidw}
  {\hbbeta^{\cal A}} = \, & \mathop {\arg \min }\limits_{\bbeta  \in {\R^p}} \frac{1}{{{n_{\cal A}} + {n_0}}}\sum\limits_{k \in \left\{ 0 \right\} \cup {\cal A}} {w_k^{IDW}\left( q\right) \left[ -{{{\left( {{\by^{\left( k \right)}}} \right)}^\top}{\bX^{\left( k \right)}}\bbeta  + \sum\limits_{i = 1}^{{n_k}} {\psi \left( {{\bbeta ^\top}\bx_i^{\left( k \right)}} \right)} } \right]} \nonumber \\
  \phantom{=} \, & + {\lambda _1}{\left\| \bbeta  \right\|_1},
\end{align} 
where ${n_{\cal A}}$ is the total sample size of sources in $\cal A$,
$q$ and ${\lambda _1}$ are tuned by cross-validation. The
informative set $\cal A$ is estimated using our proposed data-driven
procedure.

\subsection{Transfer Learning Generalized Linear Models with Spherical
  Weighted Loss}

Similar to the loss weighting idea using inverse distance weight, we
can also assign weights based on the spherical correlation \citep
{cressie2015statistics} between the target and specific sources. This
method, a case of the semi-variogram models, provides a correlation
structure based on computed distance. In our case, the weight for
$k$th dataset ($k \in \{0\} \cup {\cal A}$) is assigned as
${w_k^{SPH}}(r) = {s_k}(r)/\sum\nolimits_{i\in\{0\}\cup{\cal A}} {{s_i}(r)} $
where ${s_i}(r) = 1 - 1.5({{\cal D}_{0i}}/r) + 0.5({{\cal D}_{0i}}/r)^3$ with $s_i(r)$ truncated at zero when ${\cal D}_{0i}>r$,
and the range parameter $r$ is the tuning parameter.

Using the defined weights, the proposed Trans-GLM with loss weighted
by spherical correlation modifies the loss function in the
transferring step used by Trans-GLM in the main paper:

\noindent {\bf Step 1** (Transferring step for Trans-GLM-SPH):}
Compute
\begin{align}\label{eq:glmfusesph}
  {\hbbeta^{\cal A}} = \, & \mathop {\arg \min }\limits_{\bbeta  \in {\R^p}} \frac{1}{{{n_{\cal A}} + {n_0}}}\sum\limits_{k \in \left\{ 0 \right\} \cup {\cal A}} {w_k^{SPH}\left( r\right) \left[ -{{{\left( {{\by^{\left( k \right)}}} \right)}^\top}{\bX^{\left( k \right)}}\bbeta  + \sum\limits_{i = 1}^{{n_k}} {\psi \left( {{\bbeta ^\top}\bx_i^{\left( k \right)}} \right)} } \right]} \nonumber \\
  \phantom{=} \, & + {\lambda _1}{\left\| \bbeta  \right\|_1},
\end{align}
where ${n_{\cal A}}$ is the total sample size of sources in $\cal A$,
$r$ and ${\lambda _1}$ are tuned by cross-validation. The
informative set $\cal A$ is estimated using our proposed data-driven
procedure.

\clearpage

\section{Theorems}\label{sec:theorems}

Before stating the theoretical guarantees of our proposed method,
we revisit the necessary assumptions, which are also referenced
in~\citet {tian2023transfer}.

\begin{assumption}[Strict convexity]\label{assum:B1}
  $\psi$ in the density function is a second-order differentiable
  function and $\psi''(x) > 0$.
\end{assumption}

\begin{assumption}[Sub-Gaussian covariates and positive definite covariances]\label{assum:B2}
  For each $k = 0,1,\ldots,K$ and for any $\ba \in \R^p$,
  ${\ba^\top}\bx_i^{( k )}$ are i.i.d.
  ${\kappa_u}\| \ba \|_2^2$-sub-Gaussian variables with zero mean for
  all $k = 0, \ldots, K$, where $\kappa_u$ is a positive
  constant. Covariance matrices ${{\bSigma}^{( k )}}$ satisfy
  ${\inf _k}{\lambda _{\min }}( {{{\bSigma} ^{( k )}}}) \ge \kappa_l >
  0$ with $\kappa_l$ as a constant.
\end{assumption}

\begin{assumption}[Bounded second-order derivative]\label{assum:B3}
  With ${\kappa _\psi }$, $U$ and $V$ as positive constants, at least one of the following assumptions holds true:\\
  (i) ${\| {\psi ''} \|_\infty } \le {\kappa _\psi } < \infty$ a.s.;\\
  (ii)
  $\mathop {\sup }\limits_k {\| {{\bx^{( k )}}} \|_\infty } \le U <
  \infty$ a.s., and
  $\mathop {\sup }\limits_{| z | \le V} \psi ''( {{{\bx^{( k
          )}}^\top}{\bbeta ^{( k )}} + z}) \le {\kappa _\psi } <
  \infty $.
\end{assumption}

\begin{assumption}[Limited covariance shift]\label{assum:B4}
  With the definitions
  \begin{align}
    \label{eq:tildeSigma}
    {\widetilde \bSigma } &= \sum\limits_{k \in \left\{ 0 \right\} \cup {\cal A}} {{\alpha _k}\E\left[ {\int_0^1 {\psi ''\left( {{ {{\bx^{\left( k \right)}}} ^\top}{\bbeta ^{\left( 0 \right)}} + t{{{\bx^{\left( k \right)}}}^\top}\left( {{\bbeta ^{\cal A}} - {\bbeta ^{\left( 0 \right)}}} \right)} \right)dt{\bx^{\left( k \right)}}{ {{\bx^{\left( k \right)}}} ^\top}} } \right]},\\
    \label{eq:tildeSigmak}
    \widetilde \bSigma _k^{\left( k \right)} &= \E\left[ {\int_0^1 {\psi ''\left( {{{\bx^{\left( k \right)}}^\top}{\bbeta ^{\left( 0 \right)}} + t{{{\bx^{\left( k \right)}}}^\top}\left( {{\bbeta ^{\left( k \right)}} - {\bbeta ^{\left( 0 \right)}}} \right)} \right)dt{\bx^{\left( k \right)}}{{{\bx^{\left( k \right)}}}^\top}} } \right],
  \end{align}
  where $\alpha_k = {n_k}/( {{n_{\cal A}} + {n_0}})$ is the weight, we
  assume
  ${\sup _{k \in \{ 0 \} \cup {\cal A}}}\| {\widetilde \bSigma^{ -
      1}\widetilde \bSigma _k^{( k )}} \|_1 < \infty $.
\end{assumption}

Assumption~\ref{assum:B1} poses the convexity and differentiability
condition on $\psi$, which holds true for many general distribution
families, ranging from Gaussian, Poisson, and binomial distributions
to many others.  Assumption~\ref{assum:B2} ensures well-behaved
covariates and their covariance structures.  Assumption~\ref{assum:B3}
further specifies the differentiability condition, assuming the
second-order derivative $\psi ''$ is bounded in a region. This is a
common assumption mentioned in \citet{negahban2009unified}, and
Gaussian, Poisson, and binomial distributions satisfy this assumption
in at least one of its branches. Assumption~\ref{assum:B4} assumes
that the weighted covariance shift should be bounded. In the linear
case, this assumption can be simplified as a restriction on the
heterogeneity between target predictors and source predictors. See
Condition 4 in \citet{li2022transfer} for more details.

\begin{theorem}[Order preservation of estimated distances]\label{thm:distance}
  For any four indices $i, j, k, l$, if
  $\P( {{{\| {{\hbbeta^{(\ell)[m]}} - {\bbeta ^{( \ell )}}} \|}_1} \le
    r} ) \ge 1 - {c_1}\exp ( { - {c_2}p} )$ for all
  $\ell \in \{ {i,j,k,l} \}$ and any $m$ with some constants
  $c_1, c_2$ and ${d_{i,j}} < {d_{k,l}} - 4r$, we have
  \begin{align*}
    \P\left( {{{\cal D}_{i,j}} < {{\cal D}_{k,l}}} \right) \ge 1 - {c_3}\exp ( { - {c_4}p} )
  \end{align*}
  with some constants $c_3, c_4$. Also, if
  ${d_{i,j}} \ll {d_{k,l}}$ such that
  $r \ll {d_{k,l}} - {d_{i,j}}$, we have
  \begin{align*}
    \P\left( {{{\cal D}_{i,j}} \ll {{\cal D}_{k,l}}} \right) \ge 1 - {c_3}\exp ( { - {c_4}p} )
  \end{align*}
  with some constants $c_3, c_4$.
\end{theorem}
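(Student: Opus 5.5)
The plan is a union bound over the per-fold events, followed by the triangle inequality in $\ell_1$. Let $E$ be the event that $\|\hbbeta^{(\ell)[m]}-\bbeta^{(\ell)}\|_1\le r$ for every $\ell\in\{i,j,k,l\}$ and every fold $m=1,\ldots,M$. There are at most $4M$ such per-fold events, and by hypothesis each fails with probability at most $c_1\exp(-c_2p)$. Hence
\[
\P(E^c)\le 4Mc_1\exp(-c_2p),
\]
which is of the form $c_3\exp(-c_4p)$ with $c_3=4Mc_1$ and $c_4=c_2$; if $M$ is treated as fixed, this is a constant. Everything below is then done deterministically on $E$.

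\textbf{Step 1: the averaged estimators.} On $E$, convexity of the norm transfers the per-fold bound to the fold average:
\[
\|\bbbeta^{(\ell)}-\bbeta^{(\ell)}\|_1\le M^{-1}\sum_{m}\|\hbbeta^{(\ell)[m]}-\bbeta^{(\ell)}\|_1\le r .
\]

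\textbf{Step 2: the distances.} The reverse triangle inequality gives, for each pair,
\[
|{\cal D}_{ij}-d_{ij}|\le \|\bbbeta^{(i)}-\bbeta^{(i)}\|_1+\|\bbbeta^{(j)}-\bbeta^{(j)}\|_1\le 2r ,
\]
and the same bound holds for $(k,l)$. Therefore
\[
{\cal D}_{ij}\le d_{ij}+2r< d_{kl}-2r\le {\cal D}_{kl}
\]
whenever $d_{ij}<d_{kl}-4r$. This proves the first claim.

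\textbf{Step 3: the asymptotic claim.} The second claim is only meaningful once "$\ll$" is read as a ratio statement, and fixing that interpretation is the one real obstacle. I read ${\cal D}_{ij}\ll{\cal D}_{kl}$ as ${\cal D}_{ij}/{\cal D}_{kl}\to0$. I also need to read the hypothesis as $d_{ij}/d_{kl}\to0$ together with $r/d_{kl}\to0$. Under this reading, $r\ll d_{kl}-d_{ij}$ combined with $d_{ij}\ll d_{kl}$ gives $r=o(d_{kl})$, since $d_{kl}-d_{ij}\asymp d_{kl}$.

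With that interpretation, on $E$ we have
\[
\frac{{\cal D}_{ij}}{{\cal D}_{kl}}\le \frac{d_{ij}+2r}{d_{kl}-2r}=\frac{d_{ij}/d_{kl}+2r/d_{kl}}{1-2r/d_{kl}}\to0 .
\]
The probability bound is the same as in Step 2, because everything holds on $E$.
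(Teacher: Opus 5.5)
Your proof is correct and follows the paper's argument: a union bound over the $4M$ per-fold events (giving $c_3=4Mc_1$, $c_4=c_2$ for fixed $M$), convexity of the $\ell_1$ norm to pass to the fold average, and the triangle inequality to get ${\cal D}_{ij}\le d_{ij}+2r<d_{kl}-2r\le{\cal D}_{kl}$. For the second claim the paper only notes that the sample gap ${\cal D}_{kl}-{\cal D}_{ij}\ge d_{kl}-d_{ij}-4r$ stays of the order of $d_{kl}-d_{ij}$; your explicit ratio bound, which uses $d_{kl}-d_{ij}\asymp d_{kl}$ to derive $r=o(d_{kl})$, is a slightly more precise version of the same idea.
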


\begin{theorem}[$\ell_2$-estimation error bound of Trans-GLM]\label{thm:TransGLM}
  Assume Assumptions~\ref{assum:B1} to~\ref{assum:B4} hold true, where
  Assumption~\ref{assum:B3} holds true with the bounded $d$,
  corresponding to $\cal A$, satisfying $d \lesssim {U^{ - 1}}V$. With
  the sample size conditions $d \ll \sqrt {{n_0}/\log p}$,
  ${n_0} \gtrsim \log p$ and ${n_{\cal A}} \gtrsim s\log p$ and the
  appropriate strength of penalty
  ${\lambda _1} \asymp \sqrt {\log p/( {{n_{\cal A}} + {n_0}} )}$ and
  ${\lambda _2} \asymp \sqrt {\log p/{n_0}} $, we have
  \begin{align*}
    \P\left( {\left\| {\hbbeta _{TGLM}^{\left( 0 \right)} - {\bbeta ^{\left( 0 \right)}}} \right\| \lesssim {{\left( {\frac{{s\log p}}{{{n_{\cal A}} + {n_0}}}} \right)}^{1/2}} + \left[ {\left( {{{\left( {\frac{{\log p}}{{{n_0}}}} \right)}^{1/4}}{d^{1/2}}} \right)} \right] \wedge d} \right) \ge 1 - n_0^{ - 1}.
  \end{align*}
\end{theorem}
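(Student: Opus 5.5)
The plan follows the two-stage "bias-regularization" analysis of \citet{tian2023transfer}. The error splits as $\hbbeta^{(0)}_{TGLM}-\bbeta^{(0)}=(\hbbeta^{\cal A}-\bbeta^{\cal A})+(\hbdelta^{\cal A}-\bdelta^{\cal A})$, where $\bbeta^{\cal A}$ is the population minimizer of the pooled expected loss over $\{0\}\cup{\cal A}$ and $\bdelta^{\cal A}=\bbeta^{(0)}-\bbeta^{\cal A}$. Each term is bounded on a high-probability event, and the events are combined by a union bound so the total failure probability is at most $n_0^{-1}$.

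\textbf{Step 0: the population shift.} $\bbeta^{\cal A}$ solves $\sum_{k}\alpha_k\E[\bx^{(k)}(\psi'(\bx^{(k)\top}\bbeta^{\cal A})-\psi'(\bx^{(k)\top}\bbeta^{(k)}))]=0$, with $\alpha_k=n_k/(n_{\cal A}+n_0)$. A first-order integral expansion of $\psi'$ around $\bx^{(k)\top}\bbeta^{(0)}$, on both sides, gives
\[
\widetilde\bSigma(\bbeta^{\cal A}-\bbeta^{(0)})=\sum_k\alpha_k\widetilde\bSigma^{(k)}_k(\bbeta^{(k)}-\bbeta^{(0)}).
\]
Assumption~\ref{assum:B4} then yields $\|\bdelta^{\cal A}\|_1\lesssim\max_k d_{k0}\le d$. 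The condition $d\lesssim U^{-1}V$ keeps all linear predictors in the window where Assumption~\ref{assum:B3}(ii) bounds $\psi''$.

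\textbf{Step 1: the pooled lasso.} The target $\bbeta^{\cal A}$ is only approximately sparse: it equals the $s$-sparse vector $\bbeta^{(0)}$ plus a perturbation of $\ell_1$-norm $O(d)$. I will establish two ingredients on the pooled data of size $N=n_{\cal A}+n_0$.
\begin{itemize}
\item[(a)] The score satisfies $\|\nabla\widehat L(\bbeta^{\cal A})\|_\infty\lesssim\sqrt{\log p/N}$. This follows from sub-Gaussian concentration (Assumption~\ref{assum:B2}), using that the pooled score has mean zero at $\bbeta^{\cal A}$ by definition.
\item[(b)] A restricted strong convexity inequality $\delta\widehat L\ge\kappa\|\bv\|^2-C\frac{\log p}{N}\|\bv\|_1^2$ holds locally. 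This follows the Negahban et al.\ argument, using $\lambda_{\min}\ge\kappa_l$ and local boundedness of $\psi''$.
\end{itemize}
Applying the decomposability argument with respect to the support $S$ of $\bbeta^{(0)}$, and treating $\|\bdelta^{\cal A}_{S^c}\|_1\lesssim d$ as the approximation error, gives
\[
\|\hbbeta^{\cal A}-\bbeta^{\cal A}\|\lesssim\sqrt{s\log p/N}+(\log p/N)^{1/4}d^{1/2},\qquad \|\hbbeta^{\cal A}-\bbeta^{\cal A}\|_1\lesssim s\sqrt{\log p/N}+d .
\]
Here $n_{\cal A}\gtrsim s\log p$ makes the RSC tolerance term negligible.

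\textbf{Step 2: debiasing on target data.} The second lasso fits $\bdelta$ with offset $\bX^{(0)}\hbbeta^{\cal A}$. Its population target is $\bbeta^{(0)}-\hbbeta^{\cal A}=\bdelta^{\cal A}+(\bbeta^{\cal A}-\hbbeta^{\cal A})$. By Step 1 this vector has $\ell_1$-norm $R\lesssim d+s\sqrt{\log p/N}$, and it is not sparse. I will therefore use the $\ell_1$-ball ("weak sparsity") version of the lasso bound with $\lambda_2\asymp\sqrt{\log p/n_0}$, which gives
\[
\|\hbdelta^{\cal A}-(\bbeta^{(0)}-\hbbeta^{\cal A})\|\lesssim\bigl[(\log p/n_0)^{1/4}R^{1/2}\bigr]\wedge R .
\]
Adding this to $\hbbeta^{\cal A}-\bbeta^{\cal A}$ recombines exactly into $\hbbeta^{(0)}_{TGLM}-\bbeta^{(0)}$. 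The piece of $R$ coming from $s\sqrt{\log p/N}$ contributes at most $(\log p/n_0)^{1/4}(s\sqrt{\log p/N})^{1/2}$, which is dominated by $\sqrt{s\log p/N}+(\log p/n_0)^{1/2}$ by AM--GM. The remaining piece gives the $[(\log p/n_0)^{1/4}d^{1/2}]\wedge d$ term; the $\ell_1$-ball bound above is what delivers the $\wedge d$ branch.

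\textbf{Main obstacle.} The hardest step is (b) of Step 1 and its analogue in Step 2: restricted strong convexity for the target-data loss evaluated at a \emph{random} offset $\hbbeta^{\cal A}$. Two issues arise.
\begin{itemize}
\item Independence: $\hbbeta^{\cal A}$ depends on the target sample. I would first try a uniform-over-$\ell_1$-ball RSC bound via a peeling and union argument, valid for all offsets within distance $R$ of $\bbeta^{(0)}$. Step 1 then places $\hbbeta^{\cal A}$ in that ball with high probability.
\item Locality of curvature: all linear predictors must stay within the window $|z|\le V$. This is where $d\ll\sqrt{n_0/\log p}$ and $d\lesssim U^{-1}V$ are consumed.
\end{itemize}
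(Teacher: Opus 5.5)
Your Steps 0 and 1 are the standard ones; they are what \citet{tian2023transfer}, Theorem~1, does, and the paper simply invokes that result for this statement. Step 2, however, does not deliver the stated rate.

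You center the second lasso at $\bbeta^{(0)}-\hbbeta^{\cal A}=\bdelta^{\cal A}-\hbu$, with $\hbu=\hbbeta^{\cal A}-\bbeta^{\cal A}$. With that centering, $\hbdelta^{\cal A}-(\bbeta^{(0)}-\hbbeta^{\cal A})$ is already the full error $\hbbeta^{(0)}_{TGLM}-\bbeta^{(0)}$, so nothing should be ``added'' afterwards; that sentence is a slip. More importantly, the first-stage error now enters only through the $\ell_1$ radius $R$, so it is paid at the price $(\lambda_2\|\hbu\|_1)^{1/2}$.

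Your AM--GM step is incorrect. Put $N=n_{\cal A}+n_0$, $a=\sqrt{\log p/n_0}$ and $b=s\sqrt{\log p/N}$. Then $\sqrt{ab}\le(a+b)/2$ produces the $\ell_1$-rate $s\sqrt{\log p/N}$, not the $\ell_2$-rate $\sqrt{s\log p/N}$. The leftover $\sqrt{\log p/n_0}$ is not part of the theorem's bound either. Concretely, for $d=0$ your argument gives $\min\{s\sqrt{\log p/N},\,\sqrt{s\log p}\,(n_0N)^{-1/4}\}$. This exceeds the claimed $\sqrt{s\log p/N}$ by the factor $\min\{\sqrt s,(N/n_0)^{1/4}\}$, which is unbounded exactly in the regime $n_{\cal A}\gg n_0$ the theorem is about.

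The missing idea is to compare $\hbdelta^{\cal A}$ with the \emph{population} offset $\bdelta^{\cal A}$, whose $\ell_1$ norm is deterministically $\lesssim d$ by your Step 0. The basic inequality is then written at $\hbbeta^{\cal A}+\bdelta^{\cal A}=\bbeta^{(0)}+\hbu$, with second-stage error $\hbv=\hbdelta^{\cal A}-\bdelta^{\cal A}$. The proof then runs as follows.
\begin{itemize}
\item The gradient at $\bbeta^{(0)}+\hbu$ splits as $\nabla\hat L^{(0)}(\bbeta^{(0)})+\frac{1}{n_0}\bX^{(0)\top}\Lambda\bX^{(0)}\hbu$.
\item The centered score $\nabla\hat L^{(0)}(\bbeta^{(0)})$ is absorbed by $\lambda_2/2$.
\item Cauchy--Schwarz plus AM--GM turn the cross term $\frac{1}{n_0}\hbu^\top\bX^{(0)\top}\Lambda\bX^{(0)}\hbv$ into $\frac{C}{n_0}\|\bX^{(0)}\hbu\|^2$ plus a small multiple of $\frac{1}{n_0}\|\bX^{(0)}\hbv\|^2$, and the latter is absorbed by restricted strong convexity.
\item Hence $\hbu$ costs only its $\ell_2$ size: roughly $\|\hbv\|^2\lesssim\|\hbu\|^2+(\lambda_2 d)\wedge d^2$.
\item $\|\hbu\|+\|\hbv\|$, combined with your Step 1 bound and $N\ge n_0$, gives the theorem.
\end{itemize}
This is the mechanism of Steps 3--8 in the paper's proof of Theorem~\ref{thm:TransGLMC}, of which the paper says this result is the specialization.

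This is also exactly where your ``main obstacle'' actually lives. You need restricted strong convexity at the random base point $\bbeta^{(0)}+\hbu$. You also need control of $\frac{1}{n_0}\|\bX^{(0)}\hbu\|^2$ for a data-dependent $\hbu$, up to the usual $\frac{\log p}{n_0}\|\hbu\|_1^2$ correction. Both call for a uniform-over-cone argument. In your own centering the base point is the fixed $\bbeta^{(0)}$, so that obstacle never arose, which is a sign that the rate was being given away.
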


\begin{theorem}[$\ell_2$-estimation error bound of Trans-GLMC]\label{thm:TransGLMC}
  Under the same conditions as Theorem~\ref{thm:TransGLM},
  we add a new sample size condition ${n_{\cal C}} \gtrsim s\log p$
  and set the appropriate strength of penalty
  ${\lambda _1} \asymp \sqrt {\log p/( {{n_{\cal A}} + {n_0}} )}$,
  ${\lambda _2} \asymp \sqrt {\log p/( {{n_{\cal C}} + {n_0}} )}$ and
  ${\lambda _3} \asymp \sqrt {\log p/{n_0}} $. Then, we have
  \begin{align*}
    \P\, &\left( \left\| {\hbbeta _{TGLMC}^{\left( 0 \right)} - {\bbeta ^{\left( 0 \right)}}} \right\| \lesssim {{\left( {\frac{{s\log p}}{{n_{\cal A}} + n_0}} \right)}^{1/2}} + \left\{ {\left[ {{{\left( {\frac{{\log p}}{{n_{\cal C}}+n_0}} \right)}^{1/4}}d^{1/2}} \right] \wedge {d}} \right\} + \right. \\
    \phantom{=} \, & \left. \left\{ {\left[ {{{\left( {\frac{{\log p}}{{{n_0}}}} \right)}^{1/4}}d_{\cal C}^{1/2}} \right] \wedge {d_{\cal C}}} \right\} \right) \ge 1 - n_0^{ - 1}.
  \end{align*}
\end{theorem}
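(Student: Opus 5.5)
The plan is to follow the two-step analysis behind Theorem~\ref{thm:TransGLM} (the argument of \citet{tian2023transfer}) and apply it once more. Define population targets for each stage. Let $\bbeta^{\cal A}$ solve the population score equation $\sum_{k\in\{0\}\cup{\cal A}}\alpha_k\E[\{\psi'(\bx^{(k)\top}\bbeta)-\psi'(\bx^{(k)\top}\bbeta^{(k)})\}\bx^{(k)}]=0$. Define the cluster aggregate $\bbeta^{\cal C}=\bbeta^{\cal A}+\balpha$ analogously over $\{0\}\cup({\cal C}_1\cap{\cal A})$ with weights $n_k/(n_{\cal C}+n_0)$. Set $\bgamma=\bbeta^{(0)}-\bbeta^{\cal C}$.

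\textbf{Bias control via Assumption~\ref{assum:B4}.} Write the score difference using the integral form of the mean value theorem. This gives $\bbeta^{\cal A}-\bbeta^{(0)}=\sum_k\alpha_k\widetilde\bSigma^{-1}\widetilde\bSigma_k^{(k)}(\bbeta^{(k)}-\bbeta^{(0)})$, hence $\|\bbeta^{\cal A}-\bbeta^{(0)}\|_1\lesssim d$. The same argument, with a cluster-weighted analogue of Assumption~\ref{assum:B4} (I will assume it is implied or can be added), gives $\|\bgamma\|_1=\|\bbeta^{\cal C}-\bbeta^{(0)}\|_1\lesssim d_{\cal C}$. Consequently $\|\balpha\|_1\le\|\bbeta^{\cal C}-\bbeta^{(0)}\|_1+\|\bbeta^{(0)}-\bbeta^{\cal A}\|_1\lesssim d$. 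The condition $d\lesssim U^{-1}V$ keeps every linear predictor inside the region where $\psi''\le\kappa_\psi$ (Assumption~\ref{assum:B3}(ii)).

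\textbf{Step~1.} Step~1 is exactly the Trans-GLM transferring step. Its analysis in Theorem~\ref{thm:TransGLM} yields, with high probability,
\[
\|\hbbeta^{\cal A}-\bbeta^{\cal A}\|\lesssim\Bigl(\tfrac{s\log p}{n_{\cal A}+n_0}\Bigr)^{1/2}+\Bigl[\Bigl(\tfrac{\log p}{n_{\cal A}+n_0}\Bigr)^{1/4}d^{1/2}\Bigr]\wedge d,
\]
together with a matching $\ell_1$ bound of order $s(\log p/(n_{\cal A}+n_0))^{1/2}+d$. It uses $\lambda_1$, restricted strong convexity from Assumption~\ref{assum:B2} for sub-Gaussian designs (needing $n_{\cal A}\gtrsim s\log p$), and the treatment of $\bbeta^{\cal A}$ as approximately $s$-sparse up to an $\ell_1$ perturbation of size $d$.

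\textbf{Steps~2 and~3.} Steps~2 and~3 are both debiasing steps of the Trans-GLM type, with an offset $\hbbeta^{\cal A}$ (respectively $\hbbeta^{\cal A}+\hbalpha$) in the linear predictor. For Step~2 I will follow the Trans-GLM debiasing analysis on the pooled cluster sample of size $n_{\cal C}+n_0$, with $\lambda_2\asymp\sqrt{\log p/(n_{\cal C}+n_0)}$. The basic inequality combined with restricted strong convexity gives the usual dichotomy: either $\|\hbalpha-\balpha\|^2\lesssim\lambda_2\|\balpha\|_1+(\text{error from }\hbbeta^{\cal A})$, or the trivial $\ell_1$ bound. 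This produces the term $[(\log p/(n_{\cal C}+n_0))^{1/4}d^{1/2}]\wedge d$ plus a propagated Step-1 error. Step~3 is then literally the Trans-GLM debiasing step with sample size $n_0$ and a true offset of $\ell_1$ size $d_{\cal C}$, giving $[(\log p/n_0)^{1/4}d_{\cal C}^{1/2}]\wedge d_{\cal C}$. The condition $d\ll\sqrt{n_0/\log p}$ ensures $\lambda_3\|\bgamma\|_1$ dominates the cross terms. The final result follows from the triangle inequality over the three stages, with a union bound over three events each of probability $1-O(n_0^{-1})$.

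\textbf{Main obstacle.} I expect the error propagation to be the hard part, specifically showing that the plug-in error of $\hbbeta^{\cal A}$ inside Step~2 (and of $\hbbeta^{\cal A}+\hbalpha$ inside Step~3) enters only additively at the orders already present. Simply carrying the error forward would add $\ell_1$-type terms that could exceed the claimed rate. My first attempt is to reparametrize each step in terms of the combined quantity, e.g.\ to treat $\hbbeta^{\cal A}+\hbalpha$ as the estimator of $\bbeta^{\cal C}$. In the Step-2 basic inequality, the gradient term then involves only the Step-2 sample, controlled at level $\lambda_2$. The Step-1 error appears only through $\lambda_2\|\hbbeta^{\cal A}-\bbeta^{\cal A}\|_1$, and this is absorbed because $\lambda_2\cdot s\lambda_1\lesssim s\log p/(n_{\cal A}+n_0)\cdot\sqrt{(n_{\cal A}+n_0)/(n_{\cal C}+n_0)}$. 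If this absorption fails, I would fall back on bounding the propagated error by the Step-1 rate times a constant, using $n_{\cal C}\gtrsim s\log p$.
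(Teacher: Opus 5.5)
\paragraph{The gap: how the previous-stage error propagates.} The gap is in the step you flag yourself. The reparametrization you propose does not close it; it makes things worse. Recentering the penalty at the previous-stage estimate makes the plug-in error enter as $\lambda\times(\ell_1\text{ error})$.

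In Step~2, your displayed relation $\lambda_2\, s\lambda_1=\frac{s\log p}{n_{\cal A}+n_0}\sqrt{(n_{\cal A}+n_0)/(n_{\cal C}+n_0)}$ is an identity, not an absorption. The factor is at least $1$ and unbounded when $n_{\cal C}\ll n_{\cal A}$, so you end up with the inflated variance term $\bigl(s\log p/\sqrt{(n_{\cal A}+n_0)(n_{\cal C}+n_0)}\bigr)^{1/2}$. The accompanying $\lambda_2 d$ piece is harmless; it reproduces the second term.

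In Step~3, the same device is fatal. Recentering at $\hbbeta^{\cal A}+\hbalpha$ puts $\lambda_3\|\hbu\|_1$ on the right-hand side, with $\hbu=(\hbbeta^{\cal A}+\hbalpha)-\bbeta^{\cal C}$. But $\|\hbu\|_1$ is only $\lesssim s\lambda_1+d$: the offset $\balpha$ has $\ell_1$ size up to $d$, and a lasso that shrinks it toward zero incurs exactly that $\ell_1$ error. You then get $\|\hbv\|^2\lesssim\lambda_3 d=d\sqrt{\log p/n_0}$. That is the Trans-GLM bias $T^2$, so the third term with $d_{\cal C}$, which is the whole content of the theorem, is lost.

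Put differently, Step~3 is not literally Trans-GLM debiasing with an offset of size $d_{\cal C}$. Relative to the estimated center the offset is $\bgamma-\hbu$, whose $\ell_1$ norm is of order $d$. Your fallback (``the Step-1 rate times a constant'') does not address this.

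\paragraph{The missing idea.} The paper keeps the comparison point at the true offset $\bgamma$ in the original $\bgamma$-parametrization. The penalty then contributes only $\lambda_3(2\|\bgamma\|_1-\|\hbv\|_1)\le 2\lambda_3 d_{\cal C}-\lambda_3\|\hbv\|_1$, and $\hbu$ enters solely through the loss.

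Since $\hbbeta^{\cal A}+\hbalpha+\bgamma=\bbeta^{(0)}+\hbu$, the extra term is the gradient difference $n_0^{-1}\bX^{(0)\top}\Lambda^{(0)}\bX^{(0)}\hbu$ paired with $\hbv$. Cauchy--Schwarz and AM--GM split it into $\frac{C}{n_0}\|\bX^{(0)}\hbu\|^2+\frac{c_0}{n_0}\|\bX^{(0)}\hbv\|^2$. The second piece is absorbed by restricted strong convexity. The first is of order $\|\hbu\|^2$, the square of the first two terms of the bound. So the previous stage is paid for in squared prediction norm rather than in $\lambda_3\|\hbu\|_1$.

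One caveat: for a data-dependent $\hbu$, what one can honestly claim is the uniform bound $n_0^{-1}\|\bX^{(0)}\bu\|^2\lesssim\|\bu\|^2+\frac{\log p}{n_0}\|\bu\|_1^2$. Even then the $\ell_1$ part enters as $(\lambda_3\|\hbu\|_1)^2$, far smaller than your $\lambda_3\|\hbu\|_1$.

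\paragraph{Two further comparisons.} The paper avoids a separate Step-2 propagation argument altogether. It treats Steps~1--2 jointly as one instance of Theorem~\ref{thm:TransGLM}, with the pooled cluster-plus-target sample of size $n_{\cal C}+n_0$ as the ``target'' and ${\cal A}$ as the sources. This yields the bound on $\hbu$ directly.

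Your remark that $\|\bgamma\|_1\lesssim d_{\cal C}$ requires a cluster-level analogue of Assumption~\ref{assum:B4} is correct. The paper asserts it ``by definition'', which is not automatic for a pooled GLM optimum.
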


\section{Proofs}\label{sec:proofs}

\subsection{Proof of Theorem~\ref{thm:distance}}

\begin{proof}
  Throughout the proof we abbreviate
  $\bbbeta^{(\ell)} = M^{-1}\sum_{m=1}^M \hbbeta^{(\ell)[m]}$ and
  write $\Delta^{(\ell)} = \bbbeta^{(\ell)} - \bbeta^{(\ell)}$ for the
  estimation error of the cross-validated facility-specific estimator.\\

  \noindent Step 1: Triangle inequalities for ${\cal D}_{i,j}$ and
  ${\cal D}_{k,l}$.  By the triangle inequality applied to
  $\bbbeta^{(i)}-\bbbeta^{(j)} = (\bbeta^{(i)}-\bbeta^{(j)}) +
  (\Delta^{(i)} - \Delta^{(j)})$,
  \begin{align*} {\cal D}_{i,j} = \bigl\|\bbbeta^{(i)} -
    \bbbeta^{(j)}\bigr\|_1 \le d_{i,j} + \bigl\|\Delta^{(i)}\bigr\|_1
    + \bigl\|\Delta^{(j)}\bigr\|_1.
  \end{align*}
  Symmetrically, the reverse triangle inequality gives
  \begin{align*} {\cal D}_{k,l} \ge d_{k,l} -
    \bigl\|\Delta^{(k)}\bigr\|_1 - \bigl\|\Delta^{(l)}\bigr\|_1.
  \end{align*}

  \noindent Step 2: From the per-fold tail bound to a tail bound on
  $\bbbeta^{(\ell)}$.  By the triangle inequality and convexity of the
  $\ell_1$ norm,
  \begin{align*}
    \bigl\|\Delta^{(\ell)}\bigr\|_1
    = \biggl\|M^{-1}\!\sum_{m=1}^M
    (\hbbeta^{(\ell)[m]}-\bbeta^{(\ell)})\biggr\|_1
    \le M^{-1}\!\sum_{m=1}^M
    \bigl\|\hbbeta^{(\ell)[m]}-\bbeta^{(\ell)}\bigr\|_1.
  \end{align*}
  Hence the event $\{\|\hbbeta^{(\ell)[m]}-\bbeta^{(\ell)}\|_1\le r$
  for all $m=1,\ldots,M\}$ implies $\{\|\Delta^{(\ell)}\|_1\le
  r\}$. By the union bound and the assumed per-fold tail bound,
  \begin{align}
    \label{eq:supp:Delta-tail}
    \P\bigl(\bigl\|\Delta^{(\ell)}\bigr\|_1\le r\bigr)
    \ge 1 - M c_1\exp(-c_2 p).
  \end{align}

  \noindent Step 3: Combining four facilities by a union bound.
  Define the favourable event
  ${\cal E} = \bigcap_{\ell\in\{i,j,k,l\}}
  \{\|\Delta^{(\ell)}\|_1\le r\}$ . Applying the union bound to the four events and
  using \eqref{eq:supp:Delta-tail},
  \begin{align*}
    \P({\cal E})\ge 1 - 4M c_1\exp(-c_2 p)
    \;=:\; 1 - c_3\exp(-c_4 p),
  \end{align*}
  with $c_3 = 4M c_1$ and $c_4 = c_2$ (this re-labelling is harmless
  because $M$ is fixed and $\log M$ is absorbed into the constant
  $c_3$). On ${\cal E}$,
  $\|\Delta^{(i)}\|_1 + \|\Delta^{(j)}\|_1 + \|\Delta^{(k)}\|_1 +
  \|\Delta^{(l)}\|_1 \le 4r$, hence by Step~1,
  \begin{align*} {\cal D}_{i,j} - {\cal D}_{k,l} \le d_{i,j} -
    d_{k,l} + 4r < 0
  \end{align*}
  under the assumption $d_{i,j} < d_{k,l} - 4r$. Therefore
  \begin{align*}
    \P\bigl({\cal D}_{i,j} < {\cal D}_{k,l}\bigr)
    \ge \P({\cal E}) \ge 1 - c_3\exp(-c_4 p),
  \end{align*}
  which is the first claim of the theorem. The second claim
  (${\cal D}_{i,j}\ll{\cal D}_{k,l}$ when
  $r\ll d_{k,l}-d_{i,j}$) follows by exactly the same argument: on
  the event ${\cal E}$,
  ${\cal D}_{k,l} - {\cal D}_{i,j} \ge d_{k,l} - d_{i,j} -
  4r$, which is of the same order as $d_{k,l}-d_{i,j}$ when $r$ is
  of
  strictly smaller order.\\

  \noindent Step 4: From order preservation to consistent cluster
  detection. This step establishes part~(ii) of Theorem~4.2 of the main
  paper.  Suppose the true clusters ${\cal C}_1,{\cal C}_2$ satisfy
  the minimum-gap condition
  \begin{align}\label{eq:supp:gap}
    \min_{i\in{\cal C}_g,\,j\in{\cal C}_{g'},\,g\ne g'} d_{i,j}
    > \max_{i,j\in{\cal C}_g} d_{i,j} + 4r.
  \end{align}
  Apply the order-preservation bound of Step~3 to every quadruple
  $(i,j,k,l)$ such that $i,j$ lie in a common cluster and
  $k\in{\cal C}_g,\,l\in{\cal C}_{g'}$ with $g\ne g'$. There are
  at most $\binom{K+1}{2}^2 = O(K^4)$ such quadruples; a final union
  bound absorbs the $K^4$ factor into the exponential as long as
  $p\gtrsim\log K$ (which is automatic in the high-dimensional regime
  $p\gg K$). On the resulting event, every sample within-cluster
  distance is strictly smaller than every sample between-cluster
  distance. Single-linkage hierarchical clustering, and in particular
  the HDBSCAN rule of Section~3.4 of the main paper applied to the
  induced mutual-reachability graph, then recovers the true partition
  $\{{\cal C}_1,{\cal C}_2\}$ exactly. The probability of this event
  is at least $1 - c_5\exp(-c_6 p)$ for constants $c_5,c_6>0$
  depending only on $c_1,c_2,M$ and the upper bound on $K$.
\end{proof}

\subsection{Proof of Theorem~\ref{thm:TransGLMC}}

\begin{proof}
  Recall the three-step Trans-GLMC procedure of Section~3.3 of the
  main paper:
  \begin{itemize}
  \item[(a)] Pool the target with the (estimated) informative source
    set ${\cal A}$ (combined sample size $n_{\cal A}+n_0$, penalty
    $\lambda_1$) to obtain $\hbbeta^{\cal A}$ targeting the population
    pooled optimum $\bbeta^{\cal A}$;
  \item[(b)] Debias toward the target's cluster
    ${\cal C}_1\cap{\cal A}$ (combined sample size $n_{\cal C}+n_0$,
    penalty $\lambda_2$) to obtain $\hbalpha$ targeting the offset
    $\balpha = \bbeta^{\cal C} - \bbeta^{\cal A}$, where
    $\bbeta^{\cal C}$ is the within-cluster pooled optimum;
  \item[(c)] Debias to the target itself (sample size $n_0$, penalty
    $\lambda_3$) to obtain $\hbgamma$ targeting
    $\bgamma = \bbeta^{(0)} - \bbeta^{\cal C} = \bbeta^{(0)} -
    (\bbeta^{\cal A}+\balpha)$.
  \end{itemize}
  By definition of the within-cluster informativeness, $\|\bgamma\|_1
  \le d_{\cal C}$.\\

  \noindent Step 1: Cluster-level fusion error.  Steps~(a)--(b)
  constitute exactly Trans-GLM (Theorem~\ref{thm:TransGLM} of this
  Supplement) with the within-cluster + target population playing the
  role of the ``target'' (sample size $n_{\cal C}+n_0$ in place of
  $n_0$) and ${\cal A}$ playing the role of the source set (sample
  size $n_{\cal A}+n_0$). Applying Theorem~\ref{thm:TransGLM} under
  Assumptions~\ref{assum:B1}--\ref{assum:B4} and the stated
  sample-size and penalty conditions, with probability at least
  $1-n_0^{-1}$,
  \begin{align}\label{eq:supp:hu-bound}
    \bigl\|\hbu\bigr\|
    &\;\lesssim\;
      \Bigl(\tfrac{s\log p}{n_{\cal A}+n_0}\Bigr)^{1/2}
      + \Bigl[\bigl(\tfrac{\log p}{n_{\cal C}+n_0}\bigr)^{1/4}
      d^{1/2}\Bigr]\wedge d,
    \\
    \label{eq:supp:hu-l1}
    \bigl\|\hbu\bigr\|_1
    &\;\lesssim\;
      s\bigl(\tfrac{\log p}{n_{\cal A}+n_0}\bigr)^{1/2} + d,
  \end{align}
  where
  $\hbu = (\hbbeta^{\cal A}+\hbalpha)-(\bbeta^{\cal A}+\balpha)$.
  Squaring \eqref{eq:supp:hu-bound} (and absorbing the cross term,
  which is dominated by the first squared term up to constants),
  \begin{align}\label{eq:supp:hu-sq}
    \bigl\|\hbu\bigr\|^2
    \;\lesssim\;
    \tfrac{s\log p}{n_{\cal A}+n_0}
    + \Bigl[d\bigl(\tfrac{\log p}{n_{\cal C}+n_0}\bigr)^{1/2}\Bigr]
    \wedge d^2.
  \end{align}

  \noindent Step 2: Setup for the third-stage analysis.  Let
  $\hbv = \hbgamma - \bgamma$ and define the per-observation negative
  log-likelihood (up to constants) on the target,
  \begin{align*}
    \hat L^{(0)}(\bw,D^{(0)})
    = \tfrac{1}{n_0}\Bigl\{-(\by^{(0)})^\top\bX^{(0)}\bw
    + \sum_{i=1}^{n_0}\psi\bigl((\bx_i^{(0)})^\top\bw\bigr)\Bigr\},
  \end{align*}
  with gradient
  \begin{align*}
    \nabla\hat L^{(0)}(\bw,D^{(0)})
    = \tfrac{1}{n_0}\bigl\{-(\bX^{(0)})^\top\by^{(0)}
    + (\bX^{(0)})^\top\psi'(\bX^{(0)}\bw)\bigr\}.
  \end{align*}
  Define the first-order Taylor remainder around
  $\hbbeta^{\cal A}+\hbalpha+\bgamma$:
  \begin{align*}
    \delta\hat L^{(0)}(\bdelta,D^{(0)})
    = \, & \hat L^{(0)}\bigl(\hbbeta^{\cal A}+\hbalpha+\bgamma+\bdelta,
           D^{(0)}\bigr)
           - \hat L^{(0)}\bigl(\hbbeta^{\cal A}+\hbalpha+\bgamma,D^{(0)}\bigr)
    \\
    \phantom{=} \, &
                     - \nabla\hat L^{(0)}\bigl(\hbbeta^{\cal A}+\hbalpha+\bgamma,
                     D^{(0)}\bigr)^\top \bdelta.
  \end{align*}

  \noindent Step 3: Basic inequality.  By the optimality of $\hbgamma$
  in step~(c),
  \begin{align*}
    \hat L^{(0)}\bigl(\hbbeta^{\cal A}+\hbalpha+\hbgamma,D^{(0)}\bigr)
    + \lambda_3\|\hbgamma\|_1
    \le \hat L^{(0)}\bigl(\hbbeta^{\cal A}+\hbalpha+\bgamma,D^{(0)}\bigr)
    + \lambda_3\|\bgamma\|_1,
  \end{align*}
  which (subtracting and using the Taylor remainder) gives
  \begin{align*}
    \delta\hat L^{(0)}(\hbgamma,D^{(0)})
    \le \lambda_3\bigl(\|\bgamma\|_1 - \|\hbgamma\|_1\bigr)
    - \nabla\hat L^{(0)}\bigl(\hbbeta^{\cal A}+\hbalpha+\bgamma,
    D^{(0)}\bigr)^\top \hbv.
  \end{align*}
  Adding and subtracting
  $\nabla\hat L^{(0)}(\bbeta^{\cal
    A}+\balpha+\bgamma,D^{(0)})^\top\hbv$ on the right and using the
  triangle inequality
  $\|\bgamma\|_1-\|\hbgamma\|_1\le 2\|\bgamma\|_1 - \|\hbv\|_1$,
  \begin{align}
    \delta\hat L^{(0)}(\hbgamma,D^{(0)})
    \le \, & \lambda_3\bigl(2\|\bgamma\|_1 - \|\hbv\|_1\bigr)
             + \nabla\hat L^{(0)}\bigl(\bbeta^{\cal A}+\balpha+\bgamma,
             D^{(0)}\bigr)^\top \hbv
             \nonumber\\
    \phantom{=} \, &
                     - \bigl[\nabla\hat L^{(0)}(\hbbeta^{\cal A}+\hbalpha+\bgamma,D^{(0)})
                     - \nabla\hat L^{(0)}(\bbeta^{\cal A}+\balpha+\bgamma,D^{(0)})\bigr]^\top
                     \hbv.
                     \label{eq:supp:basic}
  \end{align}

  \noindent Step 4: Bounding the score term.  Since
  $\bbeta^{\cal A}+\balpha+\bgamma=\bbeta^{(0)}$, the quantity
  $\nabla\hat L^{(0)}(\bbeta^{(0)},D^{(0)})$ is the centered score on
  the target. Its $\ell_\infty$-norm concentrates at the rate
  $\sqrt{\log p/n_0}$ under Assumptions~\ref{assum:B1}--\ref{assum:B3}
  \cite[cf.\ proof of Lemma~6 in][]{negahban2009unified}: with
  probability at least $1-n_0^{-1}$,
  \begin{align}\label{eq:supp:score}
    \bigl\|\nabla\hat L^{(0)}(\bbeta^{(0)},D^{(0)})\bigr\|_\infty
    \le \tfrac{1}{2}\sqrt{\log p/n_0} = \tfrac{1}{2}\lambda_3,
  \end{align}
  so the second term in \eqref{eq:supp:basic} is bounded by
  $\tfrac{1}{2}\lambda_3 \|\hbv\|_1$ in absolute value.\\

  \noindent Step 5: Bounding the Hessian-difference term.  By the
  mean-value form of the gradient difference,
  \begin{align*}
    \nabla\hat L^{(0)}(\hbbeta^{\cal A}+\hbalpha+\bgamma,D^{(0)})
    - \nabla\hat L^{(0)}(\bbeta^{\cal A}+\balpha+\bgamma,D^{(0)})
    = \tfrac{1}{n_0}(\bX^{(0)})^\top \Lambda^{(0)} \bX^{(0)} \hbu,
  \end{align*}
  where
  \begin{align*}
    \Lambda^{(0)}
    = \mathrm{diag}\Bigl(\bigl\{\psi''\bigl((\bx_i^{(0)})^\top
    (\bbeta^{\cal A}+\balpha+\bgamma)
    + t_i (\bx_i^{(0)})^\top \hbu\bigr)\bigr\}_{i=1}^{n_0}\Bigr),
  \end{align*}
  with $t_i\in[0,1]$. Under Assumption~\ref{assum:B3} (using branch
  (ii) when applicable, with the side condition $d\lesssim U^{-1}V$
  guaranteeing the bounded-derivative neighbourhood),
  $\|\Lambda^{(0)}\|_{\rm op}\le\kappa_\psi$. Hence by Cauchy--Schwarz
  and the AM--GM inequality,
  \begin{align}\label{eq:supp:hess}
    \bigl|\tfrac{1}{n_0}\hbu^\top (\bX^{(0)})^\top\Lambda^{(0)}
    \bX^{(0)}\hbv\bigr|
    \le \tfrac{1}{4 c_0}\kappa_\psi^2
    \tfrac{1}{n_0}\|\bX^{(0)}\hbu\|^2
    + \tfrac{c_0}{n_0}\|\bX^{(0)}\hbv\|^2,
  \end{align}
  for any $c_0>0$.\\

  \noindent Step 6: Restricted strong convexity (RSC). Following
  \citet{loh2013regularized}[Proposition~1], under
  Assumptions~\ref{assum:B2}--\ref{assum:B3}, there exist constants
  $c_1,c_2>0$ such that for any vector $\tilde\bv$ with
  $\|\tilde\bv\|\le 1$ and with probability at least
  $1-c\exp(-c'n_0)$,
  \begin{align}\label{eq:supp:rsc}
    \delta\hat L^{(0)}(\bgamma+\tilde\bv,D^{(0)})
    - \nabla\hat L^{(0)}(\hbbeta^{\cal A}+\hbalpha+\bgamma,D^{(0)})^\top
    \tilde\bv
    \;\ge\;
    c_1\|\tilde\bv\|^2 - c_2\tfrac{\log p}{n_0}\|\tilde\bv\|_1^2.
  \end{align}
  Applying \eqref{eq:supp:rsc} to $\tilde\bv = t\hbv$ with $t\in(0,1]$
  chosen so that $\|\tilde\bv\|\le 1$, and combining
  \eqref{eq:supp:basic}, \eqref{eq:supp:score}, and
  \eqref{eq:supp:hess},
  \begin{align*}
    c_1\|\tilde\bv\|^2 - c_2\tfrac{\log p}{n_0}\|\tilde\bv\|_1^2
    \le \, & 2\lambda_3\|\bgamma\|_1 - \tfrac{1}{2}\lambda_3\|\tilde\bv\|_1
             + \tfrac{\kappa_\psi^2}{4c_0}\tfrac{\|\bX^{(0)}\hbu\|^2}{n_0}
             + \tfrac{c_0\|\bX^{(0)}\tilde\bv\|^2}{n_0}.
  \end{align*}

  \noindent Step 7: Sub-Gaussian design concentration.  Under
  Assumption~\ref{assum:B2}, with probability at least
  $1-C\exp(-n_0)$,
  \begin{align*}
    \tfrac{1}{n_0}\|\bX^{(0)}\hbu\|^2 \lesssim \|\hbu\|^2,
    \qquad
    \tfrac{1}{n_0}\|\bX^{(0)}\tilde\bv\|^2 \lesssim \|\tilde\bv\|^2.
  \end{align*}
  Combining with \eqref{eq:supp:hu-sq} and choosing $c_0 < c_1/2$ to
  absorb the $\|\tilde\bv\|^2$ contribution,
  \begin{align}\label{eq:supp:master}
    \tfrac{c_1}{2}\|\tilde\bv\|^2 - c_2\tfrac{\log p}{n_0}\|\tilde\bv\|_1^2
    \le \, & 2\lambda_3\|\bgamma\|_1 - \tfrac{1}{2}\lambda_3\|\tilde\bv\|_1
             + C\Bigl(\tfrac{s\log p}{n_{\cal A}+n_0}
             + \bigl[d\bigl(\tfrac{\log p}{n_{\cal C}+n_0}\bigr)^{1/2}\bigr]
             \wedge d^2\Bigr),
  \end{align}
  with probability at least $1-C'n_0^{-1}$.\\

  \noindent Step 8: Case analysis.  Let
  $R^2 := \tfrac{s\log p}{n_{\cal A}+n_0} + [d(\log p/(n_{\cal
    C}+n_0))^{1/2}]\wedge d^2$, so that \eqref{eq:supp:master} reads
  $\tfrac{c_1}{2}\|\tilde\bv\|^2 \le 2\lambda_3\|\bgamma\|_1 -
  \tfrac{1}{2}\lambda_3\|\tilde\bv\|_1 + c_2\tfrac{\log
    p}{n_0}\|\tilde\bv\|_1^2 + C R^2$.

  \emph{Case 1: $\lambda_3\|\bgamma\|_1\le R^2$.}  Then
  \eqref{eq:supp:master} simplifies to
  $\|\tilde\bv\|^2 + \tfrac{1}{2}\lambda_3\|\tilde\bv\|_1 \lesssim R^2
  + \tfrac{\log p}{n_0}\|\tilde\bv\|_1^2$, which (using
  $\lambda_3 = \sqrt{\log p/n_0}$ and standard cone/algebraic
  manipulation; see e.g.\ \citealt{loh2013regularized}, eqn.\ (32))
  implies
  $\|\tilde\bv\|_1 \lesssim \sqrt{n_0/\log p}\, [R^2 +
  \|\tilde\bv\|^2]$, and substituting back yields
  \begin{align*}
    \|\tilde\bv\|^2
    \lesssim
    \tfrac{s\log p}{n_{\cal A}+n_0}
    + \bigl[d\bigl(\tfrac{\log p}{n_{\cal C}+n_0}\bigr)^{1/2}\bigr]
    \wedge d^2.
  \end{align*}

  \emph{Case 2: $\lambda_3\|\bgamma\|_1 > R^2$.}  Drop the (negative)
  RSC term and use $\|\bgamma\|_1\le d_{\cal C}$ to obtain
  $\|\tilde\bv\|^2 \le 2\lambda_3\|\bgamma\|_1 -
  \tfrac{1}{2}\lambda_3\|\tilde\bv\|_1$, hence
  $\|\tilde\bv\|_1 \le 4\|\bgamma\|_1 \le 4 d_{\cal C}$. Substituting
  this $\ell_1$-bound into the same master inequality and using
  $\lambda_3 = \sqrt{\log p/n_0}$,
  \begin{align*}
    \|\tilde\bv\|^2
    \lesssim \lambda_3 d_{\cal C}
    \;=\; d_{\cal C}\sqrt{\log p/n_0},
  \end{align*}
  and combining with the trivial bound
  $\|\tilde\bv\|^2\le d_{\cal C}^2$ (using
  $\|\bgamma\|_1\le d_{\cal C}$ and
  $\|\hbgamma\|_1\le \|\bgamma\|_1+\|\hbv\|_1\lesssim d_{\cal C}$),
  \begin{align*}
    \|\tilde\bv\|^2
    \lesssim
    d_{\cal C}\sqrt{\log p/n_0}\wedge d_{\cal C}^2.
  \end{align*}

  \emph{Combining the two cases by taking the maximum,}
  \begin{align}\label{eq:supp:final-tildev}
    \|\tilde\bv\|^2
    \lesssim
    \tfrac{s\log p}{n_{\cal A}+n_0}
    + \bigl[d\bigl(\tfrac{\log p}{n_{\cal C}+n_0}\bigr)^{1/2}\bigr]
    \wedge d^2
    + d_{\cal C}\sqrt{\log p/n_0}\wedge d_{\cal C}^2,
  \end{align}
  with probability at least $1-C'n_0^{-1}$.\\

  \noindent Step 9: Rescaling.  The bound \eqref{eq:supp:final-tildev}
  holds for any $\tilde\bv = t\hbv$ with $t\in(0,1]$ such that
  $\|\tilde\bv\|\le 1$.  By the rescaling argument of
  \citet{loh2013regularized} (their proof of Theorem~1), if the
  right-hand side of \eqref{eq:supp:final-tildev} is bounded by a
  constant strictly less than $1$ (which is the case under the
  sample-size conditions $d\lesssim U^{-1}V$, $d\ll\sqrt{n_0/\log p}$,
  $n_{\cal A}\gtrsim s\log p$, $n_{\cal C}\gtrsim s\log p$ stated in
  the theorem), the constraint $\|\tilde\bv\|\le 1$ is met by taking
  $t=1$, and the bound \eqref{eq:supp:final-tildev} extends to
  $\hbv = \hbgamma-\bgamma$ without rate loss. Finally, since
  $\hbbeta_{TGLMC}^{(0)} - \bbeta^{(0)} = \hbu + \hbv$, by the
  triangle inequality and \eqref{eq:supp:hu-sq},
  \begin{align*}
    \bigl\|\hbbeta_{TGLMC}^{(0)} - \bbeta^{(0)}\bigr\|
    \le & \|\hbu\| + \|\hbv\|\\
    \;\lesssim\; &
                   \Bigl(\tfrac{s\log p}{n_{\cal A}+n_0}\Bigr)^{1/2}
                   + \Bigl\{\bigl[\bigl(\tfrac{\log p}{n_{\cal C}+n_0}\bigr)^{1/4}
                   d^{1/2}\bigr]\wedge d\Bigr\}
                   + \Bigl\{\bigl[\bigl(\tfrac{\log p}{n_0}\bigr)^{1/4}
                   d_{\cal C}^{1/2}\bigr]\wedge d_{\cal C}\Bigr\},
  \end{align*}
  with probability at least $1-n_0^{-1}$, completing the proof.
\end{proof}

\begin{remark}[On the proof of Theorem~\ref{thm:TransGLM}]
  The proof of Theorem~\ref{thm:TransGLM} (Trans-GLM rate) is the
  $d_{\cal C}=d$, $n_{\cal C}=n_{\cal A}$ specialization of the proof
  above and follows directly from \citet{tian2023transfer}[Theorem~1]
  under Assumptions~\ref{assum:B1}--\ref{assum:B4}; we therefore omit
  it.
\end{remark}

\begin{remark}[On the assumptions of Theorem~\ref{thm:TransGLMC}]
  Assumption~\ref{assum:B1} (strict convexity) is used to convert the
  basic inequality into a Taylor-remainder lower
  bound. Assumption~\ref {assum:B2} (sub-Gaussian, well-conditioned
  design) underwrites the score concentration in Step~4 and the design
  concentration in Step~7. Assumption~\ref{assum:B3} (bounded second
  derivative) bounds $\|\Lambda^{(0)}\|_{\rm op}$ in Step~5 and
  provides the RSC constants in Step~6. Assumption~\ref{assum:B4}
  (bounded weighted covariance shift) is used in Step~1 through
  Theorem~\ref{thm:TransGLM} to obtain the cluster-level fusion error
  \eqref{eq:supp:hu-bound}.
\end{remark}

\section{Numerical Results from Simulation}
\label{appd:simu}

This section reports the full simulation results described in
Section~4 of the main paper.  Tables~\ref{tab:tglmcd21}
and~\ref{tab:tglmcd23} present the average mean squared error (MSE)
and its standard deviation over 100 replications for the two
simulation scenarios: the strong within-cluster signal setting
($d_1=10$, $d_2=1$, Table~\ref{tab:tglmcd21}) and the weak
within-cluster signal setting ($d_1=10$, $d_2=3$,
Table~\ref{tab:tglmcd23}).  All methods are compared across varying
numbers of informative source facilities $|\mathcal{A}|$.

\begin{table}[H]
  \centering \fontsize{7}{8}\selectfont
  \begin{threeparttable}
    \renewcommand\arraystretch{1}
    \caption{Average MSE (with standard deviation in parentheses) over
      100 replications for the simulation setup of Section~4 of the
      main paper, with $d_1=10$ and $d_2=1$ (strong within-cluster
      signal).}
    \label{tab:tglmcd21}
    {\setlength{\tabcolsep}{2pt}
    \begin{tabular}{ccccccccccccccc}
      \toprule  
      \multicolumn{1}{c}{$d_1$}&\multicolumn{1}{c}{$d_2$}&\multicolumn{1}{c}{$|\mathcal{A}|$}&\multicolumn{2}{c}{Target-only}&\multicolumn{2}{c}{Trans-GLM}&\multicolumn{2}{c}{Trans-GLM-Q}&\multicolumn{2}{c}{Trans-GLM-IDW}&\multicolumn{2}{c}{Trans-GLM-SPH}&\multicolumn{2}{c}{Trans-GLMC}\cr
                                                                                                                                                                                                                                                                 \midrule  
                                                                                                                                                                                                                                                                 $10$ & $1$ & $0$ &\multicolumn{2}{c}{12.95} & \multicolumn{2}{c}{11.77} & \multicolumn{2}{c}{12.54} & \multicolumn{2}{c}{11.39} & \multicolumn{2}{c}{11.28} & \multicolumn{2}{c}{11.77}\\[0.1cm]
                               &  &  &\multicolumn{2}{c}{(1.34)} & \multicolumn{2}{c}{(5.23)} & \multicolumn{2}{c}{(2.25)} & \multicolumn{2}{c}{(1.70)} & \multicolumn{2}{c}{(1.90)} & \multicolumn{2}{c}{(5.36)}\\[0.1cm]
      $10$ & $1$ & $1$ &\multicolumn{2}{c}{12.97} & \multicolumn{2}{c}{11.68} & \multicolumn{2}{c}{11.71} & \multicolumn{2}{c}{11.29} & \multicolumn{2}{c}{11.31} & \multicolumn{2}{c}{10.64}\\[0.1cm]
                               &  &  &\multicolumn{2}{c}{(1.44)} & \multicolumn{2}{c}{(4.20)} & \multicolumn{2}{c}{(1.22)} & \multicolumn{2}{c}{(1.96)} & \multicolumn{2}{c}{(1.88)} & \multicolumn{2}{c}{(3.01)}\\[0.1cm]
      $10$ & $1$ & $2$ &\multicolumn{2}{c}{13.13} & \multicolumn{2}{c}{11.35} & \multicolumn{2}{c}{11.25} & \multicolumn{2}{c}{10.66} & \multicolumn{2}{c}{10.84} & \multicolumn{2}{c}{10.49}\\[0.1cm]
                               &  &  &\multicolumn{2}{c}{(1.49)} & \multicolumn{2}{c}{(3.77)} & \multicolumn{2}{c}{(1.81)} & \multicolumn{2}{c}{(1.74)} & \multicolumn{2}{c}{(1.69)} & \multicolumn{2}{c}{(3.98)}\\[0.1cm]
      $10$ & $1$ & $4$ &\multicolumn{2}{c}{12.95} & \multicolumn{2}{c}{10.10} & \multicolumn{2}{c}{9.97} & \multicolumn{2}{c}{10.40} & \multicolumn{2}{c}{10.06} & \multicolumn{2}{c}{8.77}\\[0.1cm]
                               &  &  &\multicolumn{2}{c}{(1.37)} & \multicolumn{2}{c}{(2.30)} & \multicolumn{2}{c}{(0.89)} & \multicolumn{2}{c}{(2.69)} & \multicolumn{2}{c}{(2.71)} & \multicolumn{2}{c}{(1.91)}\\[0.1cm]
      $10$ & $1$ & $6$ &\multicolumn{2}{c}{13.00} & \multicolumn{2}{c}{9.56} & \multicolumn{2}{c}{9.20} & \multicolumn{2}{c}{9.67} & \multicolumn{2}{c}{9.69} & \multicolumn{2}{c}{8.19}\\[0.1cm]
                               &  &  &\multicolumn{2}{c}{(1.41)} & \multicolumn{2}{c}{(2.35)} & \multicolumn{2}{c}{(0.79)} & \multicolumn{2}{c}{(2.77)} & \multicolumn{2}{c}{(2.20)} & \multicolumn{2}{c}{(1.90)}\\[0.1cm]
      $10$ & $1$ & $8$ &\multicolumn{2}{c}{12.84} & \multicolumn{2}{c}{9.65} & \multicolumn{2}{c}{8.55} & \multicolumn{2}{c}{9.00} & \multicolumn{2}{c}{9.05} & \multicolumn{2}{c}{7.84}\\[0.1cm]
                               &  &  &\multicolumn{2}{c}{(1.47)} & \multicolumn{2}{c}{(5.58)} & \multicolumn{2}{c}{(0.90)} & \multicolumn{2}{c}{(1.74)} & \multicolumn{2}{c}{(2.59)} & \multicolumn{2}{c}{(2.02)}\\[0.1cm]
      $10$ & $1$ & $10$ &\multicolumn{2}{c}{13.10} & \multicolumn{2}{c}{8.69} & \multicolumn{2}{c}{8.13} & \multicolumn{2}{c}{8.38} & \multicolumn{2}{c}{8.31} & \multicolumn{2}{c}{6.78}\\[0.1cm]
                               &  &  &\multicolumn{2}{c}{(1.48)} & \multicolumn{2}{c}{(2.40)} & \multicolumn{2}{c}{(0.91)} & \multicolumn{2}{c}{(1.89)} & \multicolumn{2}{c}{(1.75)} & \multicolumn{2}{c}{(1.35)}\\[0.1cm]
      $10$ & $1$ & $12$ &\multicolumn{2}{c}{13.10} & \multicolumn{2}{c}{7.88} & \multicolumn{2}{c}{7.71} & \multicolumn{2}{c}{8.23} & \multicolumn{2}{c}{7.93} & \multicolumn{2}{c}{6.36}\\[0.1cm]
                               &  &  &\multicolumn{2}{c}{(1.43)} & \multicolumn{2}{c}{(2.27)} & \multicolumn{2}{c}{(1.05)} & \multicolumn{2}{c}{(1.92)} & \multicolumn{2}{c}{(1.65)} & \multicolumn{2}{c}{(1.54)}\\[0.1cm]
      $10$ & $1$ & $14$ &\multicolumn{2}{c}{12.92} & \multicolumn{2}{c}{7.40} & \multicolumn{2}{c}{7.39} & \multicolumn{2}{c}{7.78} & \multicolumn{2}{c}{7.48} & \multicolumn{2}{c}{6.40}\\[0.1cm]
                               &  &  &\multicolumn{2}{c}{(1.37)} & \multicolumn{2}{c}{(2.71)} & \multicolumn{2}{c}{(0.91)} & \multicolumn{2}{c}{(3.24)} & \multicolumn{2}{c}{(2.26)} & \multicolumn{2}{c}{(2.53)}\\[0.1cm]
      $10$ & $1$ & $16$ &\multicolumn{2}{c}{13.00} & \multicolumn{2}{c}{6.84} & \multicolumn{2}{c}{7.19} & \multicolumn{2}{c}{7.23} & \multicolumn{2}{c}{7.76} & \multicolumn{2}{c}{6.13}\\[0.1cm]
                               &  &  &\multicolumn{2}{c}{(1.32)} & \multicolumn{2}{c}{(2.25)} & \multicolumn{2}{c}{(1.09)} & \multicolumn{2}{c}{(1.89)} & \multicolumn{2}{c}{(3.44)} & \multicolumn{2}{c}{(2.37)}\\[0.1cm]
      $10$ & $1$ & $18$ &\multicolumn{2}{c}{12.73} & \multicolumn{2}{c}{6.24} & \multicolumn{2}{c}{6.86} & \multicolumn{2}{c}{6.53} & \multicolumn{2}{c}{6.90} & \multicolumn{2}{c}{5.51}\\[0.1cm]
                               &  &  &\multicolumn{2}{c}{(1.37)} & \multicolumn{2}{c}{(2.94)} & \multicolumn{2}{c}{(1.13)} & \multicolumn{2}{c}{(1.69)} & \multicolumn{2}{c}{(1.57)} & \multicolumn{2}{c}{(3.40)}\\[0.1cm]
      $10$ & $1$ & $20$ &\multicolumn{2}{c}{12.90} & \multicolumn{2}{c}{5.28} & \multicolumn{2}{c}{6.72} & \multicolumn{2}{c}{5.78} & \multicolumn{2}{c}{6.35} & \multicolumn{2}{c}{4.50}\\[0.1cm]
                               &  &  &\multicolumn{2}{c}{(1.38)} & \multicolumn{2}{c}{(2.13)} & \multicolumn{2}{c}{(1.40)} & \multicolumn{2}{c}{(1.33)} & \multicolumn{2}{c}{(1.40)} & \multicolumn{2}{c}{(2.09)}\\[0.1cm]
      \bottomrule  
    \end{tabular}}  
  \end{threeparttable}
\end{table}

\begin{table}[H]
  \centering \fontsize{7}{8}\selectfont
  \begin{threeparttable}
    \renewcommand\arraystretch{1}
    \caption{Average MSE (with standard deviation in parentheses) over
      100 replications for the simulation setup of Section~4 of the
      main paper, with $d_1=10$ and $d_2=3$ (weak within-cluster
      signal).}
    \label{tab:tglmcd23}
    {\setlength{\tabcolsep}{2pt}
    \begin{tabular}{ccccccccccccccc}
      \toprule  
      \multicolumn{1}{c}{$d_1$}&\multicolumn{1}{c}{$d_2$}&\multicolumn{1}{c}{$|\mathcal{A}|$}&\multicolumn{2}{c}{Target-only}&\multicolumn{2}{c}{Trans-GLM}&\multicolumn{2}{c}{Trans-GLM-Q}&\multicolumn{2}{c}{Trans-GLM-IDW}&\multicolumn{2}{c}{Trans-GLM-SPH}&\multicolumn{2}{c}{Trans-GLMC}\cr
                                                                                                                                                                                                                                                                 \midrule  
                                                                                                                                                                                                                                                                 $10$ & $3$ & $0$ &\multicolumn{2}{c}{13.14} & \multicolumn{2}{c}{11.74} & \multicolumn{2}{c}{12.70} & \multicolumn{2}{c}{12.34} & \multicolumn{2}{c}{12.06} & \multicolumn{2}{c}{11.71}\\[0.1cm]
                               &  &  &\multicolumn{2}{c}{(1.30)} & \multicolumn{2}{c}{(2.39)} & \multicolumn{2}{c}{(1.56)} & \multicolumn{2}{c}{(2.04)} & \multicolumn{2}{c}{(2.29)} & \multicolumn{2}{c}{(2.13)}\\[0.1cm]
      $10$ & $3$ & $1$ &\multicolumn{2}{c}{13.34} & \multicolumn{2}{c}{11.56} & \multicolumn{2}{c}{12.50} & \multicolumn{2}{c}{11.96} & \multicolumn{2}{c}{11.78} & \multicolumn{2}{c}{11.03}\\[0.1cm]
                               &  &  &\multicolumn{2}{c}{(1.43)} & \multicolumn{2}{c}{(1.83)} & \multicolumn{2}{c}{(1.53)} & \multicolumn{2}{c}{(2.27)} & \multicolumn{2}{c}{(2.03)} & \multicolumn{2}{c}{(1.99)}\\[0.1cm]
      $10$ & $3$ & $2$ &\multicolumn{2}{c}{13.47} & \multicolumn{2}{c}{11.64} & \multicolumn{2}{c}{11.85} & \multicolumn{2}{c}{11.72} & \multicolumn{2}{c}{11.67} & \multicolumn{2}{c}{11.04}\\[0.1cm]
                               &  &  &\multicolumn{2}{c}{(1.31)} & \multicolumn{2}{c}{(2.40)} & \multicolumn{2}{c}{(1.34)} & \multicolumn{2}{c}{(2.17)} & \multicolumn{2}{c}{(2.25)} & \multicolumn{2}{c}{(2.14)}\\[0.1cm]
      $10$ & $3$ & $4$ &\multicolumn{2}{c}{13.52} & \multicolumn{2}{c}{11.17} & \multicolumn{2}{c}{11.15} & \multicolumn{2}{c}{11.00} & \multicolumn{2}{c}{11.14} & \multicolumn{2}{c}{9.86}\\[0.1cm]
                               &  &  &\multicolumn{2}{c}{(1.31)} & \multicolumn{2}{c}{(2.49)} & \multicolumn{2}{c}{(1.58)} & \multicolumn{2}{c}{(2.09)} & \multicolumn{2}{c}{(3.27)} & \multicolumn{2}{c}{(2.80)}\\[0.1cm]
      $10$ & $3$ & $6$ &\multicolumn{2}{c}{13.54} & \multicolumn{2}{c}{10.68} & \multicolumn{2}{c}{10.50} & \multicolumn{2}{c}{10.62} & \multicolumn{2}{c}{10.88} & \multicolumn{2}{c}{9.26}\\[0.1cm]
                               &  &  &\multicolumn{2}{c}{(1.36)} & \multicolumn{2}{c}{(2.18)} & \multicolumn{2}{c}{(1.30)} & \multicolumn{2}{c}{(1.73)} & \multicolumn{2}{c}{(2.68)} & \multicolumn{2}{c}{(2.01)}\\[0.1cm]
      $10$ & $3$ & $8$ &\multicolumn{2}{c}{13.55} & \multicolumn{2}{c}{10.46} & \multicolumn{2}{c}{9.96} & \multicolumn{2}{c}{10.43} & \multicolumn{2}{c}{10.33} & \multicolumn{2}{c}{8.98}\\[0.1cm]
                               &  &  &\multicolumn{2}{c}{(1.33)} & \multicolumn{2}{c}{(2.31)} & \multicolumn{2}{c}{(1.26)} & \multicolumn{2}{c}{(1.93)} & \multicolumn{2}{c}{(2.13)} & \multicolumn{2}{c}{(2.43)}\\[0.1cm]
      $10$ & $3$ & $10$ &\multicolumn{2}{c}{13.46} & \multicolumn{2}{c}{9.50} & \multicolumn{2}{c}{9.20} & \multicolumn{2}{c}{10.02} & \multicolumn{2}{c}{9.48} & \multicolumn{2}{c}{8.37}\\[0.1cm]
                               &  &  &\multicolumn{2}{c}{(1.21)} & \multicolumn{2}{c}{(1.63)} & \multicolumn{2}{c}{(1.06)} & \multicolumn{2}{c}{(1.75)} & \multicolumn{2}{c}{(1.43)} & \multicolumn{2}{c}{(2.68)}\\[0.1cm]
      $10$ & $3$ & $12$ &\multicolumn{2}{c}{13.40} & \multicolumn{2}{c}{8.97} & \multicolumn{2}{c}{8.70} & \multicolumn{2}{c}{9.32} & \multicolumn{2}{c}{8.88} & \multicolumn{2}{c}{7.93}\\[0.1cm]
                               &  &  &\multicolumn{2}{c}{(1.29)} & \multicolumn{2}{c}{(1.97)} & \multicolumn{2}{c}{(0.95)} & \multicolumn{2}{c}{(2.11)} & \multicolumn{2}{c}{(1.49)} & \multicolumn{2}{c}{(2.83)}\\[0.1cm]
      $10$ & $3$ & $14$ &\multicolumn{2}{c}{13.50} & \multicolumn{2}{c}{8.48} & \multicolumn{2}{c}{8.61} & \multicolumn{2}{c}{9.33} & \multicolumn{2}{c}{8.57} & \multicolumn{2}{c}{7.54}\\[0.1cm]
                               &  &  &\multicolumn{2}{c}{(1.18)} & \multicolumn{2}{c}{(1.69)} & \multicolumn{2}{c}{(1.24)} & \multicolumn{2}{c}{(2.21)} & \multicolumn{2}{c}{(2.35)} & \multicolumn{2}{c}{(2.39)}\\[0.1cm]
      $10$ & $3$ & $16$ &\multicolumn{2}{c}{13.42} & \multicolumn{2}{c}{8.21} & \multicolumn{2}{c}{8.39} & \multicolumn{2}{c}{8.80} & \multicolumn{2}{c}{8.55} & \multicolumn{2}{c}{7.27}\\[0.1cm]
                               &  &  &\multicolumn{2}{c}{(1.22)} & \multicolumn{2}{c}{(1.62)} & \multicolumn{2}{c}{(1.27)} & \multicolumn{2}{c}{(2.97)} & \multicolumn{2}{c}{(2.73)} & \multicolumn{2}{c}{(1.71)}\\[0.1cm]
      $10$ & $3$ & $18$ &\multicolumn{2}{c}{13.54} & \multicolumn{2}{c}{7.37} & \multicolumn{2}{c}{8.37} & \multicolumn{2}{c}{8.16} & \multicolumn{2}{c}{8.13} & \multicolumn{2}{c}{6.70}\\[0.1cm]
                               &  &  &\multicolumn{2}{c}{(1.31)} & \multicolumn{2}{c}{(1.22)} & \multicolumn{2}{c}{(1.82)} & \multicolumn{2}{c}{(2.50)} & \multicolumn{2}{c}{(2.15)} & \multicolumn{2}{c}{(1.70)}\\[0.1cm]
      $10$ & $3$ & $20$ &\multicolumn{2}{c}{13.76} & \multicolumn{2}{c}{6.84} & \multicolumn{2}{c}{8.02} & \multicolumn{2}{c}{7.72} & \multicolumn{2}{c}{7.85} & \multicolumn{2}{c}{6.43}\\[0.1cm]
                               &  &  &\multicolumn{2}{c}{(1.19)} & \multicolumn{2}{c}{(2.51)} & \multicolumn{2}{c}{(1.40)} & \multicolumn{2}{c}{(2.28)} & \multicolumn{2}{c}{(2.53)} & \multicolumn{2}{c}{(2.78)}\\[0.1cm]
      \bottomrule  
    \end{tabular}}  
  \end{threeparttable}
\end{table}

\section{Additional Results from CHIME Suicide Risk Study}
\label{appd:chime}

This section provides supplementary tables and figures for the CHIME
suicide risk application in Section~5 of the main paper.
Table~\ref{app:tab:demo_each_faci} summarizes facility-specific
demographics for all 27 hospitals.
Tables~\ref{app:tab:auroc_each_faci}--\ref{app:tab:ppv95_each_faci}
report facility-specific predictive performance across all competing
methods on six metrics: AUROC, AUPRC, sensitivity and PPV at 90\%
specificity, and sensitivity and PPV at 95\% specificity. Values are means, 
with standard deviations across 10 replicates shown in parentheses.
Figures~\ref{app:fig:allmethods_auroc_auprc},
\ref{app:fig:allmethods_90spec}, and \ref{app:fig:allmethods_95spec}
report facility-specific differences between Trans-GLMC and each
comparison model (Target-only, Trans-GLM, Trans-GLM-Q, Trans-GLM-IDW,
Trans-GLM-SPH) across all six metrics: AUROC and AUPRC
(Figure~\ref{app:fig:allmethods_auroc_auprc}); sensitivity and PPV at
90\% specificity (Figure~\ref{app:fig:allmethods_90spec}); and
sensitivity and PPV at 95\% specificity
(Figure~\ref{app:fig:allmethods_95spec}). Points are colored by the
sign of the mean difference (Trans-GLMC better vs.\ Trans-GLMC worse);
error bars are normal-approximation 95\% confidence intervals across
the 10 replicates.

\begin{landscape}
  \begin{table}[H]
    \centering
    \caption{Facility-specific demographic characteristics of the patient cohorts across the 27 study hospitals in Connecticut. Facilities are grouped by transferability community and, within each community, ordered by suicide-attempt rate from highest to lowest. The facility index column corresponds to the labels used throughout the manuscript and remaining supplementary tables.}
    \label{app:tab:demo_each_faci}
    \tiny
    \setlength{\tabcolsep}{2pt}
    \begin{tabular}{cclrrrrrrrrr}
      \toprule
      Index & Community & Hospital & Sample size & Suicide Attempts & \multicolumn{4}{c}{Age group (\%)} & Male & Non-Hispanic White & Medicaid insurance \\
      \cmidrule(lr){6-9}
            & & & $N$ & $N$ (\%) & 18--24 & 25--39 & 40--54 & 55--64 & (\%) & (\%) & (\%) \\
      \midrule
      1 & 1 & William W. Backus Hospital & 27767 & 296 (1.07) & 15.5 & 34.8 & 30.9 & 18.9 & 45.1 & 78.8 & 36.2\\
      25 & 1 & Windham Hospital & 11763 & 109 (0.93) & 26.5 & 31.1 & 26.5 & 15.8 & 46.2 & 64.2 & 37.7\\
      5 & 1 & Day Kimball Hospital & 8721 & 58 (0.67) & 16.2 & 34.6 & 29.5 & 19.8 & 45.4 & 93.7 & 34.5\\
      8 & 1 & Hartford Hospital & 50808 & 325 (0.64) & 16.5 & 36.8 & 28.1 & 18.5 & 44.0 & 38.6 & 42.2\\
      13 & 1 & MidState Medical Center & 19974 & 119 (0.60) & 16.2 & 35.4 & 30.6 & 17.8 & 44.4 & 57.2 & 38.2\\
      20 & 1 & Saint Mary's Hospital & 24698 & 130 (0.53) & 17.9 & 36.6 & 29.2 & 16.4 & 44.8 & 44.6 & 48.3\\
      16 & 1 & Hospital of Central Connecticut & 34219 & 176 (0.51) & 17.1 & 37.7 & 28.6 & 16.6 & 44.0 & 48.9 & 45.0\\
      10 & 1 & Johnson Memorial Hospital & 8100 & 38 (0.47) & 15.1 & 34.6 & 31.1 & 19.2 & 47.1 & 86.7 & 31.4\\
      9 & 1 & Charlotte Hungerford Hospital & 13167 & 61 (0.46) & 15.4 & 32.8 & 30.7 & 21.1 & 46.3 & 86.6 & 36.2\\
      26 & 1 & Yale-New Haven Hospital & 78288 & 362 (0.46) & 17.0 & 35.6 & 28.0 & 19.4 & 43.4 & 49.0 & 36.6\\
      3 & 1 & Bristol Hospital & 14135 & 60 (0.42) & 16.3 & 35.7 & 30.5 & 17.4 & 44.7 & 75.4 & 39.9\\
      12 & 1 & Manchester Memorial Hospital & 18371 & 67 (0.36) & 16.2 & 39.1 & 27.8 & 16.8 & 41.7 & 56.9 & 39.2\\
      19 & 1 & Saint Francis Hospital \& Medical Ctr & 39116 & 134 (0.34) & 15.4 & 35.5 & 28.8 & 20.3 & 43.2 & 34.8 & 43.9\\
      7 & 1 & Griffin Hospital & 14844 & 31 (0.21) & 14.3 & 34.7 & 31.0 & 19.9 & 44.7 & 73.2 & 38.7\\
      \addlinespace
      4 & 2 & Danbury Hospital & 30713 & 121 (0.39) & 15.9 & 32.9 & 30.5 & 20.7 & 44.8 & 67.7 & 24.2\\
      18 & 2 & Rockville General Hospital & 8245 & 29 (0.35) & 16.3 & 35.0 & 29.9 & 18.8 & 45.3 & 77.7 & 37.0\\
      11 & 2 & Lawrence \& Memorial Hospital & 29865 & 95 (0.32) & 16.8 & 36.6 & 28.0 & 18.5 & 44.3 & 70.3 & 33.8\\
      14 & 2 & Middlesex Hospital & 32674 & 98 (0.30) & 15.3 & 30.8 & 31.4 & 22.6 & 45.7 & 82.5 & 26.7\\
      27 & 2 & Sharon Hospital & 5129 & 15 (0.29) & 16.6 & 31.4 & 30.1 & 21.9 & 45.8 & 85.2 & 14.0\\
      2 & 2 & Bridgeport Hospital & 34604 & 85 (0.25) & 17.0 & 37.7 & 29.5 & 15.8 & 41.6 & 32.5 & 45.4\\
      22 & 2 & The Stamford Hospital & 21204 & 53 (0.25) & 14.1 & 41.0 & 28.8 & 16.2 & 41.7 & 40.8 & 28.5\\
      17 & 2 & Norwalk Hospital & 19264 & 46 (0.24) & 14.1 & 35.3 & 30.8 & 19.9 & 43.9 & 66.9 & 24.8\\
      21 & 2 & St. Vincent's Medical Center & 28105 & 65 (0.23) & 16.8 & 34.2 & 30.8 & 18.2 & 45.4 & 40.2 & 38.9\\
      23 & 2 & UConn John Dempsey Hospital & 16441 & 38 (0.23) & 13.2 & 33.8 & 31.6 & 21.4 & 45.0 & 73.8 & 32.5\\
      24 & 2 & Waterbury Hospital & 19395 & 42 (0.22) & 16.6 & 36.8 & 29.0 & 17.7 & 44.3 & 51.3 & 45.8\\
      15 & 2 & Milford Hospital & 9809 & 12 (0.12) & 14.2 & 32.3 & 32.1 & 21.3 & 47.3 & 74.9 & 28.9\\
      6 & 2 & Greenwich Hospital & 17339 & 13 (0.07) & 12.3 & 39.0 & 30.6 & 18.0 & 38.6 & 58.1 & 16.2\\
      \bottomrule
    \end{tabular}
  \end{table}
\end{landscape}

\begin{table}[H]
  \tiny\centering
  \caption{Facility-specific AUROC performance across different
    models.}
  \label{app:tab:auroc_each_faci}
  \begin{tabular}{l r r r r r r}
    \toprule
    Facility & Target-only & Trans-GLM & Trans-GLM-Q & Trans-GLM-IDW & Trans-GLM-SPH & Trans-GLMC\\
    \midrule
    1 & 81.90 (4.22) & 81.78 (4.37) & 81.70 (4.22) & 82.34 (4.34) & 82.33 (4.31) & 82.60 (4.55)\\
    2 & 77.48 (7.87) & 76.48 (8.68) & 76.46 (8.20) & 78.88 (4.76) & 79.21 (6.37) & 79.19 (7.20)\\
    3 & 70.71 (13.59) & 73.64 (11.78) & 77.16 (9.47) & 74.44 (9.73) & 70.94 (8.17) & 76.51 (10.58)\\
    4 & 78.78 (7.70) & 82.58 (8.61) & 83.82 (7.58) & 81.58 (8.63) & 81.29 (8.74) & 84.81 (7.55)\\
    5 & 73.79 (10.24) & 80.20 (9.92) & 79.80 (10.81) & 77.73 (9.21) & 78.25 (8.86) & 80.25 (10.34)\\
    6 & 50.00 (0.00) & 59.78 (21.03) & 59.55 (21.11) & 59.31 (21.01) & 54.55 (15.86) & 56.81 (23.92)\\
    7 & 53.15 (7.87) & 77.61 (11.07) & 80.34 (8.90) & 75.89 (9.29) & 70.21 (12.03) & 79.28 (9.55)\\
    8 & 85.00 (3.42) & 84.40 (3.26) & 84.51 (3.09) & 83.72 (3.87) & 84.52 (3.77) & 84.51 (3.38)\\
    9 & 73.61 (8.37) & 80.00 (7.54) & 81.10 (7.63) & 80.97 (6.74) & 79.60 (6.75) & 80.61 (6.74)\\
    \addlinespace
    10 & 69.49 (14.44) & 78.19 (12.66) & 76.95 (14.88) & 76.15 (12.91) & 73.37 (12.41) & 77.91 (12.52)\\
    11 & 79.37 (6.51) & 79.03 (7.35) & 79.86 (6.54) & 79.98 (7.27) & 80.66 (6.71) & 81.42 (7.46)\\
    12 & 79.86 (6.53) & 84.29 (7.70) & 83.22 (8.09) & 79.79 (6.89) & 81.45 (6.72) & 82.12 (8.04)\\
    13 & 74.80 (8.73) & 78.66 (8.63) & 79.49 (8.41) & 78.89 (8.82) & 77.08 (10.10) & 79.40 (7.95)\\
    14 & 76.53 (7.95) & 81.20 (6.26) & 81.58 (2.93) & 79.72 (6.06) & 78.79 (5.59) & 82.07 (4.23)\\
    15 & 52.73 (8.62) & 68.63 (24.80) & 72.95 (25.50) & 68.58 (25.09) & 66.88 (24.84) & 65.08 (29.75)\\
    16 & 77.87 (7.91) & 79.47 (6.67) & 79.48 (6.76) & 81.26 (6.32) & 80.45 (6.43) & 80.92 (5.33)\\
    17 & 72.68 (5.86) & 75.61 (9.98) & 77.96 (10.77) & 74.48 (8.68) & 74.09 (8.96) & 77.47 (12.62)\\
    18 & 67.39 (10.81) & 75.90 (11.86) & 77.45 (11.40) & 76.31 (12.63) & 75.45 (12.79) & 80.49 (12.25)\\
    \addlinespace
    19 & 76.22 (7.40) & 77.47 (8.16) & 77.44 (6.43) & 76.69 (8.09) & 77.55 (8.55) & 79.58 (6.91)\\
    20 & 78.11 (6.99) & 79.23 (4.03) & 79.74 (3.00) & 77.59 (5.03) & 76.84 (6.99) & 80.36 (4.00)\\
    21 & 75.39 (9.51) & 77.37 (9.51) & 77.29 (9.53) & 77.46 (10.26) & 76.96 (10.47) & 78.66 (8.74)\\
    22 & 73.28 (10.61) & 76.43 (14.62) & 76.46 (11.68) & 74.77 (12.14) & 71.67 (10.53) & 75.87 (15.12)\\
    23 & 75.25 (8.71) & 78.55 (4.94) & 80.23 (9.34) & 79.93 (7.90) & 77.01 (9.59) & 74.42 (6.31)\\
    24 & 67.88 (10.99) & 71.02 (8.60) & 77.06 (9.82) & 69.10 (9.65) & 70.64 (10.80) & 72.59 (10.15)\\
    25 & 78.15 (7.13) & 77.67 (6.58) & 77.21 (7.57) & 78.67 (7.67) & 78.71 (7.50) & 79.70 (7.72)\\
    26 & 85.05 (3.62) & 85.19 (3.88) & 85.18 (3.43) & 85.73 (3.77) & 85.68 (3.72) & 86.41 (3.11)\\
    27 & 44.21 (13.49) & 71.95 (25.17) & 70.16 (22.17) & 61.57 (18.95) & 65.07 (23.35) & 75.93 (16.93)\\
    \bottomrule
  \end{tabular}
\end{table}

\begin{table}[H]
  \tiny\centering
  \caption{Facility-specific AUPRC performance across different
    models.}
  \label{app:tab:auprc_each_faci}
  \begin{tabular}{l r r r r r r}
    \toprule
    Facility & Target-only & Trans-GLM & Trans-GLM-Q & Trans-GLM-IDW & Trans-GLM-SPH & Trans-GLMC\\
    \midrule
    1 & 8.31 (4.59) & 8.48 (4.56) & 7.77 (3.20) & 8.28 (4.69) & 8.18 (4.68) & 8.32 (4.52)\\
    2 & 2.07 (1.21) & 2.27 (1.23) & 3.31 (3.64) & 3.18 (3.55) & 3.20 (3.54) & 1.75 (0.58)\\
    3 & 3.86 (5.85) & 3.18 (5.90) & 1.92 (0.77) & 1.69 (0.82) & 1.41 (0.67) & 2.00 (1.22)\\
    4 & 3.73 (2.11) & 8.97 (6.03) & 7.54 (5.61) & 7.88 (5.19) & 8.16 (6.07) & 6.73 (4.83)\\
    5 & 3.57 (1.96) & 10.87 (9.25) & 11.27 (8.51) & 11.91 (8.27) & 10.46 (7.12) & 10.31 (6.07)\\
    6 & 0.07 (0.03) & 10.49 (31.48) & 1.46 (3.05) & 0.86 (1.85) & 0.72 (2.04) & 10.12 (31.58)\\
    7 & 0.27 (0.16) & 2.60 (3.02) & 2.79 (3.69) & 2.86 (3.74) & 1.79 (2.07) & 7.72 (14.00)\\
    8 & 9.09 (3.98) & 9.31 (4.18) & 9.47 (4.43) & 9.65 (4.69) & 9.99 (4.47) & 9.87 (4.42)\\
    9 & 4.11 (5.47) & 4.71 (2.26) & 7.40 (5.96) & 6.74 (5.01) & 6.29 (5.15) & 8.95 (6.69)\\
    \addlinespace
    10 & 7.40 (10.65) & 10.84 (12.68) & 10.97 (13.08) & 11.41 (13.59) & 11.36 (16.90) & 11.88 (14.63)\\
    11 & 3.73 (3.05) & 4.00 (2.39) & 3.15 (1.92) & 4.16 (4.14) & 3.97 (3.91) & 3.93 (4.42)\\
    12 & 2.52 (1.81) & 7.25 (8.44) & 6.33 (6.52) & 6.48 (6.42) & 4.91 (4.95) & 4.82 (2.77)\\
    13 & 5.83 (6.08) & 5.57 (5.73) & 6.68 (6.29) & 6.28 (4.96) & 5.07 (4.20) & 6.62 (6.05)\\
    14 & 4.03 (4.00) & 6.50 (4.38) & 7.19 (4.94) & 6.25 (3.82) & 5.88 (3.58) & 6.04 (3.30)\\
    15 & 0.18 (0.21) & 5.83 (15.63) & 6.01 (15.57) & 5.96 (15.61) & 0.85 (1.25) & 6.28 (15.53)\\
    16 & 5.41 (3.49) & 4.25 (1.96) & 5.50 (3.55) & 5.51 (2.88) & 6.39 (3.80) & 5.52 (3.64)\\
    17 & 5.11 (3.63) & 7.99 (5.52) & 13.18 (12.23) & 10.39 (9.53) & 8.77 (9.00) & 8.74 (6.32)\\
    18 & 4.48 (10.43) & 8.07 (13.76) & 2.77 (3.01) & 5.72 (11.08) & 5.78 (10.66) & 2.86 (3.26)\\
    \addlinespace
    19 & 5.44 (5.08) & 6.67 (6.15) & 6.12 (5.29) & 5.41 (5.21) & 6.38 (6.46) & 6.42 (6.76)\\
    20 & 6.17 (5.33) & 8.93 (6.68) & 7.45 (6.28) & 7.24 (4.63) & 7.20 (4.79) & 7.04 (4.24)\\
    21 & 4.43 (6.27) & 4.85 (7.06) & 3.48 (4.91) & 6.60 (9.11) & 5.05 (7.05) & 5.02 (7.21)\\
    22 & 2.41 (2.18) & 6.06 (8.28) & 5.51 (7.83) & 5.92 (7.53) & 5.29 (7.57) & 5.98 (8.54)\\
    23 & 3.81 (3.96) & 7.03 (9.40) & 5.42 (8.04) & 8.78 (11.26) & 7.62 (8.18) & 9.59 (10.74)\\
    24 & 1.67 (1.42) & 3.86 (4.10) & 6.70 (9.07) & 3.43 (3.47) & 4.52 (4.96) & 9.06 (11.33)\\
    25 & 6.34 (3.34) & 7.32 (3.96) & 8.29 (4.27) & 8.89 (3.94) & 9.52 (4.46) & 9.02 (4.95)\\
    26 & 6.79 (2.53) & 7.09 (2.61) & 8.39 (3.77) & 7.46 (2.34) & 7.70 (2.60) & 7.72 (3.02)\\
    27 & 0.29 (0.19) & 6.28 (15.76) & 0.91 (0.83) & 0.64 (0.78) & 1.01 (1.18) & 1.21 (1.19)\\
    \bottomrule
  \end{tabular}
\end{table}

\begin{table}[H]
  \tiny\centering
  \caption{Facility-specific sensitivity at 90\% specificity across
    different models.}
  \label{app:tab:sens90_each_faci}
  \begin{tabular}{l r r r r r r}
    \toprule
    Facility & Target-only & Trans-GLM & Trans-GLM-Q & Trans-GLM-IDW & Trans-GLM-SPH & Trans-GLMC\\
    \midrule
    1 & 47.69 (10.13) & 48.32 (8.43) & 51.00 (7.90) & 51.69 (9.22) & 52.05 (9.12) & 50.70 (10.43)\\
    2 & 46.94 (13.64) & 46.94 (14.86) & 49.58 (19.63) & 49.31 (10.17) & 48.89 (13.22) & 49.44 (12.18)\\
    3 & 28.33 (15.81) & 37.50 (13.18) & 43.33 (19.56) & 45.00 (20.86) & 44.17 (20.05) & 36.67 (18.92)\\
    4 & 51.28 (15.21) & 60.38 (18.98) & 61.22 (17.81) & 61.22 (19.86) & 57.95 (18.24) & 61.22 (20.62)\\
    5 & 36.33 (18.42) & 53.67 (23.65) & 53.67 (19.34) & 54.00 (21.53) & 47.33 (23.14) & 55.33 (22.62)\\
    6 & 10.00 (21.08) & 30.00 (42.16) & 30.00 (42.16) & 30.00 (42.16) & 20.00 (34.96) & 30.00 (42.16)\\
    7 & 10.00 (22.50) & 54.17 (28.67) & 47.50 (24.23) & 50.83 (24.36) & 44.17 (28.34) & 54.17 (28.67)\\
    8 & 65.55 (7.76) & 66.16 (7.70) & 66.47 (8.02) & 66.48 (9.42) & 66.48 (8.83) & 65.56 (8.01)\\
    9 & 42.86 (21.38) & 49.52 (26.44) & 52.86 (23.80) & 51.19 (26.99) & 49.52 (26.44) & 52.62 (26.02)\\
    \addlinespace
    10 & 40.00 (19.16) & 55.00 (21.23) & 52.50 (26.07) & 52.50 (16.22) & 52.50 (26.07) & 56.25 (19.57)\\
    11 & 45.78 (18.59) & 51.00 (20.20) & 52.22 (24.56) & 48.22 (23.20) & 47.89 (19.41) & 53.22 (17.57)\\
    12 & 46.43 (14.94) & 61.19 (16.73) & 58.10 (17.75) & 62.62 (15.55) & 59.76 (17.69) & 57.86 (20.61)\\
    13 & 41.82 (13.30) & 47.65 (19.40) & 47.88 (10.99) & 49.47 (15.12) & 45.23 (16.00) & 49.81 (15.13)\\
    14 & 46.33 (16.89) & 60.56 (13.86) & 56.33 (15.99) & 56.33 (15.28) & 57.44 (16.31) & 58.44 (16.11)\\
    15 & 5.00 (15.81) & 45.00 (49.72) & 55.00 (49.72) & 45.00 (49.72) & 40.00 (51.64) & 45.00 (49.72)\\
    16 & 41.34 (15.05) & 45.98 (13.82) & 44.77 (13.56) & 46.63 (12.60) & 43.30 (12.49) & 46.63 (11.24)\\
    17 & 42.00 (14.94) & 53.00 (16.87) & 55.00 (16.33) & 53.00 (16.87) & 50.50 (19.07) & 59.00 (14.30)\\
    18 & 23.33 (16.10) & 48.33 (16.57) & 51.67 (22.84) & 50.00 (23.57) & 53.33 (23.31) & 60.00 (26.29)\\
    \addlinespace
    19 & 45.00 (15.62) & 52.47 (16.09) & 50.16 (15.05) & 51.76 (16.02) & 52.36 (15.76) & 53.19 (17.64)\\
    20 & 54.62 (14.71) & 53.08 (13.78) & 53.85 (10.26) & 54.62 (13.78) & 52.31 (14.86) & 52.31 (10.13)\\
    21 & 49.52 (16.02) & 49.76 (20.98) & 50.95 (26.47) & 52.62 (23.93) & 52.62 (21.95) & 49.52 (22.72)\\
    22 & 45.67 (23.15) & 47.00 (20.75) & 45.00 (19.45) & 47.00 (20.75) & 46.67 (24.70) & 47.00 (26.41)\\
    23 & 55.83 (15.24) & 58.33 (16.20) & 57.50 (18.61) & 58.33 (20.03) & 55.83 (19.27) & 55.83 (17.59)\\
    24 & 43.00 (21.76) & 50.50 (17.87) & 48.00 (22.88) & 43.00 (21.76) & 45.50 (20.88) & 48.00 (22.88)\\
    25 & 45.36 (11.97) & 44.09 (9.59) & 48.55 (15.21) & 50.45 (15.52) & 49.45 (14.67) & 51.36 (11.35)\\
    26 & 61.06 (5.20) & 61.08 (6.85) & 63.84 (8.12) & 62.14 (8.12) & 62.70 (7.80) & 66.02 (6.26)\\
    27 & 0.00 (0.00) & 35.00 (47.43) & 25.00 (42.49) & 10.00 (31.62) & 20.00 (42.16) & 30.00 (42.16)\\
    \bottomrule
  \end{tabular}
\end{table}

\begin{table}[H]
  \tiny\centering
  \caption{Facility-specific PPV at 90\% specificity
    across different models.}
  \label{app:tab:ppv90_each_faci}
  \begin{tabular}{l r r r r r r}
    \toprule
    Facility & Target-only & Trans-GLM & Trans-GLM-Q & Trans-GLM-IDW & Trans-GLM-SPH & Trans-GLMC\\
    \midrule
    1 & 4.87 (0.97) & 4.93 (0.81) & 5.21 (0.78) & 5.27 (0.90) & 5.30 (0.87) & 5.17 (1.00)\\
    2 & 1.14 (0.35) & 1.14 (0.38) & 1.20 (0.48) & 1.20 (0.26) & 1.20 (0.37) & 1.20 (0.29)\\
    3 & 1.19 (0.66) & 1.57 (0.55) & 1.81 (0.80) & 1.88 (0.85) & 1.84 (0.82) & 1.53 (0.77)\\
    4 & 1.98 (0.57) & 2.33 (0.71) & 2.36 (0.66) & 2.35 (0.74) & 2.23 (0.68) & 2.35 (0.77)\\
    5 & 2.34 (1.21) & 3.42 (1.45) & 3.43 (1.18) & 3.43 (1.29) & 2.99 (1.36) & 3.53 (1.39)\\
    6 & 0.11 (0.24) & 0.23 (0.30) & 0.23 (0.30) & 0.23 (0.30) & 0.17 (0.28) & 0.23 (0.30)\\
    7 & 0.20 (0.45) & 1.13 (0.63) & 1.00 (0.56) & 1.07 (0.56) & 0.93 (0.64) & 1.13 (0.63)\\
    8 & 4.05 (0.46) & 4.09 (0.46) & 4.10 (0.48) & 4.10 (0.56) & 4.10 (0.53) & 4.04 (0.47)\\
    9 & 1.94 (0.94) & 2.23 (1.15) & 2.38 (1.02) & 2.30 (1.17) & 2.23 (1.15) & 2.37 (1.14)\\
    \addlinespace
    10 & 1.81 (0.85) & 2.52 (1.02) & 2.40 (1.24) & 2.40 (0.78) & 2.40 (1.24) & 2.58 (0.96)\\
    11 & 1.42 (0.53) & 1.58 (0.59) & 1.62 (0.73) & 1.49 (0.66) & 1.49 (0.56) & 1.65 (0.48)\\
    12 & 1.67 (0.53) & 2.19 (0.62) & 2.08 (0.67) & 2.24 (0.59) & 2.14 (0.65) & 2.08 (0.80)\\
    13 & 2.45 (0.78) & 2.78 (1.12) & 2.79 (0.64) & 2.88 (0.88) & 2.64 (0.93) & 2.90 (0.88)\\
    14 & 1.36 (0.45) & 1.78 (0.35) & 1.66 (0.45) & 1.66 (0.43) & 1.69 (0.45) & 1.72 (0.44)\\
    15 & 0.10 (0.32) & 0.51 (0.53) & 0.61 (0.52) & 0.51 (0.53) & 0.40 (0.52) & 0.51 (0.53)\\
    16 & 2.09 (0.75) & 2.32 (0.68) & 2.26 (0.69) & 2.35 (0.60) & 2.18 (0.58) & 2.35 (0.54)\\
    17 & 0.98 (0.29) & 1.23 (0.36) & 1.28 (0.36) & 1.23 (0.36) & 1.18 (0.42) & 1.38 (0.34)\\
    18 & 0.84 (0.58) & 1.68 (0.61) & 1.73 (0.58) & 1.68 (0.61) & 1.79 (0.62) & 2.03 (0.79)\\
    \addlinespace
    19 & 1.52 (0.50) & 1.76 (0.51) & 1.69 (0.48) & 1.74 (0.50) & 1.76 (0.51) & 1.79 (0.57)\\
    20 & 2.80 (0.74) & 2.73 (0.69) & 2.77 (0.51) & 2.80 (0.69) & 2.68 (0.74) & 2.69 (0.50)\\
    21 & 1.13 (0.36) & 1.13 (0.46) & 1.16 (0.59) & 1.20 (0.52) & 1.20 (0.47) & 1.13 (0.52)\\
    22 & 1.12 (0.54) & 1.17 (0.54) & 1.12 (0.50) & 1.17 (0.54) & 1.17 (0.66) & 1.17 (0.66)\\
    23 & 1.26 (0.34) & 1.32 (0.38) & 1.32 (0.47) & 1.32 (0.47) & 1.26 (0.44) & 1.28 (0.44)\\
    24 & 0.92 (0.47) & 1.07 (0.37) & 1.02 (0.48) & 0.92 (0.47) & 0.97 (0.44) & 1.02 (0.48)\\
    25 & 4.06 (1.06) & 3.95 (0.81) & 4.33 (1.34) & 4.49 (1.33) & 4.41 (1.28) & 4.57 (0.99)\\
    26 & 2.76 (0.22) & 2.76 (0.29) & 2.88 (0.34) & 2.80 (0.36) & 2.83 (0.34) & 2.98 (0.27)\\
    27 & 0.00 (0.00) & 0.77 (0.99) & 0.58 (0.93) & 0.19 (0.61) & 0.38 (0.81) & 0.77 (0.99)\\
    \bottomrule
  \end{tabular}
\end{table}

\begin{table}[H]
  \tiny\centering
  \caption{Facility-specific sensitivity at 95\% specificity
    across different models.}
  \label{app:tab:sens95_each_faci}
  \begin{tabular}{l r r r r r r}
    \toprule
    Facility & Target-only & Trans-GLM & Trans-GLM-Q & Trans-GLM-IDW & Trans-GLM-SPH & Trans-GLMC\\
    \midrule
    1 & 33.49 (7.96) & 33.51 (7.52) & 32.48 (7.02) & 33.14 (7.58) & 33.46 (8.08) & 32.80 (6.57)\\
    2 & 34.03 (13.64) & 33.75 (13.55) & 36.39 (14.55) & 34.17 (16.40) & 31.53 (18.01) & 37.50 (14.46)\\
    3 & 20.00 (15.32) & 20.00 (18.92) & 26.67 (8.61) & 26.67 (16.10) & 21.67 (13.72) & 20.00 (13.15)\\
    4 & 39.04 (14.95) & 43.01 (17.98) & 47.12 (18.43) & 46.47 (19.57) & 46.47 (19.57) & 46.35 (20.60)\\
    5 & 26.00 (14.56) & 34.67 (17.79) & 40.50 (17.32) & 33.00 (16.96) & 33.00 (16.96) & 38.00 (17.16)\\
    6 & 5.00 (15.81) & 25.00 (42.49) & 25.00 (42.49) & 25.00 (42.49) & 15.00 (33.75) & 25.00 (42.49)\\
    7 & 3.33 (10.54) & 27.50 (27.79) & 34.17 (28.45) & 34.17 (23.72) & 27.50 (27.79) & 28.33 (15.81)\\
    8 & 51.38 (6.98) & 52.28 (7.39) & 55.71 (7.35) & 53.24 (8.04) & 53.24 (7.77) & 53.54 (8.17)\\
    9 & 34.52 (20.05) & 34.52 (22.93) & 37.26 (24.85) & 38.10 (25.15) & 38.10 (27.49) & 34.76 (23.06)\\
    \addlinespace
    10 & 34.17 (25.59) & 42.50 (22.03) & 41.67 (23.24) & 41.67 (28.05) & 41.67 (28.05) & 44.17 (27.51)\\
    11 & 34.22 (16.84) & 35.33 (22.42) & 33.11 (22.52) & 34.33 (23.86) & 33.11 (19.81) & 35.33 (17.78)\\
    12 & 25.71 (8.16) & 46.19 (15.36) & 46.43 (16.39) & 40.48 (15.55) & 43.33 (12.54) & 43.10 (20.28)\\
    13 & 31.74 (14.46) & 33.41 (21.42) & 34.32 (17.60) & 35.15 (19.41) & 33.48 (13.97) & 35.23 (17.35)\\
    14 & 37.22 (14.82) & 47.33 (15.39) & 42.11 (12.20) & 48.22 (17.25) & 44.22 (14.81) & 46.22 (13.19)\\
    15 & 5.00 (15.81) & 35.00 (47.43) & 35.00 (47.43) & 35.00 (47.43) & 20.00 (42.16) & 35.00 (47.43)\\
    16 & 29.41 (8.82) & 32.94 (11.43) & 35.82 (11.27) & 33.56 (8.42) & 33.59 (7.19) & 34.71 (9.16)\\
    17 & 42.00 (14.94) & 49.00 (17.13) & 48.50 (19.01) & 48.50 (19.01) & 48.50 (19.01) & 48.50 (19.01)\\
    18 & 6.67 (14.05) & 30.00 (18.92) & 31.67 (22.84) & 26.67 (26.29) & 26.67 (26.29) & 36.67 (18.92)\\
    \addlinespace
    19 & 32.25 (12.77) & 37.69 (19.94) & 39.84 (14.16) & 39.84 (15.91) & 39.07 (15.09) & 40.60 (18.58)\\
    20 & 36.92 (14.41) & 40.00 (7.07) & 36.15 (12.05) & 39.23 (12.27) & 39.23 (11.15) & 39.23 (12.79)\\
    21 & 32.38 (22.20) & 32.14 (20.18) & 41.67 (21.68) & 41.67 (18.62) & 44.76 (19.47) & 40.24 (20.22)\\
    22 & 26.33 (14.44) & 39.67 (18.69) & 36.33 (21.11) & 39.33 (21.48) & 36.00 (19.23) & 39.67 (23.12)\\
    23 & 42.50 (24.98) & 47.50 (20.05) & 52.50 (20.05) & 50.83 (20.20) & 50.83 (20.20) & 47.50 (20.05)\\
    24 & 40.50 (22.29) & 40.50 (18.92) & 40.50 (25.22) & 35.50 (22.42) & 43.00 (21.76) & 38.00 (19.18)\\
    25 & 30.36 (10.84) & 33.00 (11.39) & 34.73 (12.46) & 35.73 (13.70) & 34.82 (13.28) & 35.73 (11.52)\\
    26 & 49.46 (8.58) & 50.02 (7.06) & 51.95 (8.36) & 50.02 (7.18) & 51.13 (8.51) & 51.12 (6.40)\\
    27 & 0.00 (0.00) & 35.00 (47.43) & 15.00 (33.75) & 10.00 (31.62) & 20.00 (42.16) & 20.00 (42.16)\\
    \bottomrule
  \end{tabular}
\end{table}

\begin{table}[H]
  \tiny\centering
  \caption{Facility-specific PPV at 95\% specificity
    across different models.}
  \label{app:tab:ppv95_each_faci}
  \begin{tabular}{l r r r r r r}
    \toprule
    Facility & Target-only & Trans-GLM & Trans-GLM-Q & Trans-GLM-IDW & Trans-GLM-SPH & Trans-GLMC\\
    \midrule
    1 & 6.73 (1.45) & 6.73 (1.34) & 6.53 (1.28) & 6.65 (1.41) & 6.71 (1.51) & 6.59 (1.19)\\
    2 & 1.64 (0.67) & 1.65 (0.72) & 1.75 (0.72) & 1.65 (0.81) & 1.53 (0.92) & 1.81 (0.74)\\
    3 & 1.66 (1.25) & 1.65 (1.55) & 2.21 (0.69) & 2.21 (1.31) & 1.80 (1.12) & 1.66 (1.08)\\
    4 & 2.96 (1.09) & 3.28 (1.32) & 3.57 (1.35) & 3.51 (1.41) & 3.51 (1.41) & 3.52 (1.50)\\
    5 & 3.34 (1.85) & 4.40 (2.27) & 5.13 (2.23) & 4.19 (2.15) & 4.19 (2.15) & 4.82 (2.17)\\
    6 & 0.11 (0.36) & 0.34 (0.56) & 0.34 (0.56) & 0.34 (0.56) & 0.23 (0.48) & 0.34 (0.56)\\
    7 & 0.13 (0.42) & 1.18 (1.29) & 1.44 (1.29) & 1.45 (1.13) & 1.18 (1.29) & 1.20 (0.75)\\
    8 & 6.19 (0.80) & 6.31 (0.86) & 6.70 (0.84) & 6.42 (0.91) & 6.41 (0.88) & 6.46 (0.93)\\
    9 & 3.06 (1.71) & 3.05 (1.96) & 3.26 (2.11) & 3.33 (2.14) & 3.33 (2.34) & 3.05 (1.97)\\
    \addlinespace
    10 & 3.10 (2.26) & 3.81 (1.96) & 3.80 (2.23) & 3.79 (2.50) & 3.79 (2.50) & 4.02 (2.47)\\
    11 & 2.09 (0.94) & 2.15 (1.28) & 2.02 (1.30) & 2.09 (1.38) & 2.03 (1.15) & 2.16 (1.02)\\
    12 & 1.82 (0.50) & 3.27 (1.13) & 3.27 (1.13) & 2.86 (1.10) & 3.07 (0.90) & 3.04 (1.46)\\
    13 & 3.67 (1.64) & 3.82 (2.38) & 3.92 (1.96) & 4.01 (2.15) & 3.84 (1.54) & 4.02 (1.92)\\
    14 & 2.15 (0.74) & 2.75 (0.79) & 2.46 (0.65) & 2.80 (0.95) & 2.57 (0.78) & 2.69 (0.69)\\
    15 & 0.20 (0.63) & 0.80 (1.03) & 0.80 (1.03) & 0.80 (1.03) & 0.40 (0.84) & 0.80 (1.03)\\
    16 & 2.96 (0.90) & 3.29 (1.09) & 3.57 (1.07) & 3.35 (0.80) & 3.35 (0.66) & 3.46 (0.88)\\
    17 & 1.92 (0.57) & 2.22 (0.63) & 2.23 (0.79) & 2.23 (0.79) & 2.23 (0.79) & 2.23 (0.79)\\
    18 & 0.48 (1.00) & 2.05 (1.34) & 2.24 (1.59) & 1.88 (1.84) & 1.88 (1.84) & 2.52 (1.35)\\
    \addlinespace
    19 & 2.16 (0.81) & 2.49 (1.23) & 2.64 (0.85) & 2.64 (0.96) & 2.60 (0.94) & 2.69 (1.14)\\
    20 & 3.74 (1.41) & 4.05 (0.69) & 3.65 (1.17) & 3.97 (1.19) & 3.97 (1.07) & 3.97 (1.25)\\
    21 & 1.47 (1.00) & 1.47 (0.95) & 1.89 (0.97) & 1.89 (0.86) & 2.03 (0.88) & 1.82 (0.93)\\
    22 & 1.30 (0.77) & 1.93 (0.90) & 1.75 (1.01) & 1.94 (1.09) & 1.76 (0.90) & 1.93 (1.18)\\
    23 & 1.90 (1.13) & 2.14 (0.91) & 2.37 (0.94) & 2.25 (0.85) & 2.25 (0.85) & 2.14 (0.91)\\
    24 & 1.73 (0.95) & 1.73 (0.82) & 1.72 (1.06) & 1.52 (0.98) & 1.83 (0.92) & 1.63 (0.84)\\
    25 & 5.34 (1.82) & 5.79 (1.93) & 6.10 (2.14) & 6.23 (2.32) & 6.08 (2.24) & 6.23 (1.98)\\
    26 & 4.38 (0.72) & 4.43 (0.59) & 4.60 (0.69) & 4.43 (0.60) & 4.52 (0.70) & 4.53 (0.53)\\
    27 & 0.00 (0.00) & 1.48 (1.91) & 0.74 (1.56) & 0.37 (1.17) & 0.74 (1.56) & 0.74 (1.56)\\
    \bottomrule
  \end{tabular}
\end{table}

\begin{figure}[H]
  \centering
  \begin{subfigure}[t]{\linewidth}
    \centering
    \includegraphics[width=\linewidth]{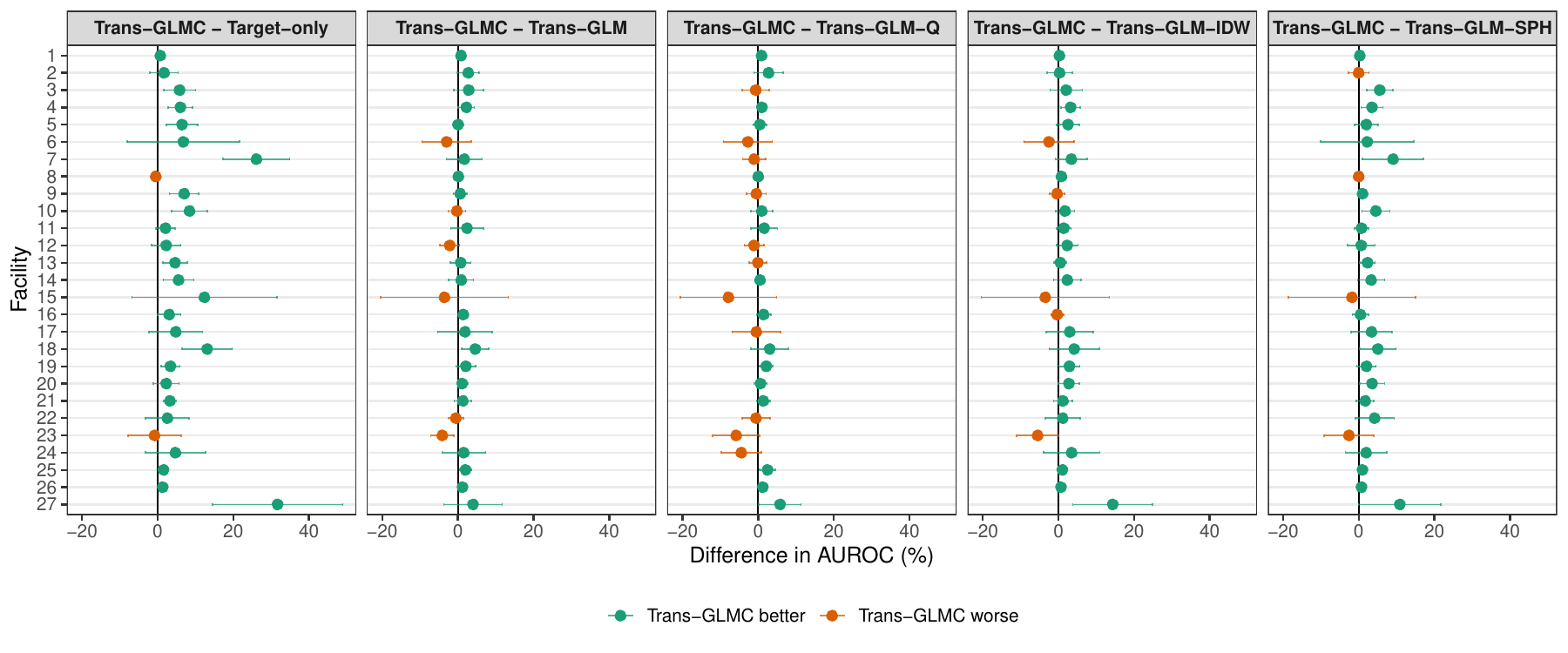}
    \caption{AUROC}
  \end{subfigure}\\[1ex]
  \begin{subfigure}[t]{\linewidth}
    \centering
    \includegraphics[width=\linewidth]{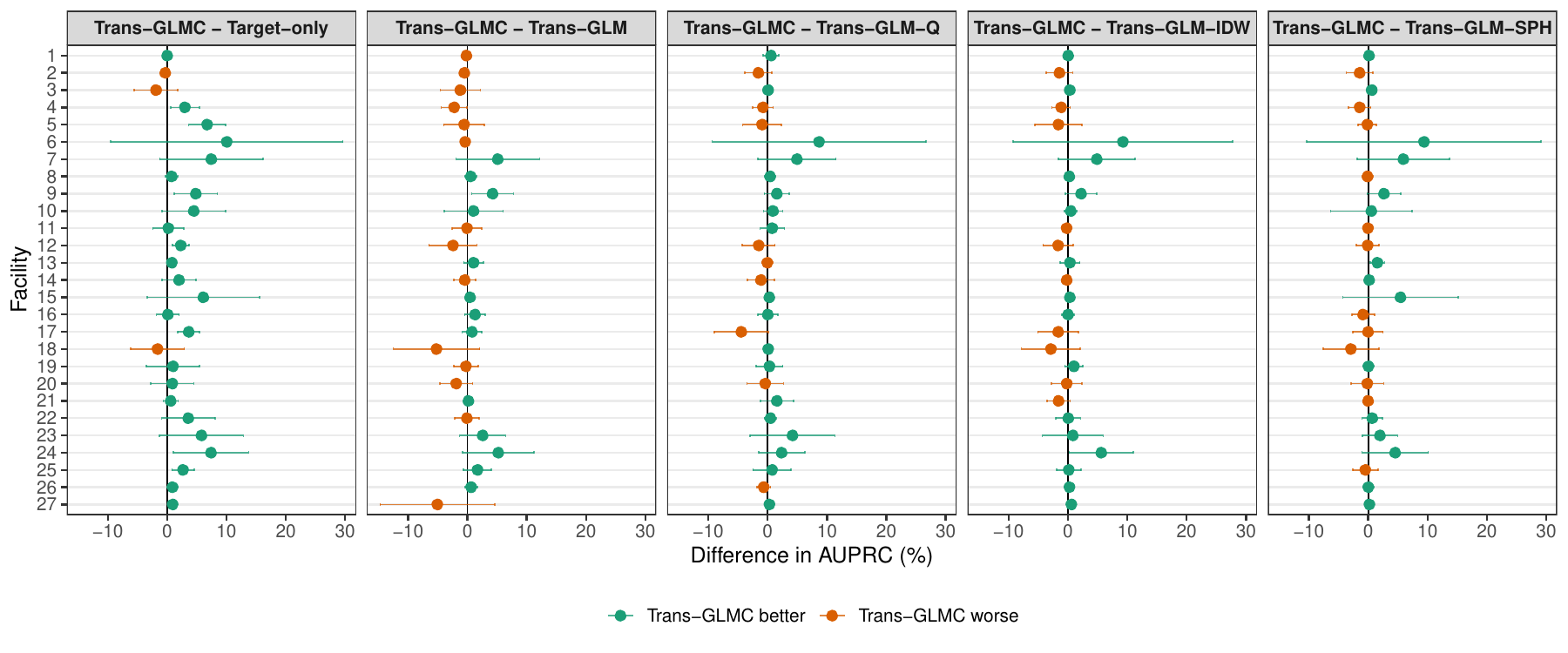}
    \caption{AUPRC}
  \end{subfigure}
  \caption{Application: Facility-specific AUROC and AUPRC differences
    between Trans-GLMC and each comparison model (Target-only,
    Trans-GLM, Trans-GLM-Q, Trans-GLM-IDW, Trans-GLM-SPH) across CHIME
    facilities.  Points show mean differences across 10 replicates;
    error bars are normal-approximation 95\% confidence intervals.}
  \label{app:fig:allmethods_auroc_auprc}
\end{figure}

\begin{figure}[H]
  \centering
  \begin{subfigure}[t]{\linewidth}
    \centering
    \includegraphics[width=\linewidth]{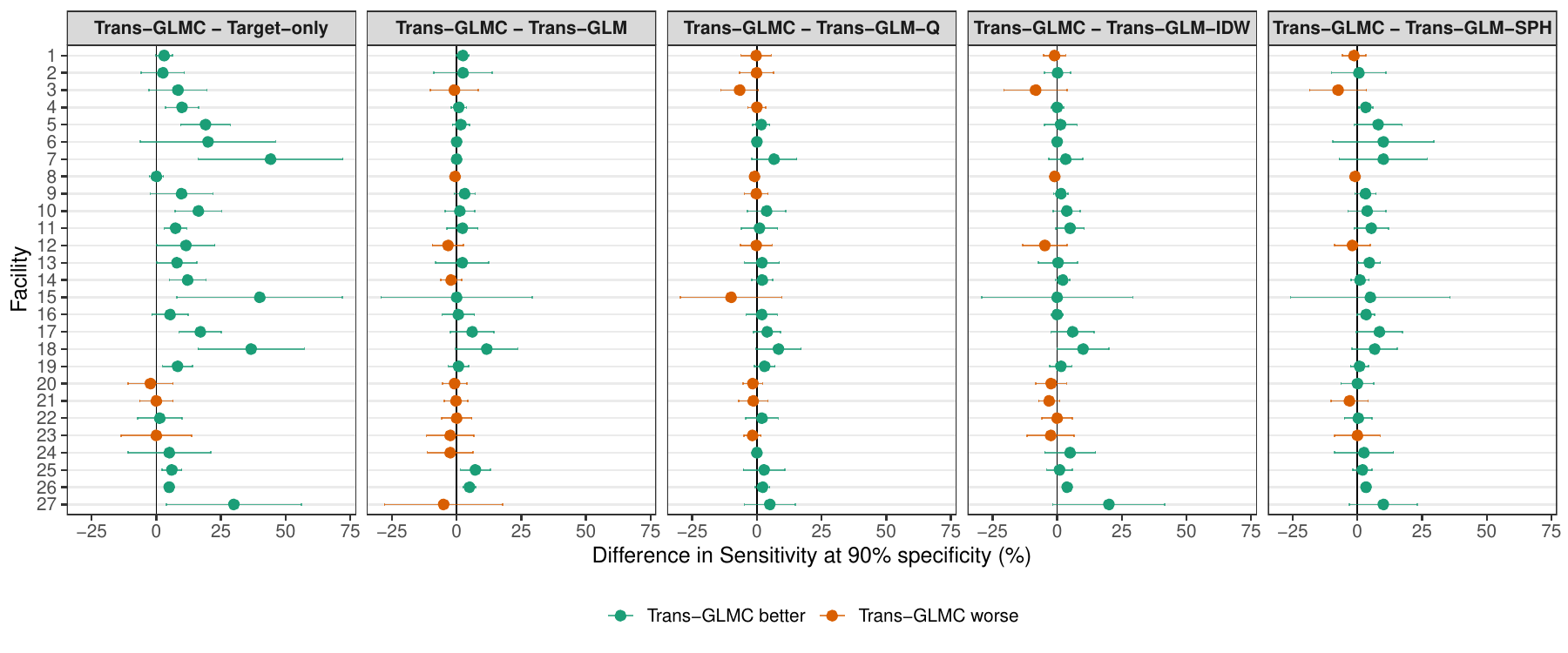}
    \caption{Sensitivity at 90\% specificity}
  \end{subfigure}\\[1ex]
  \begin{subfigure}[t]{\linewidth}
    \centering
    \includegraphics[width=\linewidth]{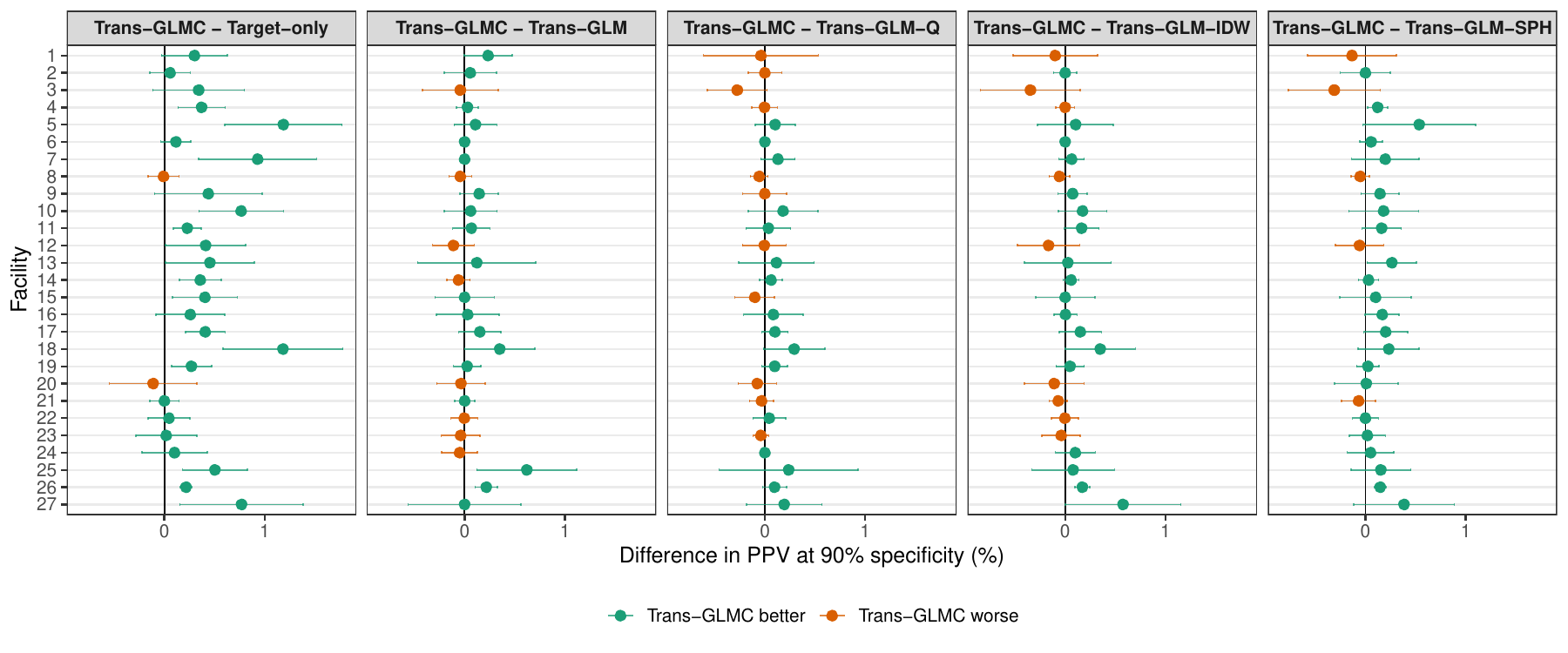}
    \caption{PPV at 90\% specificity}
  \end{subfigure}
  \caption{Application: Facility-specific differences in sensitivity
    and PPV at 90\% specificity between Trans-GLMC and each comparison
    model (Target-only, Trans-GLM, Trans-GLM-Q, Trans-GLM-IDW,
    Trans-GLM-SPH) across CHIME facilities.  Points show mean
    differences across 10 replicates; error bars are normal-approximation 95\%
    confidence intervals.}
  \label{app:fig:allmethods_90spec}
\end{figure}

\begin{figure}[H]
  \centering
  \begin{subfigure}[t]{\linewidth}
    \centering
    \includegraphics[width=\linewidth]{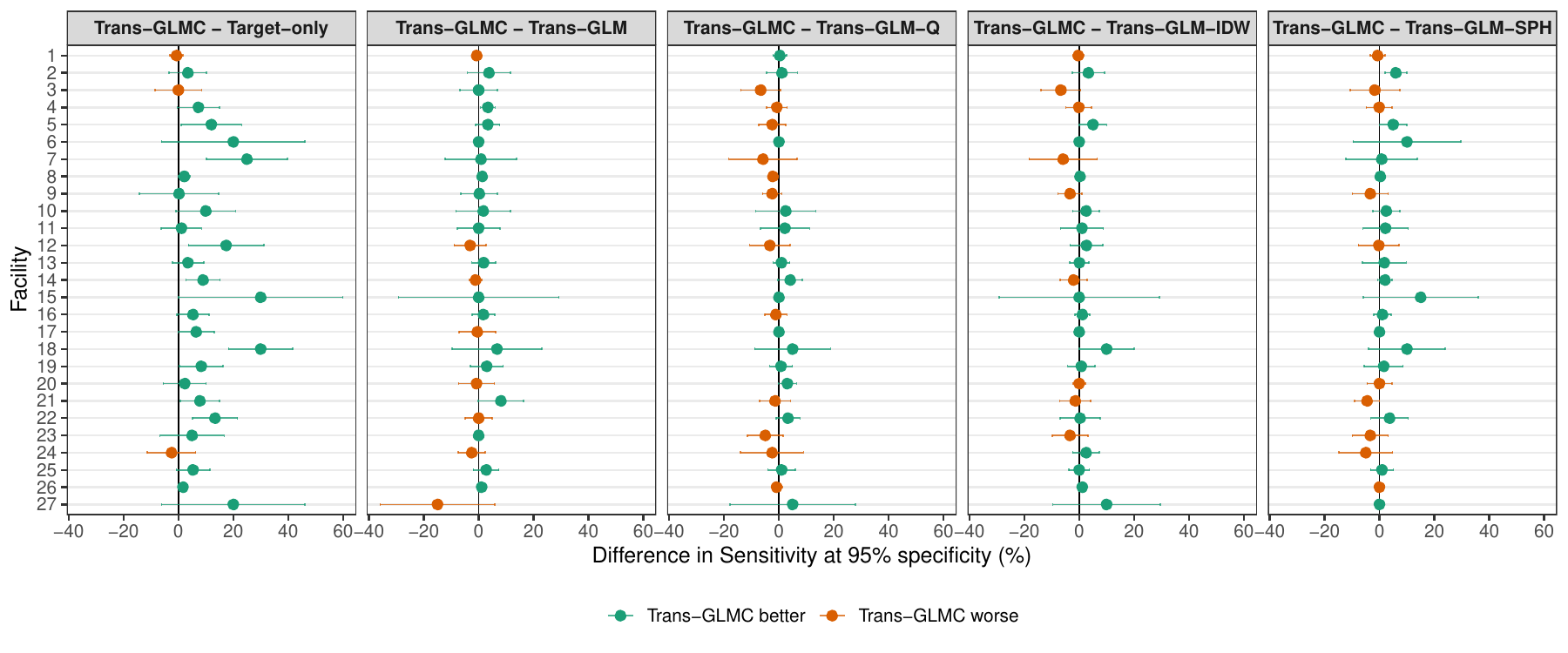}
    \caption{Sensitivity at 95\% specificity}
  \end{subfigure}\\[1ex]
  \begin{subfigure}[t]{\linewidth}
    \centering
    \includegraphics[width=\linewidth]{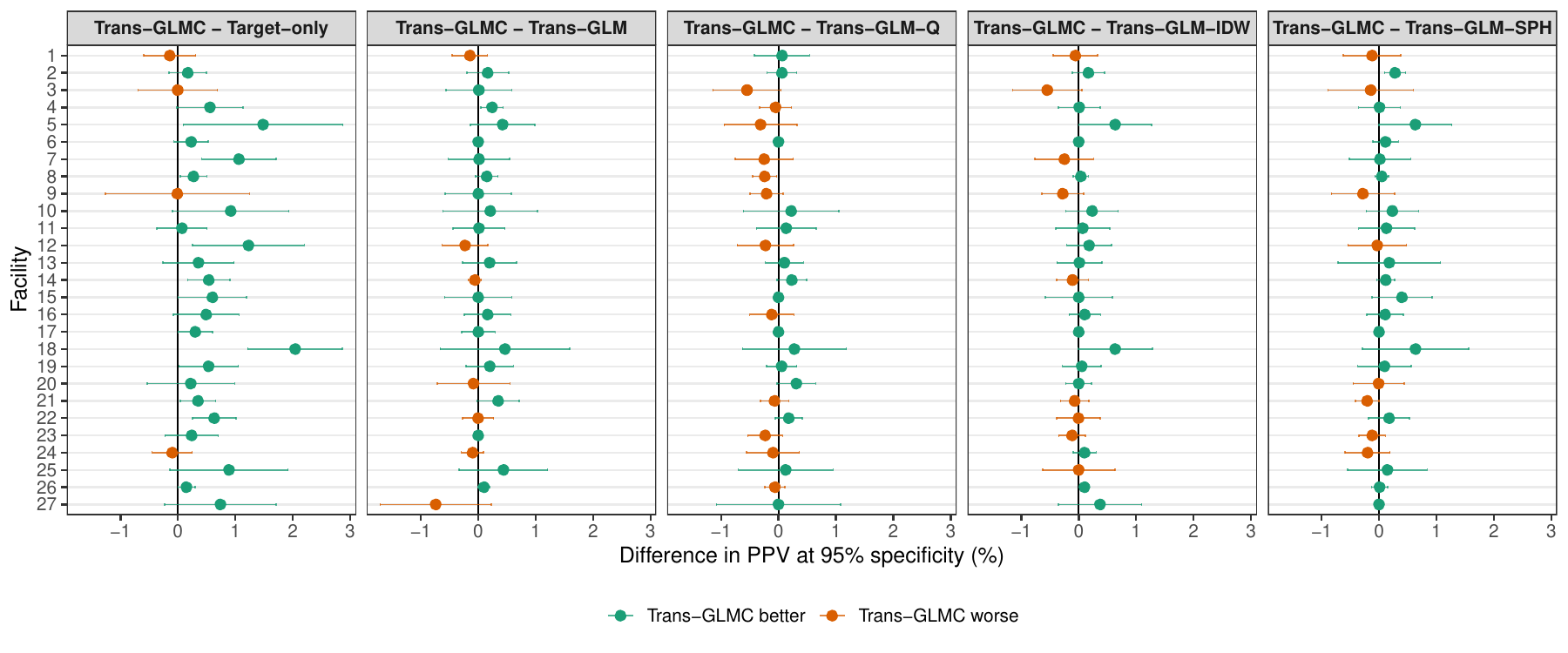}
    \caption{PPV at 95\% specificity}
  \end{subfigure}
  \caption{Application: Facility-specific differences in sensitivity
    and PPV at 95\% specificity between Trans-GLMC and each comparison
    model (Target-only, Trans-GLM, Trans-GLM-Q, Trans-GLM-IDW,
    Trans-GLM-SPH) across CHIME facilities.  Points show mean
    differences across 10 replicates; error bars are normal-approximation 95\%
    confidence intervals.}
  \label{app:fig:allmethods_95spec}
\end{figure}

\end{document}